\documentclass{article}
\usepackage{fullpage}
\usepackage[title]{appendix}
\usepackage[utf8]{inputenc} 
\usepackage[T1]{fontenc}    
\usepackage{hyperref}       
\usepackage{url}            
\usepackage{geometry}
\usepackage[table]{xcolor}

\usepackage{algorithm}
\usepackage{algorithmic}
\usepackage{diagbox}
\usepackage{multirow}
\usepackage{booktabs}
\usepackage{amsfonts,amssymb,amsthm,amsmath,mathtools}
\usepackage{nicefrac}
\usepackage{microtype}
\usepackage{hhline}
\usepackage{subfigure}
\usepackage{indentfirst}
\usepackage{enumitem}
\usepackage{graphicx}
\usepackage{cleveref}
\usepackage{colortbl}
\usepackage{makecell}
\usepackage{wrapfig}
\usepackage{adjustbox}
\usepackage{array}
\usepackage{caption}
\usepackage{subcaption}
\usepackage{arydshln}
\usepackage{utfsym}

\newtheorem{theorem}{Theorem}
\newtheorem{lemma}{Lemma}
\newtheorem{corollary}{Corollary}
\newtheorem{proposition}{Proposition}

\usepackage{amsmath,amsfonts,bm}

\def\eqref#1{equation~\ref{#1}}

\def\1{\bm{1}}
\def\0{\bm{0}}

\DeclareMathAlphabet{\mathsfit}{\encodingdefault}{\sfdefault}{m}{sl}
\SetMathAlphabet{\mathsfit}{bold}{\encodingdefault}{\sfdefault}{bx}{n}

\newcommand{\citet}{\cite}
\newcommand{\citep}{\cite}

\title{\textbf{Do Large Language Models Capture the Diversity\\ in their Training Data?}}

\date{}

\author{
Youqi Wu\thanks{Department of Computer Science and Engineering, The Chinese University of Hong Kong, yqwu24@cse.cuhk.edu.hk} ,
Farzan Farnia\thanks{Department of Computer Science and Engineering, The Chinese University of Hong Kong, farnia@cse.cuhk.edu.hk}
}

\begin{document}
\maketitle

\begin{abstract}
    Large language models are trained to model conditional distributions over text, yet it remains inadequately understood whether they capture the full diversity of plausible outputs present in their training data. We study this question through an information-theoretic lens by comparing the \emph{conditional entropy} of model-generated outputs with that of the corresponding training data. Given paired input-output samples, we use conditional entropy and its matrix-based analogue based on \emph{von Neumann entropy} to measure output variability beyond what is explained by the conditioning input, without requiring multiple reference outputs for the same prompt. Across LLM families with publicly available training data, including OLMo, Pythia, and GPT-Neo, we consistently find that model-generated outputs exhibit lower conditional entropy than their training data, across different model scales, sequence lengths, and decoding strategies. We observe a similar conditional diversity gap beyond language modeling, including class-conditioned ImageNet generators and text-conditioned models trained on MS-COCO. To address this gap, we propose a post-hoc correction mechanism that generates multiple outputs for each input and reweights them through a matrix-entropy projection, increasing conditional diversity while remaining close to the original model distribution. We prove the concavity of the matrix-based conditional entropy functional, which makes the resulting entropy-constrained projection a convex optimization problem, and develop a scalable mirror-descent algorithm for its implementation. Our results reveal a systematic conditional diversity gap between modern generative models and their training data, and provide an information-theoretic framework for measuring and mitigating this gap.
\end{abstract}

\section{Introduction}
\label{sec:intro}
Large language models are increasingly used as conditional distribution models: given a prompt, instruction, context, or document, they are expected to produce not only a single plausible completion, but samples from a rich set of valid outputs. The same viewpoint underlies many other conditional generative systems, including text-to-image models, image captioning systems, and class-conditioned image generators~\citep{radford2018improving,brown2020language,rombach2022high,li2023blip2}. In these settings, the conditioning variable rarely determines a unique output. A prompt may admit multiple correct phrasings, reasoning paths, levels of detail, or stylistic choices; an image may admit several faithful captions; and a class label may correspond to a broad range of visual instances. A central question is therefore whether modern conditional generative models capture the full range of outputs supported by the data distribution.

Most existing evaluations do not directly answer this question. Standard language-modeling metrics such as likelihood or perplexity measure average predictive fit, while many generation benchmarks focus on accuracy, alignment, and semantic consistency. For image and multimodal generation, metrics such as CLIPScore~\citep{hessel2021clipscore}, MAUVE~\citep{pillutla2021mauve}, and conditional Fr\'echet-type distances~\citep{koo2025cfred} provide valuable tools for comparing generated and reference samples. However, a model could still perform well under such metrics while still concentrating its probability mass on a narrower subset of valid outputs. This is important for LLMs and prompt-guided generation, where the goal is not merely to produce one acceptable answer, but to represent the conditional variability present in natural text data.

A direct measurement of \emph{conditional} output variability is statistically challenging. In many datasets, we observe paired samples $(X,Y)$, where $X$ is the conditioned variable and $Y$ is the corresponding output, but we do not observe many independent human outputs for the exact same condition. Thus, estimating conditional entropy separately for each fixed prompt is usually infeasible. To address this, we study conditional entropy through paired input-output samples. At the classical level, Shannon conditional entropy measures the uncertainty in $Y$ that remains after observing $X$. At the kernel level, we use a matrix-based analog built from a product kernel on pairs $(X,Y)$ and subtract the entropy of the conditioning variable. This gives a way to measure output variability beyond the variability already explained by the inputs, without requiring repeated reference outputs for every condition.

Using this viewpoint, we find a consistent conditional output-range gap in modern generative models. Across LLM families with publicly accessible training data, including OLMo~\cite{groeneveld-etal-2024-olmo}, Pythia~\cite{pmlr-v202-biderman23a}, and GPT-Neo~\cite{black_gpt_neo_2021}, we empirically evaluate that model-generated continuations exhibit lower conditional entropy than the corresponding training data. We observe the same phenomenon beyond text, including class-conditioned ImageNet generation and text-conditioned image models trained on MS-COCO. These results suggest that the issue is not specific to one architecture or training approach. Rather, conditional generative models appear to represent a narrowed version of the conditional distribution present in their data, as their conditional entropy underestimates that of the training data.

We formalize this phenomenon using conditional entropy and matrix-based conditional entropy, which extends the recent diversity bias analysis in \cite{farnia2026exposing} from unconditional generative models to conditional generative AI and LLM models. The resulting score compares the entropy of the joint input-output distribution with the entropy of the input marginal, thereby isolating the part of the variability attributable to the conditional output law. This formulation will be natural for LLMs and general conditional generation systems, because it evaluates paired samples directly, rather than requiring multiple ground-truth responses for each prompt. It also connects the empirical measurement problem to matrix-based quantum entropy functionals.

On the technical side, we prove that the proposed matrix-based conditional entropy is a concave functional of the joint input-output distribution. This structural property has two consequences. It yields finite-sample monotonicity and underestimation properties analogous to the classical behavior of the classical Shannon entropy. More importantly, the concavity of the matrix-based quantum entropy implies that projecting a generated conditional distribution toward a higher conditional-entropy region can be formulated as a convex optimization problem.

Following the analysis, we propose a post-hoc sample-space reweighting method for LLMs and conditional generators. Given prompts $x_1,\ldots,x_N$, we generate and consider multiple outputs $y_{i,1},\ldots,y_{i,m}$ for each prompt $x_i$ and jointly optimize weights over these generated candidates so as to improve the overall conditional variability. The key constraint is that the total weight assigned to each prompt remains fixed, so the input marginal is preserved and only the conditional distribution over generated outputs is adjusted. The projection then finds the closest reweighted empirical distribution whose matrix-based conditional entropy exceeds a target level. In this way, the method does not retrain the model or synthesize new samples; instead, it asks whether the model has already generated valid but underweighted outputs, and redistributes probability mass toward a broader conditional empirical distribution.

For scalability, we develop an efficient mirror-descent implementation over a product of simplices, one simplex for the candidates generated from each condition. This extends exponentiated-gradient reweighting from the unconditional setting to conditional generation while exactly preserving the prompt marginal. Since the joint input-output feature space is a tensor-product space and can be prohibitively high-dimensional, we implement the method using sketched joint features. For finite-dimensional embeddings, we use random projections of Kronecker-product features; for shift-invariant kernels, we use direct joint random Fourier features for the product kernel. These approximations avoid explicitly forming the full tensor-product representation and reduce the cost of each optimization step to depend on the sketch dimension rather than the raw product dimension. The contributions of this work can be summarized as:

\begin{itemize}[leftmargin=*]
\item We formulate the problem of whether LLMs and conditional generative models capture the full range of valid training outputs using conditional entropy and its matrix-based counterpart.
\item We empirically demonstrate a consistent conditional output-range gap across LLMs with publicly available training data, class-conditioned ImageNet and MS-COCO generators.
\item We derive concavity and finite-sample properties of the applied matrix-based conditional entropy functional, and show that entropy-constrained reweighting can be formulated as a convex program.
\item We develop a scalable post-hoc reweighting algorithm based on product-simplex mirror descent and sketched joint features for adjusting conditional output distributions.
\end{itemize}


\section{Preliminaries}
\label{sec:preliminaries}
We consider a conditional generation problem with conditioning variable $X\in\mathcal X$ and output variable $Y\in\mathcal Y$. The data distribution is denoted by $P_{XY}$, with marginal $P_X$ and conditional law $P_{Y\,\vert\, X}$. A conditional generative model specifies a conditional distribution $Q_{Y\,\vert\, X}$, which together with the same input marginal induces the joint model distribution $Q_{XY}=P_XQ_{Y\,\vert\, X}$.

Let $k_X:\mathcal X\times\mathcal X\to\mathbb R$ and $k_Y:\mathcal Y\times\mathcal Y\to\mathbb R$ be positive semidefinite kernels. Throughout the paper, we use normalized kernels which satisfy $k_X(x,x)=1$ and $k_Y(y,y)=1$ for every $x \in \mathcal X$ and $y\in\mathcal Y$.\footnote{Any kernel function $k$ can be normalized by replacing $k(z,z')$ with $\widetilde k(z,z') = k(z,z')/\sqrt{k(z,z)k(z',z')}$.} For paired samples, we use the product kernel
\[
k_{XY}\big([x,y],[x',y']\big)=k_X(x,x')\cdot k_Y(y,y').
\]
Given samples $(x_1,y_1),\ldots,(x_n,y_n)$, let $K_X$ and $K_Y$ be the corresponding Gram matrices. The product kernel has Gram matrix where $\odot$ denotes the Hadamard product:
$K_{XY}=K_X\odot K_Y$

For a positive~semidefinite matrix $A$ with unit-trace $\operatorname{Tr}(A)=1$, its von~Neumann~entropy~is~defined~as
$$ H_{\mathrm{vN}}(A)=-\operatorname{Tr}\bigl(A\log A\bigr)$$
Following the matrix-based entropy framework of \cite{sanchezgiraldo2015measures,jalali2024conditional}, we measure conditional output uncertainty by the difference between the joint input--output entropy and the input entropy:
\[
H_{\mathrm{vN}}\bigl(Y\,\vert\, X;\widehat P_n\bigr)
:=
H_{\mathrm{vN}}\bigl(\frac1n K_{XY}\bigr)
-
H_{\mathrm{vN}}\bigl(\frac1n K_X\bigr).
\]
We note that the exponential of the above function is the same as the \emph{Conditional Vendi} score, which is introduced and analyzed by Jalali et al.~\cite{jalali2024conditional}. This quantity is the kernel analogue of conditional entropy: it measures the uncertainty contributed by the output after accounting for the entropy already present in the conditioning variable. 

We also use the order-$2$ matrix entropy
$H_2(A)=-\log \operatorname{Tr}(A^2)$,
which gives the empirical conditional order-$2$ entropy as follows where $\Vert \cdot \Vert_F$ denotes the Frobenius norm:
\[
H_2\bigl(Y\,\vert\, X;\widehat P_n\bigr)
:=
H_2\bigl(\frac1n K_{XY}\bigr)
-
H_2\bigl(\frac1n K_X\bigr)
=
\log\frac{\|K_X\|_F^2}{\|K_X\odot K_Y\|_F^2}.
\]
This order-$2$ version is useful as a computationally simpler counterpart to von Neumann entropy, as formulated and framed as the \emph{Conditional RKE} score in \citep{jalali2024conditional}. Population-level operator definitions and additional details are deferred to Appendix~\ref{app:preliminaries_extended}.


\section{Conditional output-range gaps in open language models}
\label{sec:llm_empirical_gap}

We first evaluate whether open language models match the conditional output range of their training data. We focus on OLMo, Pythia, and GPT-Neo, for which the training corpora are publicly available or reconstructable. Each example is represented as a paired sample $(X,Y)$, where $X$ is a prefix with $l_p \in \{3,4,5\}$ tokens and $Y$ is the corresponding continuation with $l_c \in \{2,3,4\}$ tokens. For the training baseline, $Y$ is the continuation observed in the corpus; for model samples, $Y$ is generated from the same prefix using greedy decoding, nucleus sampling, or ancestral sampling. We then compare the paired training and model distributions using conditional von Neumann entropy (VNE), conditional order-$2$ matrix entropy, and lexical Distinct-$2$ and Distinct-$3$ scores. 

In all numerical experiments and tables, we report the \textbf{exponentiated forms} $\exp(H_{\mathrm{vN}})$ and $\exp(H_2)$, so that the reported values can be interpreted as effective numbers of conditionally distinguishable outputs under the corresponding entropy measures. 
Table~\ref{tab:diversity_metrics_gap_20k} reports the results. The gap is defined as
$\Delta
=
\mathrm{Metric}_{\mathrm{train}}
-
\mathrm{Metric}_{\mathrm{model}}$;
therefore, positive values indicate that model-generated continuations have a smaller measured conditional output range than the training data. The pattern is consistent across all three model families and all decoding methods: the reported gap is positive. 

The lexical Distinct-$n$ scores provide a complementary surface-level check. They also show positive gaps in every setting, with the largest deficits again occurring under greedy decoding. However, Distinct-$n$ does not explicitly account for the conditioning prefix and only measures n-gram variation in the generated continuations. We therefore use it as supporting evidence, while the main comparison is based on conditional entropy over paired prefix--continuation samples. The confidence intervals, computed over five independent evaluations, are small relative to the observed gaps, indicating that the effect is stable rather than a fluctuation of a single sample draw.

These experiments motivate two questions. First, what structural property makes conditional von Neumann entropy a principled analogue of Shannon conditional entropy? Second, if a model has already generated multiple candidate outputs for each input, can we reweight those candidates to obtain a higher-entropy conditional empirical distribution while remaining close to the original model samples? The next section answers the first question through a concavity result. The following section uses this result to formulate post-hoc conditional reweighting as a convex projection problem.

\begin{table}
\centering
\caption{
Diversity metrics across models and generation strategies with sample size 20K.
For each model, we report conditional VNE, conditional RKE, and Distinct-2 as mean $\pm$ standard deviation.}
\label{tab:diversity_metrics_gap_20k}
\resizebox{\textwidth}{!}{
\begin{tabular}{llcccccc}
\toprule
\textbf{Model} & \textbf{Group}
& \textbf{Exp.Cond.VNE} & $\boldsymbol{\Delta}$\textbf{VNE}
& \textbf{Exp.Cond.RKE} & $\boldsymbol{\Delta}$\textbf{RKE}
& \textbf{Distinct-2} & $\boldsymbol{\Delta}$\textbf{D-2} \\
\midrule

\multirow{4}{*}{\rotatebox[origin=c]{90}{OLMo}}
& Training Data
& $338.58 {\scriptsize \pm 0.69}$ & --
& $67.43 {\scriptsize \pm 0.63}$ & --
& $0.709 {\scriptsize \pm 0.004}$ & -- \\
& Greedy Decoding
& $218.88 {\scriptsize \pm 0.34}$ & 119.70
& $46.17 {\scriptsize \pm 0.19}$ & 21.26
& $0.276 {\scriptsize \pm 0.001}$ & 0.433 \\
& Nucleus Sampling
& $287.31 {\scriptsize \pm 0.98}$ & 51.27
& $55.08 {\scriptsize \pm 0.27}$ & 12.35
& $0.516 {\scriptsize \pm 0.003}$ & 0.193 \\
& Ancestral Sampling
& $297.31 {\scriptsize \pm 0.78}$ & 41.27
& $56.81 {\scriptsize \pm 0.22}$ & 10.62
& $0.558 {\scriptsize \pm 0.003}$ & 0.151 \\

\midrule

\multirow{4}{*}{\rotatebox[origin=c]{90}{Pythia}}
& Training Data
& $262.55 {\scriptsize \pm 0.47}$ & --
& $53.44 {\scriptsize \pm 0.38}$ & --
& $0.701 {\scriptsize \pm 0.006}$ & -- \\
& Greedy Decoding
& $176.39 {\scriptsize \pm 0.27}$ & 86.16
& $40.98 {\scriptsize \pm 0.64}$ & 12.46
& $0.266 {\scriptsize \pm 0.001}$ & 0.435 \\
& Nucleus Sampling
& $232.24 {\scriptsize \pm 0.78}$ & 30.31
& $47.51 {\scriptsize \pm 0.27}$ & 5.93
& $0.525 {\scriptsize \pm 0.005}$ & 0.176 \\
& Ancestral Sampling
& $241.79 {\scriptsize \pm 0.23}$ & 20.76
& $49.00 {\scriptsize \pm 0.42}$ & 4.44
& $0.564 {\scriptsize \pm 0.006}$ & 0.137 \\

\midrule

\multirow{4}{*}{\rotatebox[origin=c]{90}{GPT-Neo}}
& Training Data
& $260.01 {\scriptsize \pm 0.52}$ & --
& $52.98 {\scriptsize \pm 0.23}$ & --
& $0.701 {\scriptsize \pm 0.008}$ & -- \\
& Greedy Decoding
& $179.36 {\scriptsize \pm 0.88}$ & 80.65
& $43.34 {\scriptsize \pm 0.93}$ & 9.64
& $0.284 {\scriptsize \pm 0.003}$ & 0.417 \\
& Nucleus Sampling
& $229.62 {\scriptsize \pm 0.42}$ & 30.39
& $48.48 {\scriptsize \pm 0.67}$ & 4.50
& $0.526 {\scriptsize \pm 0.002}$ & 0.175 \\
& Ancestral Sampling
& $238.50 {\scriptsize \pm 0.61}$ & 21.51
& $49.59 {\scriptsize \pm 0.32}$ & 3.39
& $0.570 {\scriptsize \pm 0.002}$ & 0.131 \\

\bottomrule
\end{tabular}
}
\end{table}

\section{Theoretical Features of Kernel-induced Conditional von Neumann Entropy}
\label{sec:conditional_vne_concavity}

We next establish the structural property needed for our proposed optimization of conditional von Neumann entropy. For Shannon entropy, we note that conditional entropy is well-known to be concave in the joint distribution $P_{XY}$. In this section, we prove that the kernel matrix-based (quantum) conditional entropy defined in \citep{sanchezgiraldo2015measures} used in our formulation is also a concave functional of the joint distribution. This property gives a convex-analytic justification for the projection method developed in Section~\ref{sec:conditional_reweighting}. We begin by reviewing the corresponding known fact for the classical Shannon entropy.

\begin{proposition}[Concavity of classical conditional Shannon entropy]
\label{prop:shannon_cond_concave_main}
For finite alphabets $\mathcal X$ and $\mathcal Y$, the Shannon conditional entropy $P_{XY}\longmapsto H(Y\vert X)$ 
is concave in the joint distribution $P_{XY}$.
\end{proposition}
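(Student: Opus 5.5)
The plan is to write $H(Y\vert X)=H(X,Y)-H(X)$ and prove concavity through a perspective-function argument, which avoids having to handle a difference of two concave functions directly. Writing $p(x,y)$ for the joint probability mass and $p(x)=\sum_{y}p(x,y)$ for the marginal, we have
\[
H(Y\vert X)=\sum_{x\in\mathcal X}\sum_{y\in\mathcal Y} p(x,y)\log\frac{p(x)}{p(x,y)},
\]
with the usual convention $0\log(\cdot/0)=0$. The summand for a fixed $x$ is $-\sum_y p(x,y)\log\frac{p(x,y)}{p(x)}$. This equals $p(x)\,H\bigl(p(\cdot,\cdot)/p(x)\bigr)$ restricted to that row, that is, the perspective of the negative Shannon entropy applied to the vector $(p(x,y))_{y\in\mathcal Y}$.

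The key step is to show that, for each fixed $x$, the map $v\mapsto -\sum_y v_y\log\frac{v_y}{\sum_{y'}v_{y'}}$ on $\mathbb R_{\ge 0}^{|\mathcal Y|}$ is concave. Since $\sum_{y'}v_{y'}$ is a linear function of $v$, this term is the negative of the relative entropy $D(v\,\Vert\,(\mathbf 1^\top v)\mathbf 1)$ between the unnormalized measure $v$ and the measure $(\mathbf 1^\top v)\mathbf 1$. The relative entropy is jointly convex in its two arguments, which follows from the log-sum inequality $\sum_i a_i\log\frac{a_i}{b_i}\ge (\sum_i a_i)\log\frac{\sum_i a_i}{\sum_i b_i}$ applied to convex combinations. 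Composing a jointly convex function with the linear map $v\mapsto (v,(\mathbf 1^\top v)\mathbf 1)$ preserves convexity, so each summand is concave in $v$. Each row vector $(p(x,y))_y$ depends linearly on $P_{XY}$, so summing over $x$ gives concavity of $P_{XY}\mapsto H(Y\vert X)$ on the simplex.

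As a direct check, one can also verify the log-sum step at the level of the mixture. Take $P=\lambda P_1+(1-\lambda)P_2$ and fix a pair $(x,y)$. The log-sum inequality with $a_1=\lambda p_1(x,y)$, $a_2=(1-\lambda)p_2(x,y)$, $b_1=\lambda p_1(x)$ and $b_2=(1-\lambda)p_2(x)$ gives
\[
p(x,y)\log\frac{p(x,y)}{p(x)}\le \lambda p_1(x,y)\log\frac{p_1(x,y)}{p_1(x)}+(1-\lambda)p_2(x,y)\log\frac{p_2(x,y)}{p_2(x)}.
\]
Summing over $(x,y)$ and negating yields $H_P(Y\vert X)\ge\lambda H_{P_1}(Y\vert X)+(1-\lambda)H_{P_2}(Y\vert X)$.

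The only delicate point is boundary behaviour, namely zero entries of $p(x,y)$ or $p(x)$. I would handle it with the standard conventions $0\log\frac{0}{b}=0$ and $a\log\frac{a}{0}=+\infty$ for $a>0$. Since $p(x,y)\le p(x)$, the second case never arises, so the log-sum inequality remains valid at the boundary and no limiting argument is needed.
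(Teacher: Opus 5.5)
Your proposal is correct and is essentially the paper's argument. The paper fixes $x$, uses the log-sum inequality to bound the row term $\sum_y P^\lambda_{XY}(x,y)\log\frac{P^\lambda_{XY}(x,y)}{P^\lambda_X(x)}$ of the mixture by the $(1-\lambda,\lambda)$-combination of the corresponding row terms of $P^{(0)}_{XY}$ and $P^{(1)}_{XY}$ (this is your ``direct check'' summed over $y$), then negates and sums over $x$; your joint-convexity-of-relative-entropy framing is a repackaging of that same log-sum step. Your remark that $p(x,y)\le p(x)$ rules out the $a\log(a/0)$ case handles the boundary correctly, and the only slip is a sign in the wording: the row term $p(x)H(\cdot)$ is the perspective of $H$ itself (i.e.\ the negative of the perspective of negative entropy), not the perspective of negative entropy.
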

\begin{proof}
This proposition is known in the literature. For completeness, we give a proof in the Appendix. 
\end{proof}

Next, we highlight that the concavity of the kernel-induced matrix-based (quantum) conditional entropy in \cite{sanchezgiraldo2015measures} does not follow from this result or a similar proof. This is because unlike the known identity for classical Shannon entropy that $H(Y\vert X)=\mathbb E_{x\sim P_X}[H(Y\vert X=x)]$,
the kernel-induced conditional von Neumann entropy in \citep{sanchezgiraldo2015measures} does not satisfy the same identity. In the proof of the following proposition, we utilize the strong subadditivity of the quantum entropy to prove the concavity of the quantum conditional entropy functional as defined in \citep{sanchezgiraldo2015measures}:

\begin{proposition}[Concavity of kernel-induced conditional von Neumann entropy]
\label{thm:conditional_vne_concave_main}
Let $k_X$ and $k_Y$ be normalized positive semidefinite kernels. Then, the conditional von Neumann entropy defined in \cite{sanchezgiraldo2015measures}
\[
P_{XY}
\longmapsto
\mathcal H_{\mathrm{vN}}(Y\vert X;P_{XY})
\]
is a concave functional of the joint distribution $P_{XY}$.
\end{proposition}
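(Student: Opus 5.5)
The plan is to lift everything to operators on feature spaces and recognise the conditional entropy as a quantum conditional entropy of a bipartite density operator that depends linearly on $P_{XY}$.

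\textbf{Step 1: population-level operators.} Let $\phi_X:\mathcal X\to\mathcal H_X$ and $\phi_Y:\mathcal Y\to\mathcal H_Y$ be feature maps of $k_X$ and $k_Y$. Because the kernels are normalized, $\|\phi_X(x)\|=\|\phi_Y(y)\|=1$. Define
\[
\rho_{XY}(P_{XY})=\mathbb E_{(x,y)\sim P_{XY}}\bigl[(\phi_X(x)\otimes\phi_Y(y))(\phi_X(x)\otimes\phi_Y(y))^{*}\bigr],
\]
which is a positive unit-trace operator on $\mathcal H_X\otimes\mathcal H_Y$. Its partial trace over $\mathcal H_Y$ is
\[
\rho_X=\mathbb E_x[\phi_X(x)\phi_X(x)^*],
\]
and this step uses $k_Y(y,y)=1$. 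The population quantity is $\mathcal H_{\mathrm{vN}}(Y\vert X;P_{XY})=S(\rho_{XY})-S(\rho_X)$. For empirical measures this agrees with the Gram-matrix definition in Section~\ref{sec:preliminaries}, since $\frac1n K_{XY}$ and $\rho_{XY}$ share their nonzero spectrum, and likewise $\frac1n K_X$ and $\rho_X$. The key observation is that $P_{XY}\mapsto\rho_{XY}$ is affine and that partial trace is linear. Concavity in $P_{XY}$ therefore reduces to concavity of the map $\rho_{XY}\mapsto S(XY)_\rho-S(X)_\rho$ over density operators, because composing a concave function with an affine map keeps it concave.

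\textbf{Step 2: concavity of quantum conditional entropy via strong subadditivity.} Fix $\lambda\in[0,1]$ and states $\rho^{(0)},\rho^{(1)}$. Introduce a classical register $C$ with orthonormal basis $\{|0\rangle,|1\rangle\}$ and form the block-diagonal state
\[
\omega_{CXY}=(1-\lambda)|0\rangle\langle0|\otimes\rho^{(0)}_{XY}+\lambda|1\rangle\langle1|\otimes\rho^{(1)}_{XY}.
\]
Strong subadditivity, in the form $S(CXY)+S(X)\le S(CX)+S(XY)$, gives $S(Y\vert CX)\le S(Y\vert X)$. For block-diagonal classical–quantum states, the joint entropy decomposes as $S(CXY)=h(\lambda)+\sum_c p_c S(\rho^{(c)}_{XY})$ and $S(CX)=h(\lambda)+\sum_c p_c S(\rho^{(c)}_X)$. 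Subtracting, the binary entropy terms cancel, so
\[
S(Y\vert CX)=\sum_c p_c\,S(Y\vert X)_{\rho^{(c)}}.
\]
The marginal $\omega_{XY}$ equals $(1-\lambda)\rho^{(0)}+\lambda\rho^{(1)}$, which yields the concavity inequality.

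\textbf{Expected obstacle: infinite dimensions.} In infinite-dimensional feature spaces, for example with Gaussian kernels, the entropies may be infinite and the difference $S(XY)-S(X)$ ill-defined. To handle this, I would first prove the result for finite-dimensional (or finite-rank) $\rho$. I would then extend it by writing $-S(Y\vert X)=D(\rho_{XY}\,\|\,\rho_X\otimes I)$ as a relative entropy and applying Lindblad's version of strong subadditivity, equivalently monotonicity of relative entropy under the partial trace over $C$, which holds for trace-class operators. A simpler fallback is to restrict to distributions with finite support, which is all that the empirical reweighting in Section~\ref{sec:conditional_reweighting} requires. In that case every operator lives in the finite-dimensional span of the finitely many features involved, and the finite-dimensional argument applies directly.
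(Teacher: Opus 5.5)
Your proposal is correct and follows essentially the same route as the paper. Both arguments identify $\operatorname{Tr}_Y C_{XY}(P)=C_X(P_X)$ via $k_Y(y,y)=1$ and use that $P\mapsto C_{XY}(P)$ is affine; they then adjoin a classical register, cancel the binary entropy $h(\lambda)$, and apply strong subadditivity in the form $S(Y\vert CX)\le S(Y\vert X)$. Your remark on infinite dimensions, via $-S(Y\vert X)=D(\rho_{XY}\Vert\rho_X\otimes I)$ or by restricting to finite support, goes slightly beyond the paper, which simply assumes trace-class operators with finite entropy.
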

\begin{proof}
We provide the proof in the Appendix.
\end{proof}

Our shown Proposition~\ref{thm:conditional_vne_concave_main} highlights the key structural property needed by the correction method we propose in the next section. Since superlevel sets of concave functions are convex sets, for every threshold $\rho$ the set
\[
\bigl\{
Q:\:
\mathcal H_{\mathrm{vN}}(Y\vert X;Q)\ge \rho
\bigr\}
\]
is a convex set. Therefore, if we search over empirical distributions supported on a fixed set of generated samples, the constraint that the reweighted distribution has conditional entropy at least $\rho$ is a convex constraint. As a result, the conditional von Neumann entropy is suitable not only as a diagnostic, but further as the basis for a tractable post-hoc reweighting procedure.



\section{Kernel-based Entropy-Constrained Projection for Conditional Generators}
\label{sec:conditional_reweighting}

The empirical gaps in Section~\ref{sec:llm_empirical_gap} motivate a post-hoc correction problem. Suppose a conditional generator has already produced multiple candidate outputs for each input. We ask whether one can reweight these candidates so that the resulting conditional distribution moves toward a higher conditional-entropy region while remaining close to the original model distribution. The concavity of conditional von Neumann entropy gives a projection framework for this question.

Fix an input marginal $P_X$. For a conditional law $R_{Y\vert X}$, define
\[
\mathcal H_{\mathrm{vN}}(R_{Y\vert X};P_X)
:=
\mathcal H_{\mathrm{vN}}(Y\vert X;P_XR_{Y\vert X}),
\]
where $P_XR_{Y\vert X}$ denotes the joint law induced by $P_X$ and $R_{Y\vert X}$. For a threshold $\rho$, let
\[
\mathcal C_\rho(P_X)
=
\bigl\{R_{Y\vert X}:\mathcal H_{\mathrm{vN}}(R_{Y\vert X};P_X)\ge \rho\bigr\}.
\]
By Proposition~\ref{thm:conditional_vne_concave_main}, $\mathcal C_\rho(P_X)$ is a convex set.

We compare conditional laws through divergences lifted to joint laws with the same input marginal. Given a Bregman divergence $D_\Phi$ on joint laws, define
\[
D_{\Phi\vert P_X}\bigl(R_{Y\vert X},Q_{Y\vert X}\bigr)
:=
D_\Phi\bigl(P_XR_{Y\vert X},P_XQ_{Y\vert X}\bigr).
\]
For example, when $D_\Phi$ is KL divergence, this gives the $P_X$-averaged conditional KL. When $D_\Phi$ is the squared Hilbertian distance between kernel mean embeddings of joint pairs $(X,Y)$, it gives the conditional MMD with the input marginal held fixed.

\begin{theorem}
\label{thm:conditional_entropy_projection_principle}
Let $\rho=\mathcal H_{\mathrm{vN}}(P_{Y\vert X};P_X)$, and let $Q^\star_{Y\vert X}$ be the $D_{\Phi\vert P_X}$-Bregman projection of $Q_{Y\vert X}$ onto $\mathcal C_\rho(P_X)$. Then, we have
\[
D_{\Phi\vert P_X}\bigl(P_{Y\vert X},Q^\star_{Y\vert X}\bigr)
\le
D_{\Phi\vert P_X}\bigl(P_{Y\vert X},Q_{Y\vert X}\bigr) \, - \, D_{\Phi\vert P_X}\bigl(Q^\star_{Y\vert X},Q_{Y\vert X}\bigr)
\]
\end{theorem}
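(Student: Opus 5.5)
This is the generalized Pythagorean inequality for Bregman projections onto convex sets, applied in the space of joint laws whose input marginal is fixed to $P_X$. The plan is to move everything to joint laws, use Proposition~\ref{thm:conditional_vne_concave_main} for convexity of the feasible set, apply the first-order optimality condition of the projection, and conclude with the three-point identity for Bregman divergences.

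\textbf{Step 1: move to joint laws.} Set
\[
\mathcal K=\{P_XR_{Y\vert X}: R_{Y\vert X}\in\mathcal C_\rho(P_X)\}.
\]
This set is the intersection of two sets:
\begin{itemize}
\item the affine set of joint laws with $X$-marginal $P_X$;
\item the superlevel set $\{R_{XY}:\mathcal H_{\mathrm{vN}}(Y\vert X;R_{XY})\ge\rho\}$.
\end{itemize}
The superlevel set is convex by Proposition~\ref{thm:conditional_vne_concave_main}. Hence $\mathcal K$ is convex.

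Write $Q=P_XQ_{Y\vert X}$, $Q^\star=P_XQ^\star_{Y\vert X}$ and $P=P_XP_{Y\vert X}$. By definition of $D_{\Phi\vert P_X}$, the projection $Q^\star_{Y\vert X}$ corresponds to
\[
Q^\star=\arg\min_{R\in\mathcal K}D_\Phi(R,Q).
\]
The choice $\rho=\mathcal H_{\mathrm{vN}}(P_{Y\vert X};P_X)$ puts $P_{Y\vert X}$ in $\mathcal C_\rho(P_X)$, so $P\in\mathcal K$.

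\textbf{Step 2: first-order optimality.} For any $R\in\mathcal K$ and $t\in[0,1]$, the mixture $Q^\star+t(R-Q^\star)$ lies in $\mathcal K$ by convexity. Minimality of $Q^\star$ gives that the one-sided derivative at $t=0^+$ is nonnegative. Since $D_\Phi(\cdot,Q)=\Phi(\cdot)-\Phi(Q)-\langle\nabla\Phi(Q),\cdot-Q\rangle$, this reads
\[
\langle \nabla\Phi(Q^\star)-\nabla\Phi(Q),\,R-Q^\star\rangle\ge 0 .
\]

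\textbf{Step 3: three-point identity.} For Bregman divergences,
\[
D_\Phi(R,Q)=D_\Phi(R,Q^\star)+D_\Phi(Q^\star,Q)+\langle\nabla\Phi(Q^\star)-\nabla\Phi(Q),R-Q^\star\rangle .
\]
Take $R=P$. By Step~2 the inner product is nonnegative, so
\[
D_\Phi(P,Q^\star)\le D_\Phi(P,Q)-D_\Phi(Q^\star,Q).
\]
Translating back through the definition of $D_{\Phi\vert P_X}$ gives the claimed inequality.

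\textbf{Main obstacle.} The delicate step is Step~2, which needs $\Phi$ to be differentiable at $Q^\star$ and the projection to exist.
\begin{itemize}
\item \emph{Finite-support setting (the one used algorithmically).} Here the laws are weight vectors on a product of simplices. The set $\mathcal K$ is then compact, since $\mathcal H_{\mathrm{vN}}$ is continuous in the weights, so a minimizer exists. For the KL generator, I would argue that $Q^\star$ stays in the relative interior of the support of $Q$, where $\Phi$ is differentiable. If not, I would restrict $R$ to the support of $Q$, which is the case in the reweighting setting.
\item \emph{MMD case.} $\Phi$ is a squared Hilbert norm of the mean embedding, so it is smooth everywhere and the argument goes through directly.
\end{itemize}
For general (infinite) laws, I would state the result as holding whenever the directional derivative exists, which is the standard hypothesis for Bregman projections.
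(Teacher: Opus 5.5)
Your proposal is correct and follows essentially the same route as the paper: lift to joint laws with fixed marginal $P_X$, get convexity of the lifted feasible set from Proposition~\ref{thm:conditional_vne_concave_main}, and combine the first-order optimality condition of the Bregman projection with the three-point identity (the paper packages this last part as a separate Pythagorean lemma). Your explicit derivation of the variational inequality and your remarks on differentiability of $\Phi$ at $Q^\star$ and on existence of the projection are extra care that the paper omits; it simply assumes $\Phi$ is differentiable on a convex domain.
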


The theorem gives a geometric interpretation of the projection step. If the entropy threshold is chosen so that the data conditional law is feasible, then the projection cannot increase the Bregman discrepancy from the data law. The second term is the Pythagorean decrease obtained by replacing $Q_{Y\vert X}$ with its high-entropy projection. The proof is provided in Appendix~\ref{app:proofs_additional_results}.

We now instantiate this principle on generated samples. Let $x_1,\ldots,x_N$ be input prompts, and let $y_{i,1},\ldots,y_{i,m}$ be $m$ generated outputs for prompt $x_i$. We assign weights $p_i\in\Delta_m$ to the candidates generated from $x_i$ and place joint mass $q_{i,j}=\tfrac{1}{N}p_{i,j}$ on $(x_i,y_{i,j})$. Hence each input keeps total mass $1/N$, and the reweighting modifies only the conditional distribution over generated outputs.

Let $K\in\mathbb R^{Nm\times Nm}$ be the product-kernel matrix over generated pairs,
\[
K_{(i,j),(i',j')}
=
k_X(x_i,x_{i'})k_Y(y_{i,j},y_{i',j'}).
\]
Let $u_{i,j}=1/(Nm)$ represent the uniform weight distribution (original baseline). We formulate the finite-support projection as
\begin{equation}
\label{eq:kernel_conditional_projection}
\begin{aligned}
\min_{q\in\mathbb{R}^{N m}}
\quad&
(q-u)^\top K(q-u)
\\
\mathrm{s.t.}
\quad&
\mathcal H_{\mathrm{vN}}(q)\ge \rho,
\\
&
q_{i,j}\ge 0,\qquad
\sum_{j'=1}^m q_{i,j'}=\frac1N
\quad \text{for all }\: i\in\{1,\ldots ,N \} ,\, j\in\{1,\ldots ,m \} .
\end{aligned}
\end{equation}
The objective is the finite-sample conditional MMD between the reweighted empirical distribution and the original uniformly weighted model samples. The entropy term is the empirical conditional von Neumann entropy of the weighted paired sample. Since the block constraints fix the empirical input marginal, the input-entropy term is constant over the feasible set.

\begin{proposition}
\label{prop:finite_support_projection}
The optimization problem~\eqref{eq:kernel_conditional_projection} is a convex program, and its objective is the finite-support instance of the lifted squared-MMD discrepancy $D_{\Phi\vert P_X}$.
\end{proposition}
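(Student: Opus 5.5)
We check three things: the objective is a convex quadratic, the feasible set is an intersection of convex sets, and the objective equals the squared MMD lifted to joint laws with the input marginal held fixed.

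\textbf{Objective.} Since $k_X$ and $k_Y$ are positive semidefinite, so is their product kernel (Schur product theorem). Hence $K=K_X^{\mathrm{gen}}\odot K_Y^{\mathrm{gen}}\succeq 0$, and $q\mapsto (q-u)^\top K(q-u)$ is a convex quadratic.

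\textbf{Linear constraints.} The constraints $q_{i,j}\ge 0$ and $\sum_{j}q_{i,j}=1/N$ define a product of scaled simplices $\prod_i \tfrac1N\Delta_m$. This set is a convex polytope.

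\textbf{Entropy constraint.} Write the weighted empirical joint law as $\widehat Q_q=\sum_{i,j}q_{i,j}\,\delta_{(x_i,y_{i,j})}$. The map $q\mapsto \widehat Q_q$ is linear. By the population definition of Appendix~\ref{app:preliminaries_extended}, $\mathcal H_{\mathrm{vN}}(q)=\mathcal H_{\mathrm{vN}}(Y\vert X;\widehat Q_q)$. Its matrix form is $H_{\mathrm{vN}}(D_q^{1/2}KD_q^{1/2})-H_{\mathrm{vN}}(\cdot)$ of the input Gram matrix weighted by the $X$-marginal, where $D_q=\mathrm{diag}(q)$; equivalently, one can use the covariance operators $C_{XY}(q)=\sum q_{i,j}\,\phi(x_i,y_{i,j})\phi(x_i,y_{i,j})^\top$, which depend linearly on $q$.

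Proposition~\ref{thm:conditional_vne_concave_main} states that $\mathcal H_{\mathrm{vN}}(Y\vert X;\cdot)$ is concave in the joint law. Composing a concave function with an affine map preserves concavity, so $q\mapsto\mathcal H_{\mathrm{vN}}(q)$ is concave on the simplex. Its superlevel set $\{q:\mathcal H_{\mathrm{vN}}(q)\ge\rho\}$ is therefore convex.

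Within the feasible set the $X$-marginal is fixed at uniform mass $1/N$ on each $x_i$. So the input-entropy term is constant there and only affects $\rho$ by a shift, which does not change convexity.

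\textbf{Convexity of the program.} Minimizing a convex function over the intersection of these convex sets is a convex program.

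\textbf{Identification with squared MMD.} Let $\mu(R)=\mathbb E_R[\phi_X(X)\otimes\phi_Y(Y)]$ be the kernel mean embedding under the product kernel. Then $\mu(\widehat Q_q)-\mu(\widehat Q_u)=\sum_{i,j}(q-u)_{i,j}\,\phi(x_i)\otimes\phi(y_{i,j})$. Expanding its squared RKHS norm with $\langle \phi(x)\otimes\phi(y),\phi(x')\otimes\phi(y')\rangle=k_X(x,x')k_Y(y,y')$ gives exactly $(q-u)^\top K(q-u)$.

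Both $\widehat Q_q$ and $\widehat Q_u$ put mass $1/N$ on each $x_i$. They can therefore be written as $\widehat P_X R_{Y\vert X}$ and $\widehat P_X Q_{Y\vert X}$ with the common marginal $\widehat P_X=\tfrac1N\sum_i\delta_{x_i}$ and conditionals $R(\cdot\vert x_i)=\sum_j p_{i,j}\delta_{y_{i,j}}$. Thus the objective equals $D_{\Phi\vert \widehat P_X}(R_{Y\vert X},Q_{Y\vert X})$ with $\Phi(R)=\tfrac12\|\mu(R)\|^2$. This $\Phi$ is a Bregman generator because it is a convex quadratic, and its Bregman divergence is $\tfrac12\|\mu(R)-\mu(Q)\|^2$; the factor $\tfrac12$ is absorbed by normalization.

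\textbf{Main obstacle.} The delicate step is justifying that the finite-sample $\mathcal H_{\mathrm{vN}}(q)$ really is the population functional evaluated at $\widehat Q_q$, so that Proposition~\ref{thm:conditional_vne_concave_main} applies. I would verify this with the feature-space representation, where the joint operator $C_{XY}(q)$ and the input operator $C_X(q)=\sum_i(\sum_j q_{i,j})\phi_X(x_i)\phi_X(x_i)^\top$ are both linear in $q$. The weighted Gram matrix $D_q^{1/2}KD_q^{1/2}$ has the same nonzero spectrum as $C_{XY}(q)$, so it has the same von Neumann entropy. If the appendix defines $\mathcal H_{\mathrm{vN}}$ via operators, this identification is immediate.
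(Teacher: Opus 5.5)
Your proof is correct. The objective and linear-constraint parts match the paper's argument, and so does the MMD identification: the paper also expands $\|\sum_a(q_a-u_a)\psi(s_a)\|_{\mathcal H}^2=(q-u)^\top K(q-u)$.

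The one genuine difference is how you handle the entropy constraint. The paper's proof of this proposition does not invoke Proposition~\ref{thm:conditional_vne_concave_main}. It argues instead that:
\begin{itemize}
\item $q\mapsto C_{XY}(q)=\sum_a q_a\psi(s_a)\psi(s_a)^\ast$ is affine;
\item $H_{\mathrm{vN}}$ is concave on trace-one positive operators;
\item the block constraints freeze $C_X=\frac1N\sum_i\phi_X(x_i)\phi_X(x_i)^\ast$.
\end{itemize}
So on the feasible set, $\mathcal H_{\mathrm{vN}}(q)$ is a concave function minus a constant. You instead compose the strong-subadditivity-based conditional concavity result with the affine map $q\mapsto\widehat Q_q$.

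What each route buys:
\begin{itemize}
\item Your route uses a heavier lemma but proves more. It shows $q\mapsto\mathcal H_{\mathrm{vN}}(Y\vert X;\widehat Q_q)$ is concave on the whole simplex. The superlevel set would therefore stay convex even if the block constraints were dropped and the input marginal could move. The paper's one-line argument fails in that case, because subtracting the $q$-dependent concave term $H_{\mathrm{vN}}(C_X(q))$ need not preserve concavity.
\item The paper's route is more elementary. Your side remark that the input-entropy term is constant on the feasible set is exactly what lets it bypass Proposition~\ref{thm:conditional_vne_concave_main}.
\item That same remark resolves the ``main obstacle'' you flag. The paper defines $\mathcal H_{\mathrm{vN}}(q)$ directly through $C_{XY}(q)$ and the fixed $C_X$. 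It uses the $AA^\ast$ versus $A^\ast A$ spectral identity only to link this to the weighted Gram matrix $\operatorname{diag}(\sqrt q)\,K\,\operatorname{diag}(\sqrt q)$.
\end{itemize}

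A minor point: with the paper's generator $\Phi(\mu)=\|\mu\|_{\mathcal H}^2$, one gets $D_\Phi(\mu,\nu)=\|\mu-\nu\|_{\mathcal H}^2$ exactly, so there is no factor $\tfrac12$ to absorb.
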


For large sample sets, we solve a covariance-space approximation of~\eqref{eq:kernel_conditional_projection}. Let $\tilde z_{i,j}\in\mathbb R^r$ be normalized sketched features for the pair $(x_i,y_{i,j})$, chosen so that their inner products approximate the product kernel. Define
\[
\widetilde C(p)
=
\frac1N\sum_{i=1}^N\sum_{j=1}^m
p_{i,j}\tilde z_{i,j}\tilde z_{i,j}^{\top}.
\]
We optimize the penalized covariance-space objective
\begin{equation}
\label{eq:covariance_penalized_projection}
\min_{p_1,\ldots,p_N\in\Delta_m}
\quad
F(p):=
\Bigl\Vert 
\frac1N\sum_{i,j}(p_{i,j}-1/m)\tilde z_{i,j}
\Bigr\Vert_2^2
-
\lambda H_{\mathrm{vN}}(\widetilde C(p)).
\end{equation}

\begin{proposition}
\label{prop:covariance_space_projection}
The optimization problem~\eqref{eq:covariance_penalized_projection} is a convex program. It is the covariance-space analogue of the kernel projection~\eqref{eq:kernel_conditional_projection} under the sketched product-kernel representation.
\end{proposition}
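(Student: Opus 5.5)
The plan is to show that $F$ is a sum of two convex functions of $p=(p_1,\ldots,p_N)$, and that the feasible set $\Delta_m\times\cdots\times\Delta_m$ is convex. The feasible set is immediate: it is a Cartesian product of simplices, hence a convex polytope. Each block constraint $\sum_j p_{i,j}=1$ fixes the input mass of $x_i$ at $1/N$, which matches the constraint $\sum_{j'}q_{i,j'}=1/N$ in~\eqref{eq:kernel_conditional_projection} under $q_{i,j}=p_{i,j}/N$.

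\textbf{First term.} Define the linear map $A(p)=\frac1N\sum_{i,j}(p_{i,j}-1/m)\tilde z_{i,j}$, which is affine in $p$. The first term is $\|A(p)\|_2^2$, the composition of a convex quadratic with an affine map, so it is convex. To support the ``covariance-space analogue'' claim, I would expand it as $(q-u)^\top \widetilde K (q-u)$ with $\widetilde K_{(i,j),(i',j')}=\tilde z_{i,j}^\top\tilde z_{i',j'}\approx K_{(i,j),(i',j')}$. This is exactly the objective of~\eqref{eq:kernel_conditional_projection} with $K$ replaced by its sketched Gram matrix.

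\textbf{Second term.} We need $-\lambda H_{\mathrm{vN}}(\widetilde C(p))$ to be convex, i.e.\ $p\mapsto H_{\mathrm{vN}}(\widetilde C(p))$ concave, for $\lambda\ge 0$.

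First, $\widetilde C(p)$ is affine in $p$ and maps into PSD matrices. Its trace is $\frac1N\sum_{i,j}p_{i,j}\|\tilde z_{i,j}\|^2=1$ on the feasible set, because the sketched features are normalized and each block sums to one.

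Second, $A\mapsto -\operatorname{Tr}(A\log A)$ is concave on PSD matrices. This is the standard operator-convexity of $t\log t$, or equivalently concavity of von Neumann entropy. Concavity is preserved under precomposition with an affine map, so the second term is concave.

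I would then link this to the kernel version. The nonzero spectrum of $\widetilde C(p)$ equals that of $D^{1/2}\widetilde K D^{1/2}$ with $D=\mathrm{diag}(q)$, since $\widetilde C = Z^\top D Z$ and this matrix shares its nonzero eigenvalues with $D^{1/2}ZZ^\top D^{1/2}$. Hence $H_{\mathrm{vN}}(\widetilde C(p))$ is the weighted joint-entropy term of the empirical conditional entropy for the sketched kernel. The input-entropy term is constant because the $x$-marginal is fixed, which is consistent with Proposition~\ref{thm:conditional_vne_concave_main}.

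\textbf{Conclusion and main obstacle.} Adding the two pieces gives a convex objective minimized over a convex polytope, so~\eqref{eq:covariance_penalized_projection} is a convex program. It is also the Lagrangian (penalized) form of the sketched version of~\eqref{eq:kernel_conditional_projection}. The one step needing care is the spectral identification: the entropy must be defined on the weighted covariance rather than on $\frac1n K$, so that weights enter linearly. I would state it cleanly through the $Z^\top D Z$ versus $D^{1/2}ZZ^\top D^{1/2}$ duality, handling zero eigenvalues with the convention $0\log 0=0$.
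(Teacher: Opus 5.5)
Your proof is correct and follows the paper's argument. The first term is the squared norm of an affine map of $p$; $p\mapsto\widetilde C(p)$ is affine, so $-\lambda H_{\mathrm{vN}}(\widetilde C(p))$ is convex for $\lambda\ge 0$ by concavity of von Neumann entropy; and the product of simplices is a convex polytope. Your extra step, rewriting the first term as $(q-u)^\top\widetilde K(q-u)$ and matching the nonzero spectrum of $\widetilde C(p)$ with that of $\operatorname{diag}(\sqrt q)\,\widetilde K\,\operatorname{diag}(\sqrt q)$, makes the ``covariance-space analogue'' claim precise; the paper's proof leaves that claim implicit and only touches it in its finite-support discussion.
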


We propose the algorithm \emph{Conditional Entropy Projection by Block Exponentiated Gradient} (CEP-BEG) for solving~\eqref{eq:covariance_penalized_projection}. As described in Algorithm~\ref{alg:cep_beg}, CEP-BEG performs block mirror descent over the product simplex $\Delta_m^N$, with separated exponentiated-gradient (EG) normalization per input block. Note that we include this block normalization to preserve the empirical input marginal. Theorem~\ref{thm:cep_beg_convergence_main} provides a convergence guarantee for the averaged iterate in CEP-BEG.

\begin{algorithm}[t]
\caption{CEP-BEG: Conditional Entropy Projection by Block Exponentiated Gradient}
\label{alg:cep_beg}
\begin{algorithmic}[1]
\REQUIRE Inputs $\{x_i\}_{i=1}^N$, generated outputs $\{y_{i,j}\}_{j=1}^m$, sketch dimension $r$, entropy weight $\lambda$, step size $\eta$, iterations $T$.
\STATE Construct normalized sketched joint features $\{\tilde z_{i,j}\in\mathbb R^r\}$.
\STATE Initialize $p_{i,j}^{(0)}=1/m$ for all $i,j$.
\FOR{$t=0,\ldots,T-1$}
\STATE Form the weighted covariance $\widetilde C(p^{(t)})$.
\STATE Compute a gradient or subgradient $g^{(t)}$ of $F$ at $p^{(t)}$.
\STATE Update each input block by
$
p_{i,j}^{(t+1)}
=
{
p_{i,j}^{(t)}\exp(-\eta g_{i,j}^{(t)})
}/\bigl({
\sum_{\ell=1}^m p_{i,\ell}^{(t)}\exp(-\eta g_{i,\ell}^{(t)})
}\bigr)$.
\ENDFOR
\RETURN Weights $q_{i,j}=p_{i,j}^{(T)}/N$ on the generated pairs.
\end{algorithmic}
\end{algorithm}

\begin{theorem}[Convergence of CEP-BEG]
\label{thm:cep_beg_convergence_main}
Assume $F$ is convex and that the CEP-BEG gradients satisfy
$\sum_{i=1}^N \|g_i^{(t)}\|_\infty^2\le G^2$ for all $t$ 
where $g_i^{(t)}=(g_{i,1}^{(t)},\ldots,g_{i,m}^{(t)})$. Let $\bar p^{(T)}=\tfrac{1}{T}\sum_{t=0}^{T-1}p^{(t)}$. Then, choosing $\eta=\sqrt{2N\log m}/(G\sqrt T)$, for any minimizer $p^\star$ of~\eqref{eq:covariance_penalized_projection} we have 
\[
F(\bar p^{(T)})-F(p^\star)
\le
G\sqrt{\frac{2N\log m}{T}}.
\]
\end{theorem}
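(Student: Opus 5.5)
The proof follows the standard mirror-descent analysis, run on the product simplex $\Delta_m^N$. As mirror map we use the sum of blockwise negative entropies,
\[
\psi(p)=\sum_{i=1}^N\sum_{j=1}^m p_{i,j}\log p_{i,j}.
\]
Its Bregman divergence is the sum of blockwise KL divergences, $D_\psi(p,p')=\sum_i \mathrm{KL}(p_i\Vert p_i')$. With this choice, the mirror step $p^{(t+1)}=\arg\min_{p\in\Delta_m^N}\{\eta\langle g^{(t)},p\rangle+D_\psi(p,p^{(t)})\}$ separates across blocks. Each block is exactly the normalized exponentiated-gradient update of Algorithm~\ref{alg:cep_beg}, so CEP-BEG is mirror descent with respect to $\psi$. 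The needed properties are that $\psi$ is $1$-strongly convex on $\Delta_m^N$ with respect to the mixed norm $\|p\|_{1,2}:=(\sum_i\|p_i\|_1^2)^{1/2}$ (Pinsker's inequality applied blockwise), and that the dual norm is $\|g\|_{\infty,2}=(\sum_i\|g_i\|_\infty^2)^{1/2}$. The hypothesis of Theorem~\ref{thm:cep_beg_convergence_main} says exactly that $\|g^{(t)}\|_{\infty,2}\le G$.

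The argument has three steps.

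\textbf{Step 1 (one-step inequality).} For any $p\in\Delta_m^N$, the first-order optimality of the prox step together with the three-point identity give
\[
\eta\langle g^{(t)},p^{(t)}-p\rangle\le D_\psi(p,p^{(t)})-D_\psi(p,p^{(t+1)})+\eta\langle g^{(t)},p^{(t)}-p^{(t+1)}\rangle-D_\psi(p^{(t+1)},p^{(t)}).
\]
We bound the last two terms together. By strong convexity, $D_\psi(p^{(t+1)},p^{(t)})\ge\frac12\|p^{(t+1)}-p^{(t)}\|_{1,2}^2$. By the duality pairing, $\eta\langle g^{(t)},p^{(t)}-p^{(t+1)}\rangle\le\eta\|g^{(t)}\|_{\infty,2}\|p^{(t)}-p^{(t+1)}\|_{1,2}$. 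Young's inequality then bounds the sum of these two terms by $\frac{\eta^2}{2}\|g^{(t)}\|_{\infty,2}^2\le\frac{\eta^2G^2}{2}$.

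\textbf{Step 2 (telescoping and convexity).} Sum over $t=0,\dots,T-1$ with $p=p^\star$. The KL terms telescope to at most $D_\psi(p^\star,p^{(0)})$. Since $p^{(0)}$ is uniform in every block, this equals $\sum_i(\log m-H(p_i^\star))\le N\log m$. Convexity of $F$ together with $g^{(t)}\in\partial F(p^{(t)})$ gives $F(p^{(t)})-F(p^\star)\le\langle g^{(t)},p^{(t)}-p^\star\rangle$, and Jensen's inequality gives $F(\bar p^{(T)})\le\frac1T\sum_t F(p^{(t)})$. Combining,
\[
F(\bar p^{(T)})-F(p^\star)\le\frac{N\log m}{\eta T}+\frac{\eta G^2}{2}.
\]

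\textbf{Step 3 (step size).} Substituting $\eta=\sqrt{2N\log m}/(G\sqrt T)$ makes both terms equal to $G\sqrt{N\log m/(2T)}$, so their sum is $G\sqrt{2N\log m/T}$, which is the claimed bound.

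The main obstacle is Step 1's strong-convexity claim for the mixed norm, and two details there need care. First, we must check that blockwise Pinsker, $\mathrm{KL}(p_i\Vert p_i')\ge\frac12\|p_i-p_i'\|_1^2$, sums to $\frac12\|p-p'\|_{1,2}^2$. It does, directly by the definition of $\|\cdot\|_{1,2}$. Second, we must check that Hölder's inequality holds in the form $\langle g,d\rangle\le\|g\|_{\infty,2}\|d\|_{1,2}$. This follows by applying blockwise Hölder, $\langle g_i,d_i\rangle\le\|g_i\|_\infty\|d_i\|_1$, and then Cauchy--Schwarz across blocks.

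A further technicality is that $F$ may be nondifferentiable where $\widetilde C(p)$ is singular, because of the $\log$ in the von Neumann entropy. This is handled by the subgradient form of the argument, which uses only the inequality $F(p^{(t)})-F(p^\star)\le\langle g^{(t)},p^{(t)}-p^\star\rangle$ together with the assumed bound $G$. All iterates remain in the relative interior of $\Delta_m^N$, so the KL terms stay finite.
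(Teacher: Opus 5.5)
Your proposal is correct and matches the paper's proof essentially step for step. Both use the product negative-entropy mirror map, a one-step mirror-descent inequality, telescoping with $D_\Psi(p^\star,p^{(0)})\le N\log m$, convexity of $F$ plus Jensen, and the same step-size balancing. The only cosmetic difference is in how the per-step error term is obtained: you derive it jointly via $1$-strong convexity in the mixed $\|\cdot\|_{1,2}$ norm, while the paper applies the single-simplex exponentiated-gradient inequality in each block and sums over $i$. Both give the identical term $\frac{\eta}{2}\sum_i\|g_i^{(t)}\|_\infty^2$.
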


The sketched features can be constructed in two complementary ways. For finite-dimensional embeddings, Gaussian random projection of the tensor-product feature gives a Johnson--Lindenstrauss-type guarantee on the generated sample set. For shift-invariant kernels, direct joint random Fourier features approximate the product kernel without forming tensor products. With sketch dimension $r$, each CEP-BEG iteration costs $O(Nmr^2+r^3)$ using a full eigendecomposition of the $r\times r$ covariance matrix. The proofs, gradient and feature constructions, and concentration bounds are in Appendix~\ref{app:proofs_additional_results}.

\section{Numerical Results}
\label{sec:numerical}

\subsection{Experimental Setup}

\textbf{Models and datasets.}
Section~\ref{sec:llm_empirical_gap} studies the conditional output-range gap for three fully open language models: OLMo~\cite{groeneveld-etal-2024-olmo}, Pythia~\cite{pmlr-v202-biderman23a}, and GPT-Neo~\cite{black_gpt_neo_2021}. Specifically, OLMo is trained on Dolma~\cite{soldaini2024dolma}, a large-scale open text corpus designed to support transparent language-model pretraining. Pythia and GPT-Neo are trained on The Pile~\cite{gao2020pile}, a diverse collection of English text from multiple domains, including web text, books, academic papers, code, and other curated sources.
Beyond language models, we evaluate two conditional image-generation settings. For class-conditioned ImageNet generation, we follow the DGM-Eval benchmark~\cite{stein2023exposing} and use generated samples from LDM~\cite{ldm_rombach2022high}, ADM~\cite{adm_dhariwal2021diffusion}, BigGAN~\cite{biggan_brock2019large}, and DiT-XL-2~\cite{dit_xl_2_peebles2023scalable}, with ImageNet~\cite{deng2009imagenet} training images as the real-data reference. For text-conditioned MS-COCO generation, we evaluate U-ViT~\cite{uvit_bao2023all}, SDXL~\cite{podell2024sdxl}, and PixArt~\cite{chen2024pixart}, using MS-COCO 2014~\cite{coco_lin2014microsoft} training set as the reference distribution.

\textbf{Tasks and evaluation protocol.}
We first evaluate conditional diversity gaps across language and image generation tasks, and then study two applications of the proposed conditional diversity objective.
For language modeling, following Section~\ref{sec:llm_empirical_gap}, we compare training-corpus continuations with model continuations generated from the same prefixes using greedy decoding, nucleus sampling with $p=0.9$, and ancestral sampling.
For image generation, we compare real and generated images under matched conditions, including class labels on ImageNet and captions on MS-COCO.
We then evaluate the proposed objective in two application settings.
For language models, we apply entropy-projected reweighting to multiple candidate continuations generated for each prefix, while preserving the prefix marginal.
For text-to-image generation, we incorporate the conditional diversity objective into the diffusion sampling process as a sampling-time guidance term, encouraging higher-diversity outputs under the same text condition on MS-COCO dataset.

\textbf{Feature representations and kernels.}
We extract features using four fixed pretrained models: Qwen3-Embedding~\cite{zhang2025qwen3embedding}, CLIP~\cite{clip_radford2021learning}, T5~\cite{t5_raffel2020exploring}, and DINOv2~\cite{oquab2024dinov2}. 
Then we construct kernels on the resulting features and evaluate three kernel families: Gaussian, cosine, and degree-3 polynomial kernels. All kernels are normalized before computing conditional VNE and conditional RKE.

\subsection{Conditional Diversity Gaps in Language Models}
\label{sec:numerical_llm_gap}

Figure~\ref{fig:llm_sample_size_gap} reports conditional VNE across sample sizes from $5$K to $20$K for OLMo, Pythia, and GPT-Neo. 
We also observe that conditional VNE increases with sample size for both training and generated continuations. However, the training-data curve grows faster than the generated-sample curves, so the measured diversity gap generally becomes larger as more samples are included. 
The gap is not restricted to the approximately $1$B-parameter models used in the main experiments: we additionally evaluated OLMo-3-7B, Pythia-2.8B, and Pythia-6.9B, and the conditional output-range gap remains substantial across all three larger models; the complete sample-size curves are reported in Appendix~\ref{app:larger_models}. We also perform a human-interpretable lexical audit counting distinct semantic categories in training versus generated continuations, which recovers the same ordering as conditional VNE; the details are reported in Appendix~\ref{app:lexical_audit}.

\subsection{Long-Sequence Conditional Diversity Gaps}
\label{sec:long_sequence_gap}

To test whether the gap persists beyond the short prefix--continuation lengths in Section~\ref{sec:llm_empirical_gap}, we extend the experiments to prefix lengths $\{8,16,32\}$ and continuation lengths $\{16,32,64\}$ tokens across GPT-Neo, OLMo, and Pythia, representing each sequence as a concatenation of $2$-token segment embeddings; the construction details and a chunk-size ablation are reported in Appendix~\ref{app:chunk_size_ablation}. Table~\ref{tab:long_sequence_gap} reports the exponentiated conditional VNE for the training reference and the three decoding strategies, along with the average gap. The training data exhibit higher conditional entropy than every decoding strategy in all evaluated configurations, indicating that the conditional diversity shortfall is a robust phenomenon across sequence lengths.

\begin{table}[t]
\centering
\caption{
Conditional diversity gaps for long prefix--continuation sequences.
We report the exponential of conditional VNE for the training reference and generated continuations under greedy decoding, nucleus sampling, and ancestral sampling, using prefix lengths $\{8,16,32\}$ and continuation lengths $\{16,32,64\}$.
}
\label{tab:long_sequence_gap}
\resizebox{\textwidth}{!}{
\begin{tabular}{llcccccc}
\toprule
\textbf{Model} & \textbf{Prefix} & \textbf{Continuation}
& \textbf{Training} & \textbf{Greedy} & \textbf{Nucleus} & \textbf{Ancestral} & \textbf{Avg. Gap} \\
\midrule

\multirow{9}{*}{\rotatebox[origin=c]{90}{GPT-Neo}}
& $8$  & $16$ & $456.9$ & $366.1$ & $414.5$ & $418.5$ & $57.2$ \\
& $8$  & $32$ & $511.4$ & $394.5$ & $452.5$ & $458.4$ & $76.3$ \\
& $8$  & $64$ & $558.7$ & $416.7$ & $485.1$ & $494.9$ & $93.1$ \\
& $16$ & $16$ & $482.0$ & $438.4$ & $456.9$ & $460.0$ & $30.2$ \\
& $16$ & $32$ & $520.4$ & $460.1$ & $483.0$ & $488.2$ & $43.3$ \\
& $16$ & $64$ & $558.8$ & $479.5$ & $504.9$ & $512.3$ & $59.9$ \\
& $32$ & $16$ & $513.8$ & $494.9$ & $502.6$ & $505.5$ & $12.8$ \\
& $32$ & $32$ & $536.6$ & $507.3$ & $517.6$ & $519.9$ & $21.7$ \\
& $32$ & $64$ & $563.9$ & $519.3$ & $530.6$ & $535.5$ & $35.4$ \\

\midrule

\multirow{9}{*}{\rotatebox[origin=c]{90}{OLMo}}
& $8$  & $16$ & $462.9$ & $410.3$ & $433.0$ & $439.4$ & $35.3$ \\
& $8$  & $32$ & $520.0$ & $456.5$ & $484.9$ & $489.0$ & $43.2$ \\
& $8$  & $64$ & $575.6$ & $502.1$ & $534.0$ & $541.2$ & $49.8$ \\
& $16$ & $16$ & $487.8$ & $456.9$ & $470.5$ & $474.0$ & $20.7$ \\
& $16$ & $32$ & $530.4$ & $488.0$ & $506.0$ & $510.4$ & $28.9$ \\
& $16$ & $64$ & $576.9$ & $521.7$ & $545.4$ & $549.2$ & $38.1$ \\
& $32$ & $16$ & $524.0$ & $506.1$ & $513.6$ & $516.0$ & $12.1$ \\
& $32$ & $32$ & $551.4$ & $524.7$ & $534.8$ & $538.1$ & $18.9$ \\
& $32$ & $64$ & $585.2$ & $546.9$ & $561.1$ & $564.2$ & $27.8$ \\

\midrule

\multirow{9}{*}{\rotatebox[origin=c]{90}{Pythia}}
& $8$  & $16$ & $463.3$ & $374.9$ & $423.0$ & $429.7$ & $54.1$ \\
& $8$  & $32$ & $519.2$ & $415.3$ & $468.2$ & $475.2$ & $66.3$ \\
& $8$  & $64$ & $568.2$ & $452.9$ & $506.7$ & $509.5$ & $78.5$ \\
& $16$ & $16$ & $488.6$ & $449.1$ & $465.9$ & $469.3$ & $27.2$ \\
& $16$ & $32$ & $527.7$ & $480.1$ & $496.6$ & $499.3$ & $35.7$ \\
& $16$ & $64$ & $567.7$ & $510.6$ & $519.8$ & $523.4$ & $49.8$ \\
& $32$ & $16$ & $521.6$ & $503.4$ & $511.5$ & $513.7$ & $12.1$ \\
& $32$ & $32$ & $545.0$ & $519.5$ & $528.3$ & $530.2$ & $19.0$ \\
& $32$ & $64$ & $572.8$ & $536.4$ & $539.8$ & $542.7$ & $33.2$ \\

\bottomrule
\end{tabular}
}
\end{table}

\subsection{Conditional Diversity Gaps Beyond Language Models}
\label{sec:gap_beyond_llms}

Figure~\ref{fig:imagenet_gap} and Figure~\ref{fig:coco_gap} report conditional VNE across sample sizes from $2.5$K to $20$K for class-conditioned ImageNet generation and text-conditioned MS-COCO generation, respectively. In both settings, the real-data reference consistently achieves higher conditional VNE than the generated samples under matched conditions. We also observe that the gap generally increases with sample size, as the real-data conditional VNE grows faster than the generated-sample conditional VNE. This trend is consistent with the LLM experiments in Section~\ref{sec:llm_empirical_gap}.

\begin{figure}

\begin{center}
\centerline{\includegraphics[width=\textwidth]{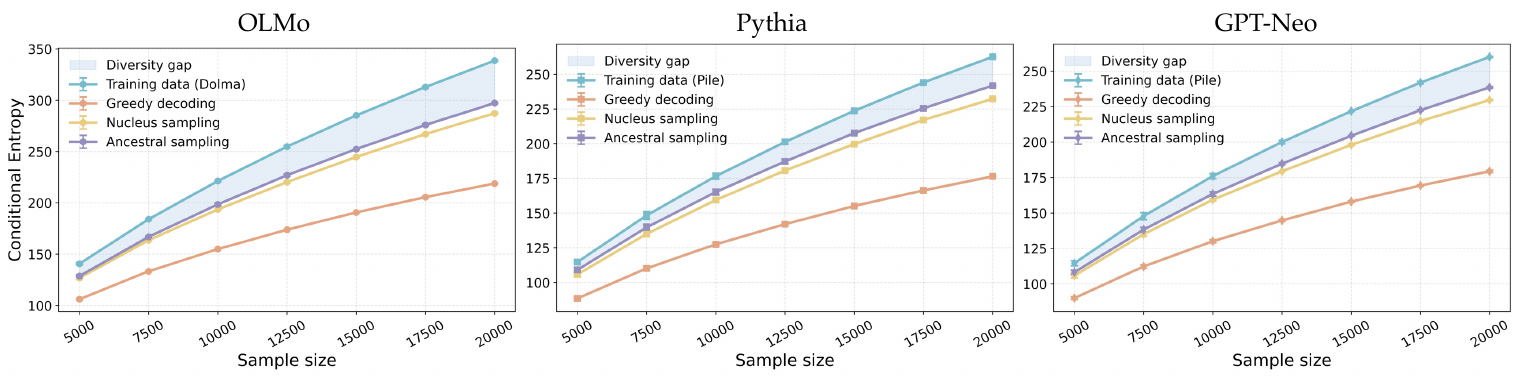}}
\caption{
Conditional diversity gaps between training data and generated data in language models.
}
\label{fig:llm_sample_size_gap}
\end{center}

\end{figure}


\subsection{Entropy-Projected Reweighting for Conditional Generation}
\label{sec:entropy_projected_reweighting}

We next evaluate the proposed entropy-projected reweighting method on language generation. For each prefix, we generate 10 candidate continuations and reweight them using the conditional entropy projection in Section~\ref{sec:conditional_reweighting}. 
Table~\ref{tab:projection_conditional_diversity} compares the original uniformly weighted candidates, denoted as ``w/o Projection'', with the reweighted candidates, denoted as ``w/ Projection''.
These results indicate that the proposed projection can increase the measured conditional diversity of generated samples without retraining the model or generating new outputs. 

\begin{table}[t]
\centering
\caption{
Effect of entropy-projected reweighting on conditional diversity.
We report exponential of conditional VNE and conditional RKE at sample sizes 10K and 20K, averaged over 5 runs.
}
\label{tab:projection_conditional_diversity}
\resizebox{\linewidth}{!}{
\begin{tabular}{llcccc}
\toprule
\textbf{Model} & \textbf{Setting}
& \multicolumn{2}{c}{\textbf{Sample Size = 10K}}
& \multicolumn{2}{c}{\textbf{Sample Size = 20K}} \\
\cmidrule(lr){3-4}
\cmidrule(lr){5-6}
& & \textbf{Exp.Cond.VNE} & \textbf{Exp.Cond.RKE}
  & \textbf{Exp.Cond.VNE} & \textbf{Exp.Cond.RKE} \\
\midrule

\multirow{2}{*}{OLMo}
& w/o Projection & 228.17\(\pm 2.54 \) & 222.55\(\pm 4.28\) & 366.35\(\pm 2.01\) & 241.14\(\pm 4.27\) \\
& w/ Projection  & 233.88\(\pm 1.89\) & 257.25\(\pm 6.37\) & 376.77\(\pm 1.90\) & 284.92\(\pm 5.34\) \\

\midrule
\multirow{2}{*}{Pythia}
& w/o Projection & 230.55\(\pm 1.46 \)& 217.22\(\pm 6.02 \) & 371.72\(\pm 1.63\) & 236.24\(\pm 3.98\) \\
& w/ Projection  & 236.94\(\pm 1.84\) & 254.40\(\pm 5.69\) & 382.74\(\pm 1.38\) & 285.05\(\pm 4.11\) \\

\midrule
\multirow{2}{*}{GPT-Neo}
& w/o Projection & 211.18\(\pm 1.92\) & 225.51\(\pm 4.85\) & 340.90\(\pm 1.48\) & 246.13\(\pm 4.26\) \\
& w/ Projection  & 219.08\(\pm 0.70\) & 268.23\(\pm 6.24\) & 353.68\(\pm 2.20\) & 296.02\(\pm 3.01\) \\

\bottomrule
\end{tabular}
}
\end{table}

We also compare the proposed projection with a temperature-scaling baseline and conduct a qualitative probe of the reweighting behavior; both are reported in Appendices~\ref{app:temperature_baseline} and~\ref{app:depeaking_probe}. The temperature comparison shows that matching the conditional entropy alone is insufficient, and the probe confirms that the entropy gain comes from de-peaking valid modes rather than promoting low-quality outputs.

\subsection{Conditional VNE Guidance for Diffusion Sampling}

We further evaluate whether the conditional VNE score can be used not only as an evaluation metric, but also as a sampling-time guidance signal for conditional generation. We apply conditional VNE guidance to SDXL on MS-COCO captions. Detailed implementation is provided in Appendix~\ref{app:conditional_vne_guidance}.
Table~\ref{tab:mscoco_sdxl_guidance_cvs} reports the results. SDXL with conditional VNE guidance achieves higher conditional VNE than SDXL without guidance across all evaluated sample sizes.

\begin{table}[t]
\centering
\caption{Text-conditioned image generation on MS-COCO using SDXL with and without the guidance.
We report exponential of conditional VNE as mean $\pm$ standard deviation over 5 independent runs.
}
\label{tab:mscoco_sdxl_guidance_cvs}
\resizebox{\textwidth}{!}{
\begin{tabular}{lccccc}
\toprule
\textbf{Data / Model}
& \textbf{10K}
& \textbf{12.5K}
& \textbf{15K}
& \textbf{17.5K}
& \textbf{20K} \\
\midrule
Reference MS-COCO
& $28.51 \pm 0.06$
& $31.93 \pm 0.06$
& $35.03 \pm 0.03$
& $37.92 \pm 0.04$
& $40.52 \pm 0.03$ \\
SDXL w/o Guidance
& $22.51 \pm 0.09$
& $24.76 \pm 0.07$
& $26.76 \pm 0.07$
& $28.58 \pm 0.04$
& $30.25 \pm 0.04$ \\
SDXL w/ Guidance
& $23.75 \pm 0.09$
& $26.21 \pm 0.06$
& $28.44 \pm 0.07$
& $30.43 \pm 0.04$
& $32.27 \pm 0.02$ \\
\bottomrule
\end{tabular}
}
\end{table}

\subsection{Downstream Application: Conditional VNE-Weighted MBR Decoding}
\label{sec:cvs_mbr}

We next evaluate the proposed reweighting framework on a downstream task, model-based minimum Bayes risk (MBR) decoding~\cite{jinnai2024mbmbmbr}, where replacing the original candidate weights with the conditional entropy projection yields a conditional-Vendi-weighted variant (CVS-MBR) that we compare with Monte Carlo MBR (MC-MBR), model-based MBR (MBMBR), and its length-normalized variant (MBMBR-L) on three summarization settings; implementation details are provided in Appendix~\ref{app:evaluation_hyperparams}. Table~\ref{tab:cvs_mbr} reports conditional VNE together with ROUGE-L, BERTScore, Distinct-$2$, and Self-BLEU-$2$. CVS-MBR achieves the highest conditional VNE and Distinct-$2$, as well as the lowest Self-BLEU-$2$, across all three settings, while a paired bootstrap shows no statistically significant degradation in ROUGE-L or BERTScore relative to MBMBR-L, demonstrating a diversity-oriented quality--diversity trade-off.

\begin{table}[t]
\centering
\caption{
Conditional VNE-weighted MBR decoding on three summarization settings.
We report conditional VNE, ROUGE-L, BERTScore, Distinct-$2$, and Self-BLEU-$2$.
Arrows indicate the desirable direction for each metric.
CVS-MBR denotes the proposed conditional-Vendi-weighted variant.
}
\label{tab:cvs_mbr}
\resizebox{\textwidth}{!}{
\begin{tabular}{llccccc}
\toprule
\textbf{Setting} & \textbf{Method}
& \textbf{Cond.VNE} $\uparrow$ & \textbf{ROUGE-L} $\uparrow$
& \textbf{BERTScore} $\uparrow$ & \textbf{Distinct-2} $\uparrow$ & \textbf{Self-BLEU-2} $\downarrow$ \\
\midrule

\multirow{4}{*}{\shortstack{XSum\\BART-large-XSum}}
& MC-MBR   & $27.88$ & $0.335$ & $0.915$ & $0.936$ & $0.209$ \\
& MBMBR    & $4.93$  & $0.346$ & $0.918$ & $0.929$ & $0.212$ \\
& MBMBR-L  & $26.44$ & $0.350$ & $0.917$ & $0.938$ & $0.207$ \\
& CVS-MBR  & $28.33$ & $0.331$ & $0.915$ & $0.945$ & $0.197$ \\
\midrule

\multirow{4}{*}{\shortstack{CNN/DailyMail\\BART-large-CNN}}
& MC-MBR   & $33.00$ & $0.268$ & $0.879$ & $0.944$ & $0.186$ \\
& MBMBR    & $3.35$  & $0.273$ & $0.880$ & $0.945$ & $0.184$ \\
& MBMBR-L  & $28.80$ & $0.266$ & $0.879$ & $0.948$ & $0.174$ \\
& CVS-MBR  & $33.34$ & $0.272$ & $0.879$ & $0.951$ & $0.165$ \\
\midrule

\multirow{4}{*}{\shortstack{CNN/DailyMail\\Mistral-7B-Instruct}}
& MC-MBR   & $12.20$ & $0.256$ & $0.880$ & $0.919$ & $0.233$ \\
& MBMBR    & $4.62$  & $0.254$ & $0.881$ & $0.924$ & $0.229$ \\
& MBMBR-L  & $13.15$ & $0.261$ & $0.881$ & $0.920$ & $0.234$ \\
& CVS-MBR  & $14.09$ & $0.261$ & $0.881$ & $0.926$ & $0.224$ \\
\bottomrule
\end{tabular}
}
\end{table}

\textbf{Ablation Studies.}
\label{sec:ablation_studies}
To examine the robustness of the conditional diversity gaps, we conduct ablation studies under different experimental choices. Specifically, we vary the evaluation sample size, the kernel function, and the embedding model. For the language-model experiments, we further vary the prefix--continuation construction by considering prefix token lengths in $\{3,4,5\}$ and continuation token lengths in $\{2,3,4\}$. For the long-sequence experiments, we additionally perform a chunk-size ablation over chunk sizes $\{1,2,4\}$ used in the segment concatenation representation. For the sample-size analysis, we draw subsamples from 2.5k to 20k with different random seeds out of 30k samples. Full results are provided in the Appendix.

\section{Conclusion and Limitations}
\label{sec:conclusion}
We studied whether LLMs and conditional generative models capture the full range of valid outputs present in their training distributions. Using conditional entropy and its matrix-based von Neumann analogue, we observed a consistent conditional output-range gap across open language models and conditional image-generation settings. We specifically showed the concavity of kernel-induced conditional von Neumann entropy, supporting both its use as a diagnostic and the convexity of entropy-constrained projection. Building on this structure, we proposed a post-hoc reweighting method that preserves the input marginal while redistributing mass across generated candidates, with an efficient product-simplex mirror-descent implementation using sketched joint features.

The proposed entropy-based view complements standard evaluations of factuality, alignment, and semantic correctness. Our direct training-data comparison focuses on models for which the relevant training data are publicly available or reconstructable; for many widely used LLMs and generative AI models, the training data are not publicly accessible, making the analysis more difficult but potentially valuable. These considerations suggest natural extensions to closed-source systems and alternative feature choices.

\bibliographystyle{unsrt}
\bibliography{ref}

\begin{appendices}
\section{Related Work}
\label{sec:related}
\textbf{Evaluation of conditional generative AI.}
Evaluation in conditional generation is shaped primarily by fidelity, alignment, and preference-based criteria rather than by intrinsic variety under fixed conditions. In text-to-image generation, representative examples include CLIPScore~\citep{hessel2021clipscore}, TIFA~\citep{hu2023tifa}, ImageReward~\citep{xu2023imagereward}, and more recent conditional fidelity metrics such as cFreD~\citep{koo2025cfred}. In open-ended text generation, MAUVE~\citep{pillutla2021mauve} has become a prominent distributional metric, while lexical diversity diagnostics such as distinct-$n$~\citep{li2016diversity} and Self-BLEU~\citep{zhu2018texygen} remain common. In image captioning, standard evaluation still centers on BLEU~\citep{papineni2002bleu}, CIDEr~\citep{vedantam2015cider}, and SPICE~\citep{anderson2016spice}. In text-to-video generation, common practice combines video-quality metrics such as FVD~\citep{unterthiner2019fvd} with newer alignment-oriented metrics such as T2VScore~\citep{wu2024t2vscore} and evaluation suites such as FETV~\citep{liu2023fetv}. Also, the Scendi score \citep{ospanov2025scendi} and Conditional Vendi/RKE scores \citep{jalali2024conditional,jalali2025sparke} provide conditional diversity scores for prompt-aware diversity quantification. These works have substantially advanced benchmarking for conditional generation, but they are not designed to isolate the variety that a model contributes under a fixed condition. Our work complements this literature by focusing specifically on conditional variety as a first-class evaluation target.

\textbf{Variety shortfall, novelty, and memorization in modern generative AI.}
A growing empirical literature suggests that diversity shortfall is a real phenomenon in modern generative systems. Specifically, Farnia et al.~\citep{farnia2026exposing} conduct a diversity gap analysis for unconditional generative models, numerically showing a von Neumann entropy gap between standard generative models and their training data, which we extend to the conditional generation case by analyzing the conditional von Neumann entropy function. In image generation, Dombrowski et al.~\citep{dombrowski2025diversity} report that current state-of-the-art models cover only a limited fraction of the diversity present in the training distribution. 
In LLMs, Yun et al.~\citep{yun2025price} identify \emph{diversity collapse}, where formatting and instruction structure drive semantically similar outputs even under decoding regimes intended to encourage variety. In image captioning, Wang and Chan~\citep{wang2019describing} show that models optimized for standard captioning metrics often produce generic captions and remain far below human performance in diversity. Related concerns arise in the memorization and novelty literature. Carlini et al.~\citep{carlini2021extracting} demonstrate that large language models can regurgitate memorized training data, while Carlini et al.~\citep{carlini2023extracting} show analogous extraction phenomena for diffusion models. More recent empirical studies also investigate memorization behavior in diffusion models directly~\citep{gu2024memorization}. Our work differs in that it uses the conditional kernel entropy to analyze a shortfall relative to the empirical training distribution.

\textbf{Entropy-based analysis of LLMs.}
Entropy has recently become a useful tool for studying uncertainty, reliability, and training dynamics in LLMs. Semantic entropy~\citep{kuhn2023semantic,farquhar2024detecting} estimates uncertainty over meanings rather than surface forms and has been used for hallucination detection and reliability assessment. Other work uses entropy directly in training or fine-tuning: Li et al.~\citep{li2025preserving} connect diversity preservation in supervised fine-tuning to entropy-regularized objectives, while Shen~\citep{shen2026entropy} studies entropy control in LLM reinforcement learning. These works use entropy mainly for uncertainty estimation, hallucination detection, or training-time regularization. Our focus is different: we compare the conditional entropy of model-generated outputs against the corresponding training distribution and study the resulting conditional output-range gap.

\textbf{Matrix-entropy methods for LLMs.}
Closely related to our use of matrix-based entropy are recent works applying von Neumann or spectral entropy to LLM outputs and representations. Nikitin et al.~\citep{nikitin2024kernel} propose Kernel Language Entropy (KLE), a method for uncertainty quantification that builds positive semidefinite unit-trace kernels over LLM outputs and measures semantic uncertainty using von Neumann entropy. Vendi score~\citep{friedman2023vendi,ospanov2024towards} uses von Neumann entropy of the kernel matrices for evaluating the variety of the generated data, which has been similarly formulated for order-2 Rényi entropy as the RKE score \citep{jalali2023information}. Wei et al.~\citep{wei2024differank} use matrix-entropy-inspired representation spectra for LLM evaluation, showing that rank/entropy-based representation measures correlate with model scale and standard performance indicators. Arefin et al.~\citep{arefin2025seqvcr} use matrix-based entropy to diagnose representation collapse in intermediate Transformer layers and propose a regularizer to increase representation richness for reasoning. These works demonstrate that spectral entropy is informative for LLM uncertainty, representation quality, and training behavior. However, they primarily analyze unconditional output sets or internal representations. On the other hand, our work studies \emph{conditional} entropy and compares model outputs directly to their training data.

\textbf{Entropy regularization and predictive uncertainty.}
Finally, our work is related in spirit to entropy-promoting regularization in supervised learning. Confidence-penalty regularization~\citep{pereyra2017regularizing} and label smoothing~\citep{szegedy2016rethinking} both discourage overly sharp predictive distributions and have been shown to improve deep models across tasks. More recent work has emphasized that real-world supervised datasets may contain substantial ambiguity even when only a single hard label is observed, as reflected for example in human label distributions~\citep{peterson2019human} and critiques of the single-label assumption in ImageNet-style benchmarks~\citep{anzaku2024reassessing}. Although these works are not about generative diversity evaluation, they share a common theme with our analysis: finite datasets can under-represent the uncertainty or variability of the underlying task, and entropy-promoting mechanisms can partially compensate for this mismatch.

\section{Extended Preliminaries}
\label{app:preliminaries_extended}

This appendix records the population-level operator definitions corresponding to the empirical quantities in Section~\ref{sec:preliminaries}. Throughout, $k_X$ and $k_Y$ are normalized positive semidefinite kernels with feature maps
\[
\phi_X:\mathcal X\to\mathcal H_X,
\qquad
\phi_Y:\mathcal Y\to\mathcal H_Y,
\]
so that $k_X(x,x')=\langle \phi_X(x),\phi_X(x')\rangle_{\mathcal H_X}$, $k_Y(y,y')=\langle \phi_Y(y),\phi_Y(y')\rangle_{\mathcal H_Y}$, and
\[
\|\phi_X(x)\|_{\mathcal H_X}^2=1,\qquad
\|\phi_Y(y)\|_{\mathcal H_Y}^2=1.
\]
The product kernel on paired samples is induced by the tensor-product feature map
\[
\psi(x,y)=\phi_X(x)\otimes\phi_Y(y),
\]
since
\[
\langle \psi(x,y),\psi(x',y')\rangle_{\mathcal H_X\otimes\mathcal H_Y}
=
k_X(x,x')k_Y(y,y').
\]

For a joint distribution $P_{XY}$, define the joint and marginal covariance operators
\[
C_{XY}(P)
=
\mathbb E_{(X,Y)\sim P}
\bigl[
\psi(X,Y)\psi(X,Y)^\ast
\bigr],
\qquad
C_X(P_X)
=
\mathbb E_{X\sim P_X}
\bigl[
\phi_X(X)\phi_X(X)^\ast
\bigr].
\]
The normalization of the kernels implies
\[
\operatorname{Tr}(C_{XY}(P))=1,
\qquad
\operatorname{Tr}(C_X(P_X))=1,
\]
so these are trace-one positive semidefinite operators whenever the expectations are well defined. The population conditional von Neumann entropy is
\[
\mathcal H_{\mathrm{vN}}(Y\mid X;P)
:=
H_{\mathrm{vN}}(C_{XY}(P))
-
H_{\mathrm{vN}}(C_X(P_X)).
\]
This is the population operator analogue of the empirical quantity
\[
H_{\mathrm{vN}}\bigl(\frac1nK_X\odot K_Y\bigr)
-
H_{\mathrm{vN}}\bigl(\frac1nK_X\bigr).
\]
Indeed, for the empirical distribution $\widehat P_n=\frac1n\sum_{i=1}^n\delta_{(x_i,y_i)}$, the nonzero eigenvalues of the empirical covariance operator
\[
C_{XY}(\widehat P_n)
=
\frac1n\sum_{i=1}^n
\psi(x_i,y_i)\psi(x_i,y_i)^\ast
\]
coincide with the nonzero eigenvalues of $\frac1nK_{XY}$, where $K_{XY}=K_X\odot K_Y$. Similarly, the nonzero eigenvalues of $C_X(\widehat P_{n,X})$ coincide with those of $\frac1nK_X$. Thus the Gram-matrix formula in the main text is exactly the finite-sample representation of the operator definition.

The order-$2$ version is defined at the population level by
\[
\mathcal H_2(Y\mid X;P)
:=
H_2(C_{XY}(P))-H_2(C_X(P_X)),
\qquad
H_2(A):=-\log\operatorname{Tr}(A^2).
\]
For empirical samples, this becomes
\[
H_2\bigl(\frac1nK_{XY}\bigr)
-
H_2\bigl(\frac1nK_X\bigr)
=
\log\frac{\|K_X\|_F^2}{\|K_X\odot K_Y\|_F^2}.
\]
The equality follows from
\[
\operatorname{Tr}\bigl(\bigl(\frac1nK_X\bigr)^2\bigr)
=
\frac{1}{n^2}\|K_X\|_F^2,
\qquad
\operatorname{Tr}\bigl(\bigl(\frac1nK_{XY}\bigr)^2\bigr)
=
\frac{1}{n^2}\|K_X\odot K_Y\|_F^2.
\]

The normalized-kernel assumption is used throughout to ensure that the covariance operators have unit trace and hence can be interpreted as density operators for von Neumann entropy. If a kernel $k$ satisfies $k(z,z)>0$, the normalized kernel
\[
\widetilde k(z,z')
=
\frac{k(z,z')}{\sqrt{k(z,z)k(z',z')}}
\]
is positive semidefinite and satisfies $\widetilde k(z,z)=1$. In infinite-dimensional settings, the definitions above are understood for trace-class covariance operators with finite entropy; all empirical Gram-matrix quantities used in the experiments are finite-dimensional and therefore well defined.

\section{Proofs and Additional Theoretical Results}
\label{app:proofs_additional_results}


\subsection{Proof of Proposition~\ref{prop:shannon_cond_concave_main}}

\begin{proof}
Let $P^{(0)}_{XY}$ and $P^{(1)}_{XY}$ be two joint distributions on finite alphabets $\mathcal X$ and $\mathcal Y$, and let
\[
P^\lambda_{XY}
=
(1-\lambda)P^{(0)}_{XY}+\lambda P^{(1)}_{XY},
\qquad
\lambda\in[0,1].
\]
We use the convention $0\log 0=0$. For any joint law $P_{XY}$,
\[
H(Y\,\vert\, X;P)
=
-\sum_{x,y}P_{XY}(x,y)
\log\frac{P_{XY}(x,y)}{P_X(x)}.
\]
Fix $x\in\mathcal X$. Applying the log-sum inequality to the two collections
\[
a_y=(1-\lambda)P^{(0)}_{XY}(x,y),
\qquad
b_y=\lambda P^{(1)}_{XY}(x,y)
\]
gives
\[
\sum_y P^\lambda_{XY}(x,y)
\log\frac{P^\lambda_{XY}(x,y)}{P^\lambda_X(x)}
\le
(1-\lambda)
\sum_y P^{(0)}_{XY}(x,y)
\log\frac{P^{(0)}_{XY}(x,y)}{P^{(0)}_X(x)}
+
\lambda
\sum_y P^{(1)}_{XY}(x,y)
\log\frac{P^{(1)}_{XY}(x,y)}{P^{(1)}_X(x)}.
\]
Multiplying by $-1$ and summing over $x$ yields
\[
H(Y\,\vert\, X;P^\lambda)
\ge
(1-\lambda)H(Y\,\vert\, X;P^{(0)})
+
\lambda H(Y\,\vert\, X;P^{(1)}),
\]
which proves concavity.
\end{proof}

\subsection{Proof of Proposition~\ref{thm:conditional_vne_concave_main}}

We first make explicit the operator representation used in the proof. Let $k_X$ and $k_Y$ be normalized positive semidefinite kernels with feature maps
\[
\phi_X:\mathcal X\to\mathcal H_X,
\qquad
\phi_Y:\mathcal Y\to\mathcal H_Y,
\]
satisfying
\[
\|\phi_X(x)\|_{\mathcal H_X}^2=k_X(x,x)=1,
\qquad
\|\phi_Y(y)\|_{\mathcal H_Y}^2=k_Y(y,y)=1.
\]
For a joint distribution $P_{XY}$, define
\[
C_{XY}(P)
=
\mathbb E_{(X,Y)\sim P}
\Bigl[
(\phi_X(X)\otimes\phi_Y(Y))
(\phi_X(X)\otimes\phi_Y(Y))^\ast
\Bigr],
\]
and
\[
C_X(P_X)
=
\mathbb E_{X\sim P_X}
\Bigl[
\phi_X(X)\phi_X(X)^\ast
\Bigr].
\]
The normalized-kernel assumption gives
\[
\operatorname{Tr}(C_{XY}(P))=1,
\qquad
\operatorname{Tr}(C_X(P_X))=1.
\]
Thus, these covariance operators are density operators whenever the relevant entropies are finite. The conditional von Neumann entropy used in the paper is
\[
\mathcal H_{\mathrm{vN}}(Y\,\vert\, X;P)
=
H_{\mathrm{vN}}(C_{XY}(P))
-
H_{\mathrm{vN}}(C_X(P_X)).
\]
For empirical Gram matrices and finite-dimensional sketched features, all operators below are finite-dimensional. For general kernels, the same argument applies under the standard trace-class and finite-entropy conditions for the corresponding density operators.

\begin{proof}
Let
\[
\rho_{XY}(P):=C_{XY}(P).
\]
We first identify the correct marginal operator. For a rank-one tensor term,
\[
(\phi_X(x)\otimes\phi_Y(y))
(\phi_X(x)\otimes\phi_Y(y))^\ast,
\]
taking the partial trace over $\mathcal H_Y$ gives
\[
\operatorname{Tr}_{Y}
\Bigl[
(\phi_X(x)\otimes\phi_Y(y))
(\phi_X(x)\otimes\phi_Y(y))^\ast
\Bigr]
=
\|\phi_Y(y)\|_{\mathcal H_Y}^2
\phi_X(x)\phi_X(x)^\ast.
\]
Since $k_Y(y,y)=1$, this equals $\phi_X(x)\phi_X(x)^\ast$. Taking expectation over $(X,Y)\sim P$ yields
\[
\operatorname{Tr}_Y \rho_{XY}(P)=C_X(P_X).
\]
Therefore, we have
\[
\mathcal H_{\mathrm{vN}}(Y\,\vert\, X;P)
=
H_{\mathrm{vN}}(\rho_{XY}(P))
-
H_{\mathrm{vN}}(\operatorname{Tr}_Y\rho_{XY}(P)).
\]

Now take two joint distributions $P^{(0)}_{XY}$ and $P^{(1)}_{XY}$, and define
\[
P^\lambda_{XY}
=
(1-\lambda)P^{(0)}_{XY}+\lambda P^{(1)}_{XY},
\qquad
\lambda\in[0,1].
\]
The covariance map is affine, so
\[
\rho_{XY}(P^\lambda)
=
(1-\lambda)\rho_{XY}(P^{(0)})
+
\lambda\rho_{XY}(P^{(1)}).
\]
We set the following:
\[
\rho^{(0)}_{XY}:=\rho_{XY}(P^{(0)}),
\qquad
\rho^{(1)}_{XY}:=\rho_{XY}(P^{(1)}),
\qquad
\rho^\lambda_{XY}:=(1-\lambda)\rho^{(0)}_{XY}+\lambda\rho^{(1)}_{XY}.
\]

We consider an auxiliary two-dimensional classical register $R$ with orthonormal basis vectors $e_0,e_1$, and define the block-diagonal density operator
\[
\Omega_{RXY}
=
(1-\lambda)(e_0e_0^\ast)\otimes \rho^{(0)}_{XY}
+
\lambda(e_1e_1^\ast)\otimes \rho^{(1)}_{XY}.
\]
Its $XY$ marginal is $
\Omega_{XY}=\rho^\lambda_{XY}$. Hence,
\[
H_{\mathrm{vN}}(Y\,\vert\, X)_{\Omega}
=
H_{\mathrm{vN}}(\rho^\lambda_{XY})
-
H_{\mathrm{vN}}(\operatorname{Tr}_Y\rho^\lambda_{XY})
=
\mathcal H_{\mathrm{vN}}(Y\,\vert\, X;P^\lambda).
\]

We next compute the conditional entropy after additionally conditioning on the classical register $R$. Since $\Omega_{RXY}$ is block diagonal,
\[
H_{\mathrm{vN}}(\Omega_{RXY})
=
h(\lambda)
+
(1-\lambda)H_{\mathrm{vN}}(\rho^{(0)}_{XY})
+
\lambda H_{\mathrm{vN}}(\rho^{(1)}_{XY}),
\]
where
\[
h(\lambda)=-(1-\lambda)\log(1-\lambda)-\lambda\log\lambda.
\]
Similarly,
\[
\Omega_{RX}
=
(1-\lambda)(e_0e_0^\ast)\otimes \operatorname{Tr}_Y\rho^{(0)}_{XY}
+
\lambda(e_1e_1^\ast)\otimes \operatorname{Tr}_Y\rho^{(1)}_{XY},
\]
and therefore
\[
H_{\mathrm{vN}}(\Omega_{RX})
=
h(\lambda)
+
(1-\lambda)H_{\mathrm{vN}}(\operatorname{Tr}_Y\rho^{(0)}_{XY})
+
\lambda H_{\mathrm{vN}}(\operatorname{Tr}_Y\rho^{(1)}_{XY}).
\]
Subtracting the last two equations results in
\[
H_{\mathrm{vN}}(Y\,\vert\, XR)_{\Omega}
=
(1-\lambda)
\Bigl[
H_{\mathrm{vN}}(\rho^{(0)}_{XY})
-
H_{\mathrm{vN}}(\operatorname{Tr}_Y\rho^{(0)}_{XY})
\Bigr]
+
\lambda
\Bigl[
H_{\mathrm{vN}}(\rho^{(1)}_{XY})
-
H_{\mathrm{vN}}(\operatorname{Tr}_Y\rho^{(1)}_{XY})
\Bigr].
\]
Equivalently,
\[
H_{\mathrm{vN}}(Y\,\vert\, XR)_{\Omega}
=
(1-\lambda)\mathcal H_{\mathrm{vN}}(Y\,\vert\, X;P^{(0)})
+
\lambda\mathcal H_{\mathrm{vN}}(Y\,\vert\, X;P^{(1)}).
\]

Using the strong subadditivity of von Neumann entropy \cite{LiebRuskai1973} implies the monotonicity of quantum conditional entropy under additional conditioning:
\[
H_{\mathrm{vN}}(Y\,\vert\, X)_{\Omega}
\ge
H_{\mathrm{vN}}(Y\,\vert\, XR)_{\Omega}.
\]
Combining the identities above gives
\[
\mathcal H_{\mathrm{vN}}(Y\,\vert\, X;P^\lambda)
\ge
(1-\lambda)\mathcal H_{\mathrm{vN}}(Y\,\vert\, X;P^{(0)})
+
\lambda\mathcal H_{\mathrm{vN}}(Y\,\vert\, X;P^{(1)}),
\]
which finishes the concavity proof for the kernel-induced matrix-based conditional entropy.
\end{proof}

\subsection{Additional Theoretical Results on Finite-Sample Conditional Entropy}
\label{app:finite_sample_conditional_entropy}

\begin{corollary}[Finite-sample underestimation and monotonicity for Shannon conditional entropy]
\label{cor:shannon_cond_bias_monotone_app}
Let $\widehat P_n$ be the empirical joint distribution of $n$ i.i.d.\ samples from a distribution $P_{XY}$ on finite alphabets. Then
\[
\mathbb E[H(Y\,\vert\, X;\widehat P_n)]
\le
H(Y\,\vert\, X;P).
\]
Moreover, for all integers $m\le n$,
\[
\mathbb E[H(Y\,\vert\, X;\widehat P_m)]
\le
\mathbb E[H(Y\,\vert\, X;\widehat P_n)].
\]
\end{corollary}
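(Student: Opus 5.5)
Both claims follow from the concavity of $P_{XY}\mapsto H(Y\,\vert\, X;P)$ (Proposition~\ref{prop:shannon_cond_concave_main}), combined with Jensen's inequality and an exchangeability (subsampling) representation of empirical measures.

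\emph{Underestimation.} With $\widehat P_n=\frac1n\sum_{k=1}^n\delta_{(X_k,Y_k)}$ and i.i.d.\ samples from $P_{XY}$, we have $\mathbb E[\widehat P_n]=P_{XY}$ entrywise on the finite simplex. The map $P\mapsto H(Y\,\vert\, X;P)$ is concave and continuous on the probability simplex over $\mathcal X\times\mathcal Y$; continuity holds because the convention $0\log 0=0$ makes $-\sum_{x,y}P(x,y)\log\bigl(P(x,y)/P_X(x)\bigr)$ continuous, using $P(x,y)\le P_X(x)$. Since $\widehat P_n$ takes finitely many values, Jensen's inequality for concave functions gives $\mathbb E[H(Y\,\vert\, X;\widehat P_n)]\le H(Y\,\vert\, X;\mathbb E\widehat P_n)=H(Y\,\vert\, X;P)$.

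\emph{Monotonicity.} For $m\le n$, we write $\widehat P_n$ as an average of subsample empirical measures. For each $m$-subset $S\subseteq\{1,\ldots,n\}$, let $\widehat P_S=\frac1m\sum_{k\in S}\delta_{(X_k,Y_k)}$. Each index appears in exactly $\binom{n-1}{m-1}$ of the $\binom nm$ subsets, so
\[
\widehat P_n=\binom{n}{m}^{-1}\sum_{|S|=m}\widehat P_S.
\]
Concavity applied to this convex combination gives, pointwise,
\[
H(Y\,\vert\, X;\widehat P_n)\ge\binom{n}{m}^{-1}\sum_{|S|=m}H(Y\,\vert\, X;\widehat P_S).
\]
Taking expectations, each $\widehat P_S$ has the same law as $\widehat P_m$ because the samples are i.i.d. This yields $\mathbb E[H(Y\,\vert\, X;\widehat P_n)]\ge\mathbb E[H(Y\,\vert\, X;\widehat P_m)]$.

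There is no real obstacle. The only points to verify are the counting identity that makes the subset average exactly $\widehat P_n$, and that concavity extends from two-point mixtures (as in Proposition~\ref{prop:shannon_cond_concave_main}) to finite convex combinations and to Jensen's inequality. Both follow by standard induction on the number of terms.
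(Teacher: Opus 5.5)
Your proof is correct and follows essentially the same route as the paper: Jensen's inequality with the concavity of Proposition~\ref{prop:shannon_cond_concave_main} gives underestimation, and for monotonicity $\widehat P_n$ is written as an average of subsample measures $\widehat P_S$ over $m$-subsets. The paper phrases this via a uniformly random subset $S$ drawn without replacement and the identity $\mathbb E[\widehat P_S\,\vert\, Z_1,\ldots,Z_n]=\widehat P_n$, which is exactly your explicit average over all $\binom{n}{m}$ subsets.
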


\begin{proof}
The first inequality follows from Jensen's inequality and Proposition~\ref{prop:shannon_cond_concave_main}:
\[
\mathbb E[H(Y\,\vert\, X;\widehat P_n)]
\le
H(Y\,\vert\, X;\mathbb E[\widehat P_n])
=
H(Y\,\vert\, X;P).
\]

For monotonicity, let $Z_1,\ldots,Z_n$ be the paired samples, where $Z_i=(X_i,Y_i)$. Conditional on $Z_1,\ldots,Z_n$, draw a subset $S\subseteq\{1,\ldots,n\}$ of size $m$ uniformly without replacement, and let $\widehat P_S$ be the empirical distribution of the selected samples. Then
\[
\mathbb E[\widehat P_S\,\vert\, Z_1,\ldots,Z_n]=\widehat P_n.
\]
The concavity results in the following:
\[
H(Y\,\vert\, X;\widehat P_n)
=
H\Bigl(Y\,\vert\, X;\mathbb E[\widehat P_S\,\vert\, Z_1,\ldots,Z_n]\Bigr)
\ge
\mathbb E\Bigl[
H(Y\,\vert\, X;\widehat P_S)
\,\vert\, Z_1,\ldots,Z_n
\Bigr].
\]
Taking expectations gives
\[
\mathbb E[H(Y\,\vert\, X;\widehat P_n)]
\ge
\mathbb E[H(Y\,\vert\, X;\widehat P_S)].
\]
The unconditional law of $\widehat P_S$ is the same as the empirical distribution of $m$ i.i.d.\ samples from $P_{XY}$. Therefore
\[
\mathbb E[H(Y\,\vert\, X;\widehat P_n)]
\ge
\mathbb E[H(Y\,\vert\, X;\widehat P_m)].
\]
\end{proof}

\begin{theorem}[First-order bias of empirical Shannon conditional entropy]
\label{thm:shannon_cond_miller_app}
Assume $X$ and $Y$ take values in finite alphabets, and define
\[
\mathcal S_{XY}=\{(x,y):P_{XY}(x,y)>0\},
\qquad
\mathcal S_X=\{x:P_X(x)>0\}.
\]
Using natural logarithms,
\[
H(Y\,\vert\, X;P)
-
\mathbb E[H(Y\,\vert\, X;\widehat P_n)]
=
\frac{|\mathcal S_{XY}|-|\mathcal S_X|}{2n}
+
O\Bigl(\frac{1}{n^2}\Bigr).
\]
\end{theorem}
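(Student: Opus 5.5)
The plan is to reduce the statement to the classical Miller--Madow bias of the empirical (plug-in) Shannon entropy, using the chain rule. For every joint law $P$ on finite alphabets we have $H(Y\,\vert\, X;P)=H(P_{XY})-H(P_X)$. The empirical joint $\widehat P_n$ has $X$-marginal equal to the empirical distribution $\widehat P_{n,X}$ of $X_1,\ldots,X_n$, which are i.i.d.\ from $P_X$. Hence
\[
H(Y\,\vert\, X;P)-\mathbb E[H(Y\,\vert\, X;\widehat P_n)]
=
\bigl(H(P_{XY})-\mathbb E[H(\widehat P_n)]\bigr)
-
\bigl(H(P_X)-\mathbb E[H(\widehat P_{n,X})]\bigr).
\]
It therefore suffices to prove the unconditional statement: for a distribution $p$ on a finite alphabet with support $\mathcal S$, the plug-in estimator satisfies $H(p)-\mathbb E[H(\widehat p_n)]=\frac{|\mathcal S|-1}{2n}+O(n^{-2})$. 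Applying this to the pair $(X,Y)$ and to $X$ alone, the constants $-1$ cancel, and the two $O(n^{-2})$ remainders combine. This yields exactly $\frac{|\mathcal S_{XY}|-|\mathcal S_X|}{2n}+O(n^{-2})$.

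For the unconditional bias, we first note that symbols outside $\mathcal S$ never appear, so we may restrict to $\mathcal S$. We write $H(\widehat p)=\sum_{s\in\mathcal S}f(\widehat p_s)$ with $f(t)=-t\log t$, and Taylor-expand each term around $p_s>0$:
\[
f(\widehat p_s)=f(p_s)+f'(p_s)(\widehat p_s-p_s)-\frac{(\widehat p_s-p_s)^2}{2p_s}+\frac{(\widehat p_s-p_s)^3}{6p_s^2}+R_s .
\]
Then we take expectations using the multinomial moments $\mathbb E[\widehat p_s-p_s]=0$, $\mathbb E[(\widehat p_s-p_s)^2]=p_s(1-p_s)/n$, and $\mathbb E[(\widehat p_s-p_s)^3]=O(n^{-2})$. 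The quadratic term contributes $-\sum_s\frac{1-p_s}{2n}=-\frac{|\mathcal S|-1}{2n}$, and the cubic term contributes $O(n^{-2})$.

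The main obstacle is controlling the remainder $\mathbb E[R_s]$ rigorously, because $f$ has a singular derivative at $0$, so a global Taylor bound fails. My plan is to split on the event $A_s=\{|\widehat p_s-p_s|\le p_s/2\}$. On $A_s$, $f^{(4)}(t)=-2/t^3$ is bounded by $16/p_s^3$, so the fourth-order Lagrange remainder is $O\bigl((\widehat p_s-p_s)^4\bigr)$, whose expectation is $O(n^{-2})$. On $A_s^c$, all terms are bounded, since $|f|\le 1/e$ and the polynomial terms are bounded by constants depending on $p_s$. A Hoeffding bound then gives $\Pr(A_s^c)\le 2e^{-np_s^2/2}=O(n^{-2})$. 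Because $\mathcal S$ is finite and $\min_s p_s>0$ is fixed, all these constants are uniform, and the remainder is $O(n^{-2})$.

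The result is the standard Miller--Madow correction. I would cite it, but include this sketch for completeness, noting that the $O(n^{-2})$ constant depends on $P$ through $\min_{\mathcal S_{XY}}P_{XY}$.
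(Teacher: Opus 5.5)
Your proposal is correct and follows the paper's proof: the paper also writes $H(Y\,\vert\,X)=H(X,Y)-H(X)$ for both $P$ and $\widehat P_n$, applies the Miller--Basharin expansion $\frac{s-1}{2n}+O(n^{-2})$ to $P_{XY}$ and to $P_X$, and subtracts so that the $-1$ terms cancel. The only difference is that the paper cites the univariate expansion without proof, whereas you supply a valid self-contained justification of the $O(n^{-2})$ remainder via the Taylor expansion, the split on $\{|\widehat p_s-p_s|\le p_s/2\}$, and the Hoeffding tail bound.
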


\begin{proof}
For a discrete distribution $P$ with support size $s$ and strictly positive probabilities on its support, the Miller--Basharin expansion gives
\[
H(P)-\mathbb E[H(\widehat P_n)]
=
\frac{s-1}{2n}
+
O\Bigl(\frac{1}{n^2}\Bigr).
\]
Applying this expansion to $P_{XY}$ gives
\[
H(X,Y;P)
-
\mathbb E[H(X,Y;\widehat P_n)]
=
\frac{|\mathcal S_{XY}|-1}{2n}
+
O\Bigl(\frac{1}{n^2}\Bigr).
\]
Applying the same expansion to the marginal $P_X$ gives
\[
H(X;P_X)
-
\mathbb E[H(X;\widehat P_{n,X})]
=
\frac{|\mathcal S_X|-1}{2n}
+
O\Bigl(\frac{1}{n^2}\Bigr).
\]
Since
\[
H(Y\,\vert\, X;P)=H(X,Y;P)-H(X;P_X)
\]
and
\[
H(Y\,\vert\, X;\widehat P_n)
=
H(X,Y;\widehat P_n)-H(X;\widehat P_{n,X}),
\]
subtracting the marginal expansion from the joint expansion yields
\[
H(Y\,\vert\, X;P)
-
\mathbb E[H(Y\,\vert\, X;\widehat P_n)]
=
\frac{|\mathcal S_{XY}|-|\mathcal S_X|}{2n}
+
O\Bigl(\frac{1}{n^2}\Bigr).
\]
\end{proof}

\begin{corollary}[Finite-sample underestimation and monotonicity for conditional von Neumann entropy]
\label{cor:conditional_vne_bias_monotone_app}
Let $\widehat P_n$ be the empirical distribution of $n$ i.i.d.\ paired samples from $P_{XY}$. Assume the relevant conditional von Neumann entropies are finite. Then
\[
\mathbb E[
\mathcal H_{\mathrm{vN}}(Y\,\vert\, X;\widehat P_n)
]
\le
\mathcal H_{\mathrm{vN}}(Y\,\vert\, X;P).
\]
Moreover, for all integers $m\le n$,
\[
\mathbb E[
\mathcal H_{\mathrm{vN}}(Y\,\vert\, X;\widehat P_m)
]
\le
\mathbb E[
\mathcal H_{\mathrm{vN}}(Y\,\vert\, X;\widehat P_n)
].
\]
\end{corollary}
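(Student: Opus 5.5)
The plan is to repeat the argument of Corollary~\ref{cor:shannon_cond_bias_monotone_app} word for word, with Proposition~\ref{thm:conditional_vne_concave_main} (concavity of $\mathcal H_{\mathrm{vN}}(Y\,\vert\,X;\cdot)$) in place of Proposition~\ref{prop:shannon_cond_concave_main}. The main point is that $P\mapsto C_{XY}(P)$ and $P\mapsto C_X(P_X)$ are affine. Hence $\mathcal H_{\mathrm{vN}}(Y\,\vert\,X;P)$ is a concave function of the trace-one density operator $\rho_{XY}(P)$: it equals $H_{\mathrm{vN}}(\rho_{XY})-H_{\mathrm{vN}}(\operatorname{Tr}_Y\rho_{XY})$, as written in the proof of Proposition~\ref{thm:conditional_vne_concave_main}. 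That proof never used that $\rho_{XY}$ came from a probability measure, only that it was a density operator, so concavity holds on the convex set of density operators. This lets us apply Jensen's inequality to random empirical operators and not only to finite convex combinations.

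\textbf{Underestimation.} We have $\rho_{XY}(\widehat P_n)=\frac1n\sum_i\psi(X_i,Y_i)\psi(X_i,Y_i)^\ast$, and its expectation is $\rho_{XY}(P)$, as a Bochner integral in trace class. In the finite-dimensional case (sketched features or finite alphabets), Jensen's inequality for the concave function $\rho\mapsto H_{\mathrm{vN}}(\rho)-H_{\mathrm{vN}}(\operatorname{Tr}_Y\rho)$ gives $\mathbb E[\mathcal H_{\mathrm{vN}}(Y\,\vert\,X;\widehat P_n)]\le \mathcal H_{\mathrm{vN}}(Y\,\vert\,X;P)$ directly. In the general case I would use the standard route. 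First, note that quantum conditional entropy is bounded below by $-H_{\mathrm{vN}}(\operatorname{Tr}_Y\rho)$, which makes the expectation well defined under the finiteness assumption. Then approximate by finite-rank projections, or invoke Jensen for concave upper semicontinuous functions on a convex subset of a separable Banach space. I expect this infinite-dimensional technicality to be the main obstacle. I would handle it by stating the result under the paper's standing assumption of trace-class operators with finite entropies and citing lower semicontinuity of von Neumann entropy, or simply by restricting the rigorous argument to the finite-dimensional case, as the paper does elsewhere.

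\textbf{Monotonicity.} Fix $m\le n$ and the sample $Z_1,\ldots,Z_n$. Draw $S\subseteq\{1,\ldots,n\}$ of size $m$ uniformly without replacement. Each index is included with probability $m/n$ and carries weight $1/m$ in $\widehat P_S$, so $\mathbb E[\rho_{XY}(\widehat P_S)\,\vert\,Z_{1:n}]=\rho_{XY}(\widehat P_n)$. This conditional expectation is a finite average over $\binom nm$ subsets. Iterating Proposition~\ref{thm:conditional_vne_concave_main} over finite convex combinations, with no limiting argument needed, gives
\[
\mathcal H_{\mathrm{vN}}(Y\,\vert\,X;\widehat P_n)\ge \mathbb E\bigl[\mathcal H_{\mathrm{vN}}(Y\,\vert\,X;\widehat P_S)\,\vert\,Z_{1:n}\bigr].
\]
Take total expectations. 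Since $\{Z_i\}_{i\in S}$ are $m$ i.i.d.\ draws from $P_{XY}$, $\widehat P_S$ has the same law as $\widehat P_m$, and the second inequality follows. This step uses only finite-dimensional objects, since all operators involved have rank at most $n$, so it carries no analytic difficulty beyond the finiteness assumption.
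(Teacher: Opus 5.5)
Your proposal is correct and follows the paper's proof. The underestimation bound is Jensen's inequality plus the concavity from Proposition~\ref{thm:conditional_vne_concave_main}, and monotonicity comes from uniform subsampling without replacement, so that $\mathbb E[\widehat P_S\,\vert\,Z_1,\ldots,Z_n]=\widehat P_n$, combined with concavity and the fact that $\widehat P_S$ has the law of $\widehat P_m$. Your extra step is a refinement the paper omits: you write $\mathcal H_{\mathrm{vN}}$ as the concave map $\rho\mapsto H_{\mathrm{vN}}(\rho)-H_{\mathrm{vN}}(\operatorname{Tr}_Y\rho)$ composed with the affine map $P\mapsto\rho_{XY}(P)$, and you note that the monotonicity step needs only finite convex combinations, which makes explicit the passage from finite mixtures to expectations that the paper's one-line appeal to Jensen leaves implicit.
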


\begin{proof}
The first inequality follows from Jensen's inequality and Proposition~\ref{thm:conditional_vne_concave_main}:
\[
\mathbb E[
\mathcal H_{\mathrm{vN}}(Y\,\vert\, X;\widehat P_n)
]
\le
\mathcal H_{\mathrm{vN}}(Y\,\vert\, X;\mathbb E[\widehat P_n])
=
\mathcal H_{\mathrm{vN}}(Y\,\vert\, X;P).
\]

For monotonicity, condition on $n$ paired samples $Z_1,\ldots,Z_n$ and draw a subset $S$ of size $m$ uniformly without replacement. As above,
\[
\mathbb E[\widehat P_S\,\vert\, Z_1,\ldots,Z_n]=\widehat P_n.
\]
By concavity,
\[
\mathcal H_{\mathrm{vN}}(Y\,\vert\, X;\widehat P_n)
\ge
\mathbb E[
\mathcal H_{\mathrm{vN}}(Y\,\vert\, X;\widehat P_S)
\,\vert\, Z_1,\ldots,Z_n].
\]
Taking expectations and using the fact that $\widehat P_S$ has the same law as an empirical distribution formed from $m$ i.i.d.\ samples proves the result.
\end{proof}

\subsection{Additional Theoretical Results on Lifted Bregman Divergences}

Fix an input marginal $P_X$. The map
\[
R_{Y\,\vert\, X}\longmapsto P_XR_{Y\,\vert\, X}
\]
embeds a conditional law into a joint law whose input marginal is fixed. Given a Bregman divergence $D_\Phi$ on a convex domain of joint laws, define
\[
D_{\Phi\,\vert\, P_X}(R_{Y\,\vert\, X},Q_{Y\,\vert\, X})
=
D_\Phi(P_XR_{Y\,\vert\, X},P_XQ_{Y\,\vert\, X}).
\]

If $D_\Phi$ is the KL divergence, then
\[
D_{\Phi\,\vert\, P_X}(R_{Y\,\vert\, X},Q_{Y\,\vert\, X})
=
\mathbb E_{X\sim P_X}
\Bigl[
\mathrm{KL}\bigl(R_{Y\,\vert\, X}(\cdot\,\vert\, X)\,\Vert\,Q_{Y\,\vert\, X}(\cdot\,\vert\, X)\bigr)
\Bigr],
\]
whenever $P_XR_{Y\,\vert\, X}$ is absolutely continuous with respect to $P_XQ_{Y\,\vert\, X}$.

If $\psi(x,y)$ is a feature map on joint pairs and
\[
\mu_{P_XR}
=
\mathbb E_{X\sim P_X,\,Y\sim R(\cdot\,\vert\, X)}[\psi(X,Y)]
\]
is the kernel mean embedding of $P_XR_{Y\,\vert\, X}$, then
\[
D_{\mathrm{MMD}\,\vert\, P_X}(R_{Y\,\vert\, X},Q_{Y\,\vert\, X})
=
\|\mu_{P_XR}-\mu_{P_XQ}\|_{\mathcal H}^2.
\]
This is the Bregman divergence generated by $\Phi(\mu)=\|\mu\|_{\mathcal H}^2$ on the Hilbert space of mean embeddings, since
\[
D_\Phi(\mu,\nu)=\|\mu-\nu\|_{\mathcal H}^2.
\]

\subsection{Proof of Theorem~\ref{thm:conditional_entropy_projection_principle}}

For $\rho\in\mathbb R$, we consider the definition:
\[
\mathcal C_\rho(P_X)
=
\Bigl\{
R_{Y\,\vert\, X}:
\mathcal H_{\mathrm{vN}}(R_{Y\,\vert\, X};P_X)\ge \rho
\Bigr\}.
\]

\begin{lemma}[Convexity of conditional entropy superlevel sets]
\label{lem:conditional_superlevel_convex}
The set $\mathcal C_\rho(P_X)$ is convex.
\end{lemma}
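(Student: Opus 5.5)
The plan is to reduce the statement to Proposition~\ref{thm:conditional_vne_concave_main} by showing that the map $R_{Y\vert X}\mapsto \mathcal H_{\mathrm{vN}}(R_{Y\vert X};P_X)$ is concave on conditional laws. Convexity of a superlevel set of a concave function then follows at once.

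First, I will check that the embedding $R_{Y\vert X}\mapsto P_XR_{Y\vert X}$ is affine. Take two conditional laws $R^{(0)}_{Y\vert X}$ and $R^{(1)}_{Y\vert X}$, and set $R^\lambda_{Y\vert X}=(1-\lambda)R^{(0)}_{Y\vert X}+\lambda R^{(1)}_{Y\vert X}$ for $\lambda\in[0,1]$. This mixture is again a valid conditional law: pointwise in $x$ it is a mixture of probability measures, and measurability is preserved. The induced joint law satisfies
\[
P_XR^\lambda_{Y\vert X}=(1-\lambda)P_XR^{(0)}_{Y\vert X}+\lambda P_XR^{(1)}_{Y\vert X}.
\]
To see this, integrate against any bounded test function $f(x,y)$ and use linearity of the inner integral $\int f(x,y)\,R(dy\vert x)$ in $R$.

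Next, I will apply Proposition~\ref{thm:conditional_vne_concave_main} to the two joint laws $P_XR^{(0)}_{Y\vert X}$ and $P_XR^{(1)}_{Y\vert X}$. This gives
\[
\mathcal H_{\mathrm{vN}}(R^\lambda_{Y\vert X};P_X)\ge(1-\lambda)\mathcal H_{\mathrm{vN}}(R^{(0)}_{Y\vert X};P_X)+\lambda\mathcal H_{\mathrm{vN}}(R^{(1)}_{Y\vert X};P_X).
\]
If both $R^{(0)}_{Y\vert X}$ and $R^{(1)}_{Y\vert X}$ lie in $\mathcal C_\rho(P_X)$, the right-hand side is at least $(1-\lambda)\rho+\lambda\rho=\rho$, so $R^\lambda_{Y\vert X}\in\mathcal C_\rho(P_X)$. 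A further observation is that all three joint laws share the marginal $P_X$, so the term $H_{\mathrm{vN}}(C_X(P_X))$ is the same constant throughout. However, I do not need this, since the proposition already handles the full conditional entropy.

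The only delicate point is well-definedness in infinite dimensions, namely that the entropies are finite, so that the inequality is not of the form $\infty-\infty$. Following the convention stated after Proposition~\ref{thm:conditional_vne_concave_main}, I will restrict to conditional laws whose joint covariance operators have finite entropy. The mixture $C_{XY}(P_XR^\lambda_{Y\vert X})$ then also has finite entropy, because von Neumann entropy satisfies $H_{\mathrm{vN}}(\sum_i p_i\rho_i)\le h(p)+\sum_i p_iH_{\mathrm{vN}}(\rho_i)$, where $h(p)$ is the entropy of the mixing weights. So the argument holds within this domain, and I expect no real obstacle beyond stating this convention.
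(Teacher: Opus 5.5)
Your proposal is correct and follows the same approach as the paper's proof. Mixing conditional laws with the input marginal $P_X$ held fixed yields the corresponding mixture of joint laws, and the concavity in Proposition~\ref{thm:conditional_vne_concave_main} then keeps the mixture above $\rho$; your added remark on finiteness of the entropies addresses a point the paper leaves implicit.
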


\begin{proof}
Let $R^{(0)}_{Y\,\vert\, X},R^{(1)}_{Y\,\vert\, X}\in\mathcal C_\rho(P_X)$ and $\lambda\in[0,1]$. Define
\[
R^\lambda_{Y\,\vert\, X}
=
(1-\lambda)R^{(0)}_{Y\,\vert\, X}
+
\lambda R^{(1)}_{Y\,\vert\, X}.
\]
Because the input marginal is fixed,
\[
P_XR^\lambda_{Y\,\vert\, X}
=
(1-\lambda)P_XR^{(0)}_{Y\,\vert\, X}
+
\lambda P_XR^{(1)}_{Y\,\vert\, X}.
\]
By Proposition~\ref{thm:conditional_vne_concave_main},
\[
\mathcal H_{\mathrm{vN}}(R^\lambda_{Y\,\vert\, X};P_X)
\ge
(1-\lambda)\mathcal H_{\mathrm{vN}}(R^{(0)}_{Y\,\vert\, X};P_X)
+
\lambda\mathcal H_{\mathrm{vN}}(R^{(1)}_{Y\,\vert\, X};P_X)
\ge \rho.
\]
Thus $R^\lambda_{Y\,\vert\, X}\in\mathcal C_\rho(P_X)$.
\end{proof}

We also use the following lemma on Bregman divergences. Let $\Omega$ be a convex domain, let $\Phi:\Omega\to\mathbb R$ be differentiable and strictly convex, and define
\[
D_\Phi(A,B)
=
\Phi(A)-\Phi(B)-\langle \nabla\Phi(B),A-B\rangle.
\]

\begin{lemma}
\label{lem:bregman_pythagorean_conditional}
Let $\mathcal C\subseteq\Omega$ be nonempty and convex. Suppose
\[
Q^\star\in \underset{R\in\mathcal C}{\arg\!\min}
\: D_\Phi(R,Q).
\]
Then, for every $P\in\mathcal C$,
\[
D_\Phi(P,Q^\star)+D_\Phi(Q^\star,Q)\le D_\Phi(P,Q).
\]
\end{lemma}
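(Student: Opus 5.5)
The plan is the standard first-order optimality argument for Bregman projections onto convex sets, followed by the three-point identity. First, write the three-point identity for Bregman divergences, which follows directly from the definition $D_\Phi(A,B)=\Phi(A)-\Phi(B)-\langle\nabla\Phi(B),A-B\rangle$: for any $P,Q^\star,Q\in\Omega$,
\[
D_\Phi(P,Q)=D_\Phi(P,Q^\star)+D_\Phi(Q^\star,Q)+\bigl\langle \nabla\Phi(Q^\star)-\nabla\Phi(Q),\,P-Q^\star\bigr\rangle .
\]
This is a routine expansion: the $\Phi(Q^\star)$ terms cancel, and the linear terms regroup into the stated inner product. The claimed inequality is therefore equivalent to
\[
\bigl\langle \nabla\Phi(Q^\star)-\nabla\Phi(Q),\,P-Q^\star\bigr\rangle\ge 0\quad\text{for all } P\in\mathcal C .
\]

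To prove this variational inequality, I will use the convexity of $\mathcal C$ and the minimality of $Q^\star$. Fix $P\in\mathcal C$ and set $R_t=Q^\star+t(P-Q^\star)$ for $t\in[0,1]$. By convexity, $R_t\in\mathcal C$, so $g(t):=D_\Phi(R_t,Q)\ge g(0)$ for all $t\in[0,1]$. Hence the one-sided derivative at $0$ is nonnegative. Since $D_\Phi(\cdot,Q)=\Phi(\cdot)-\Phi(Q)-\langle\nabla\Phi(Q),\cdot-Q\rangle$ and $\Phi$ is differentiable, this derivative equals $\langle\nabla\Phi(Q^\star)-\nabla\Phi(Q),P-Q^\star\rangle$. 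This gives the required inequality, and substituting it into the three-point identity completes the proof.

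The main point needing care is the differentiability step: one must make sure that the directional derivative of $\Phi$ at $Q^\star$ along $P-Q^\star$ exists and equals $\langle\nabla\Phi(Q^\star),P-Q^\star\rangle$. This matters in particular if $Q^\star$ lies on the boundary of $\Omega$, as happens for KL at the boundary of the simplex. I will handle this by working with the one-sided derivative $\lim_{t\downarrow 0}(g(t)-g(0))/t$, which only requires $R_t\in\Omega$ for small $t>0$. This is guaranteed because $\Omega$ is convex, and differentiability of $\Phi$ on $\Omega$ is assumed in the lemma. Strict convexity is not needed for the inequality; it only ensures that the projection is unique.
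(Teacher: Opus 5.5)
Your proposal is correct and matches the paper's proof: the three-point identity for $D_\Phi$ combined with the first-order optimality inequality $\langle \nabla\Phi(Q^\star)-\nabla\Phi(Q),P-Q^\star\rangle\ge 0$ for the projection. The only difference is that you derive this inequality yourself, via the one-sided derivative of $t\mapsto D_\Phi(Q^\star+t(P-Q^\star),Q)$ at $t=0$, whereas the paper simply invokes it.
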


\begin{proof}
The first-order optimality condition for the projection gives
\[
\langle \nabla\Phi(Q^\star)-\nabla\Phi(Q),P-Q^\star\rangle\ge 0,
\qquad P\in\mathcal C.
\]
The three-point identity for Bregman divergences gives
\[
D_\Phi(P,Q)
=
D_\Phi(P,Q^\star)
+
D_\Phi(Q^\star,Q)
+
\langle \nabla\Phi(Q^\star)-\nabla\Phi(Q),P-Q^\star\rangle.
\]
Combining the two displays yields the inequality.
\end{proof}

\begin{proof}[Proof of Theorem~\ref{thm:conditional_entropy_projection_principle}]
The choice
\[
\rho=\mathcal H_{\mathrm{vN}}(P_{Y\,\vert\, X};P_X)
\]
implies $P_{Y\,\vert\, X}\in\mathcal C_\rho(P_X)$. By Lemma~\ref{lem:conditional_superlevel_convex}, $\mathcal C_\rho(P_X)$ is convex. Applying Lemma~\ref{lem:bregman_pythagorean_conditional} to the lifted joint laws
$
P_XP_{Y\,\vert\, X},\,
P_XQ_{Y\,\vert\, X},
\,
P_XQ^\star_{Y\,\vert\, X}$
over the convex set
$
\{P_XR_{Y\,\vert\, X}:R_{Y\,\vert\, X}\in\mathcal C_\rho(P_X)\}$ results in
\[
D_{\Phi\,\vert\, P_X}(P_{Y\,\vert\, X},Q^\star_{Y\,\vert\, X})
+
D_{\Phi\,\vert\, P_X}(Q^\star_{Y\,\vert\, X},Q_{Y\,\vert\, X})
\le
D_{\Phi\,\vert\, P_X}(P_{Y\,\vert\, X},Q_{Y\,\vert\, X}).
\]
Therefore, the proof is complete.
\end{proof}

\subsection{Additional Theoretical Results on Finite-Support Conditional MMD Projection}

Let $x_1,\ldots,x_N$ be input prompts, and let $y_{i,1},\ldots,y_{i,m}$ be generated candidates for $x_i$. Let $M=Nm$ and index pairs by $a=(i,j)$:
\[
s_a=(x_i,y_{i,j}).
\]
Let $q\in\mathbb R^M$ be a distribution over the generated pairs and let $u\in\mathbb R^M$ be the uniform baseline,
\[
u_{i,j}=\frac{1}{Nm}.
\]
The conditional block constraints are
\[
q_{i,j}\ge 0,
\qquad
\sum_{j=1}^m q_{i,j}=\frac1N,
\qquad i=1,\ldots,N.
\]
Equivalently, $q_{i,j}=p_{i,j}/N$ with $p_i\in\Delta_m$.

Let
\[
k((x,y),(x',y'))=k_X(x,x')k_Y(y,y')
\]
be the product kernel, and let $K\in\mathbb R^{M\times M}$ be the Gram matrix over the generated pairs. Then the squared MMD between the reweighted distribution $q$ and the uniform baseline $u$ is
\[
\Bigl\|
\sum_a(q_a-u_a)\psi(s_a)
\Bigr\|_{\mathcal H}^2
=
(q-u)^\top K(q-u).
\]
Thus, the objective in~\eqref{eq:kernel_conditional_projection} is the finite-support instance of the lifted conditional MMD with empirical input marginal.

The weighted joint covariance operator is
\[
C_{XY}(q)
=
\sum_a q_a\psi(s_a)\psi(s_a)^\ast.
\]
Because the product kernel is normalized and $\sum_a q_a=1$, this operator has trace one. The empirical input covariance is
\[
C_X
=
\frac1N\sum_{i=1}^N\phi_X(x_i)\phi_X(x_i)^\ast.
\]
The block constraints make $C_X$ independent of $q$, so
\[
\mathcal H_{\mathrm{vN}}(q)
=
H_{\mathrm{vN}}(C_{XY}(q))-H_{\mathrm{vN}}(C_X).
\]

The nonzero eigenvalues of $C_{XY}(q)$ coincide with the nonzero eigenvalues of the weighted Gram matrix
\[
K_q
=
\operatorname{diag}(\sqrt q)\,K\,\operatorname{diag}(\sqrt q).
\]
Indeed, writing $C_{XY}(q)=AA^\ast$ with columns $\sqrt{q_a}\psi(s_a)$ gives $K_q=A^\ast A$, and $AA^\ast$ and $A^\ast A$ have the same nonzero spectrum.

\subsection{Proof of Proposition~\ref{prop:finite_support_projection}}

\begin{proof}
The objective $(q-u)^\top K(q-u)$ is convex because $K$ is positive semidefinite. The nonnegativity and block-marginal constraints are linear. The map $q\mapsto C_{XY}(q)$ is affine, and $H_{\mathrm{vN}}$ is concave on positive trace-one operators. Since $C_X$ is fixed over the feasible set, $q\mapsto\mathcal H_{\mathrm{vN}}(q)$ is concave. Hence, the entropy superlevel constraint is convex, and the program is convex.
\end{proof}

\subsection{Proof of Proposition~\ref{prop:covariance_space_projection}}

\begin{proof}
The squared norm term is convex because it is the squared norm of an affine function of $p$. The map $p\mapsto \widetilde C(p)$ is affine, and von Neumann entropy is concave on trace-one positive semidefinite matrices. Therefore $p\mapsto -H_{\mathrm{vN}}(\widetilde C(p))$ is convex. The constraints $p_i\in\Delta_m$ are linear, so the problem is convex.
\end{proof}

\subsection{Proof of Theorem~\ref{thm:cep_beg_convergence_main}}

CEP-BEG uses the product-simplex negative-entropy mirror map
\[
\Psi(p)=\sum_{i=1}^N\sum_{j=1}^m p_{i,j}\log p_{i,j}.
\]
The associated Bregman divergence is
\[
D_\Psi(p,p')
=
\sum_{i=1}^N\sum_{j=1}^m
p_{i,j}\log\frac{p_{i,j}}{p'_{i,j}}.
\]
The mirror step is
\[
p^{(t+1)}
=
\underset{p_1,\ldots,p_N\in\Delta_m}{\arg\!\min}\:
\Bigl\{
\langle g^{(t)},p\rangle
+
\frac1\eta D_\Psi(p,p^{(t)})
\Bigr\},
\]
which yields the block exponentiated-gradient update in Algorithm~\ref{alg:cep_beg}.

\begin{proof}
For each block $i$, the negative-entropy mirror descent inequality gives
\[
\langle g_i^{(t)},p_i^{(t)}-p_i\rangle
\le
\frac{
D_{\mathrm{KL}}(p_i,p_i^{(t)})
-
D_{\mathrm{KL}}(p_i,p_i^{(t+1)})
}{\eta}
+
\frac{\eta}{2}\|g_i^{(t)}\|_\infty^2.
\]
Summing over $i=1,\ldots,N$ yields
\[
\langle g^{(t)},p^{(t)}-p\rangle
\le
\frac{
D_\Psi(p,p^{(t)})
-
D_\Psi(p,p^{(t+1)})
}{\eta}
+
\frac{\eta}{2}
\sum_{i=1}^N\|g_i^{(t)}\|_\infty^2.
\]
Taking $p=p^\star$ and using convexity of $F$,
\[
F(p^{(t)})-F(p^\star)
\le
\langle g^{(t)},p^{(t)}-p^\star\rangle.
\]
Summing over $t=0,\ldots,T-1$ gives
\[
\sum_{t=0}^{T-1}\big(F(p^{(t)})-F(p^\star)\big)
\le
\frac{D_\Psi(p^\star,p^{(0)})}{\eta}
+
\frac{\eta T G^2}{2}.
\]
Since $p^{(0)}_{i,j}=1/m$ and $p_i^\star\in\Delta_m$,
\[
D_\Psi(p^\star,p^{(0)})
=
\sum_{i=1}^N\sum_{j=1}^m
p^\star_{i,j}\log(mp^\star_{i,j})
\le
N\log m.
\]
Dividing by $T$ and using Jensen's inequality for the convex function $F$,
\[
F(\bar p^{(T)})
\le
\frac1T\sum_{t=0}^{T-1}F(p^{(t)}),
\]
we obtain
\[
F(\bar p^{(T)})-F(p^\star)
\le
\frac{N\log m}{\eta T}
+
\frac{\eta G^2}{2}.
\]
The stated choice of $\eta=\sqrt{2N\log m}/(G\sqrt T)$ minimizes this upper bound which becomes the following and finishes the proof: 
\[
F(\bar p^{(T)})-F(p^\star)
\le
G\sqrt{\frac{2N\log m}{T}}.
\]
The proof is hence complete.
\end{proof}

For the regularized entropy implementation, a concrete gradient bound follows under the assumption of a spectral floor. If $\|\tilde z_{i,j}\|_2=1$ and the eigenvalues of the matrices used inside the logarithm are bounded below by $\tau>0$, then
\[
\|g_i^{(t)}\|_\infty
\le
\frac{4+\lambda(1+|\log\tau|)}{N}
\quad
\text{for all }i,t.
\]
Consequently, one may take
\[
G
=
\frac{4+\lambda(1+|\log\tau|)}{\sqrt N}.
\]
The bound follows from $\|\tilde r(p)\|_2\le 2$ and
\[
\|\log \widetilde C(p)+I\|_{\mathrm{op}}
\le
1+|\log\tau|
\]
on the spectral interval $[\tau,1]$.

\section{Implementation Details}
\label{app:implementation_details}

\subsection{LLM Data Construction and Sample Generation}
\label{app:llm_generation}

We construct paired prefix--continuation samples from the corresponding open training corpora of the evaluated language models. For OLMo, we use the Dolma corpus; for Pythia and GPT-Neo, we use The Pile. Each example is represented as a pair $(X,Y)$, where $X$ is a short prefix used as the conditioning input and $Y$ is the following continuation. The same prefix set is used for both the training-data reference and the model-generated continuations, so the input marginal is matched across groups.

For OLMo, we use the \texttt{allenai/OLMo-1B-hf} checkpoint and its associated tokenizer. The tokenizer padding token is set to the end-of-sequence token when no padding token is provided, and left padding is used for batched autoregressive generation. The model is loaded with automatic device mapping and half precision.
To construct the training reference for OLMo, we use the Hugging Face Dolma dataset. We load the downloaded Dolma files with the Hugging Face \texttt{datasets} library and process each document into paragraph-level candidates. We split documents by blank lines, normalize whitespace, and keep only paragraphs with at least $500$ characters and at least $3$ sentence-ending marks. From each document, we uniformly sample the candidate paragraphs. We then tokenize each candidate paragraph and keep the first paragraph that contains at least the required number of tokens. Unless otherwise specified, we use a prefix length of $3$ tokens and a continuation length of $2$ tokens. The prefix tokens are used as the condition $X$, and the immediately following tokens are used as the training continuation $Y_{\mathrm{train}}$. To examine the effect of token-length choices, we additionally conduct an ablation over all combinations of prefix lengths in $\{3,4,5\}$ and continuation lengths in $\{2,3,4\}$. We also conduct a sample-size ablation by drawing subsamples of different sizes under multiple random seeds and reporting the resulting variation across runs. Documents without a valid paragraph of sufficient length are skipped.

For each prefix, we generate model continuations using three decoding strategies. Greedy decoding selects the most likely token at each step. Nucleus sampling samples from the smallest token set whose cumulative probability mass exceeds $p=0.9$. Ancestral sampling samples from the full next-token distribution, implemented by setting \texttt{top\_p=1.0}. All strategies use the same maximum number of newly generated tokens as the training continuation length. 
We store the generated data in JSON format with fields for the example id, prefix, training continuation, greedy decoding output, nucleus sampling output, and ancestral sampling output. In the main experiments, we sample up to $30$K prefix--continuation pairs and evaluate conditional diversity at different sample sizes using subsets of these generated records.
For Pythia and GPT-Neo, we follow the same construction using The Pile as the training-data reference. We use the Hugging Face checkpoints \texttt{EleutherAI/pythia-1b} and \texttt{EleutherAI/gpt-neo-1.3B}, together with their corresponding tokenizers. For each model, we tokenize documents from The Pile, extract matched prefix--continuation pairs, and generate continuations from the same prefix set under greedy decoding, nucleus sampling with the same $p=0.9$, and ancestral sampling. This ensures that the comparison between training and generated groups is always performed under a shared empirical input marginal.

\textbf{Long-sequence construction.}
For the long-sequence experiments in Section~\ref{sec:long_sequence_gap}, we extend the prefix length to $\{8,16,32\}$ tokens and the continuation length to $\{16,32,64\}$ tokens. To represent a long sequence while preserving local information, we split each sequence into $2$-token segments, compute the Qwen3-Embedding of each segment independently, and concatenate the resulting embeddings into a single feature vector. The same chunking construction has been widely used in the long-text literature~\cite{katz2023identifying,jaiswal2023breaking,gunther2024late,zhang2022multiview,santhanam2022colbertv2}. The same prefix set is used for the training-data reference and the model-generated continuations. We additionally perform a chunk-size ablation over chunk sizes $\{1,2,4\}$ in Appendix~\ref{app:chunk_size_ablation}. The entropy computation depends primarily on the number of samples and the representation dimension rather than directly on the number of tokens; longer sequences mainly increase the cost of constructing their representations, after which the same kernel-based estimator and optimization procedure apply.

In Table~\ref{tab:long_sequence_gap} we observe two consistent trends in the gap. First, for a fixed prefix length, the gap generally grows with the continuation length, because a longer continuation leaves more room for valid variation that the model under-covers. Second, for a fixed continuation length, the gap generally shrinks as the prefix length increases, because stronger conditioning reduces the range of plausible continuations. Nevertheless, no configuration eliminates the gap.

\subsection{Image Data Construction and Sample Generation}
\label{app:image_generation}

We construct paired condition--image samples for two conditional image-generation settings: class-conditioned ImageNet generation and text-conditioned MS-COCO generation. Each example is represented as a pair $(X,Y)$, where $X$ is the conditioning variable and $Y$ is the corresponding image. For ImageNet, $X$ is the class label and $Y$ is the image. For MS-COCO, $X$ is the text caption and $Y$ is the image. In both settings, we compare generated images with real-data references under matched conditions.

For the class-conditioned ImageNet experiments, we follow the DGM-Eval benchmark and use the generated image samples provided for several class-conditioned generators, including ADM, LDM, BigGAN, and DiT-XL-2. The real-data reference is constructed from ImageNet training images with their corresponding class labels. For each generated sample, we use its class label as the condition and the generated image as the output. We then form paired samples $(X_i,Y_i)$ for both the real ImageNet reference and each generator. Conditional diversity is evaluated by comparing the real and generated paired distributions under the same class-conditioned protocol.

For the text-conditioned MS-COCO experiments, we use captions sampled from the MS-COCO dataset as conditioning inputs. Given a caption $X_i$, we generate an image $Y_i$ using each text-to-image model. We evaluate U-ViT-S/2 and its corresponding Deep variant, as well as SDXL-Base-1.0 and PixArt-$\Sigma$. For the real-data reference, we use the MS-COCO image associated with the sampled caption. This gives paired caption--image samples for the real MS-COCO reference and for each generated model group. The same caption set is used when comparing the reference and generated samples, so the empirical input marginal over captions is matched.

For all image-generation experiments, conditional VNE and conditional RKE are computed from paired condition--image samples using the product kernel between condition features and image features. In the ImageNet setting, the condition feature is derived from the class label; in the MS-COCO setting, the condition feature is derived from the caption. Unless otherwise specified, we use the gaussian kernel with bandwidth selection following the literature.

For sample-size analysis, we evaluate conditional diversity over subsets of different sizes. In the ImageNet and MS-COCO gap experiments, we report the exponential of conditional VNE over sample sizes ranging from $2.5$K to $20$K. For each sample size, we draw subsamples under multiple random seeds and report the mean and standard deviation across evaluations. This protocol is applied consistently to the real-data reference and to each generated model group.

\subsection{Evaluation Settings and Hyperparameters}
\label{app:evaluation_hyperparams}
For the LLM diversity-gap experiments, the default text representation is Qwen3-Embedding. We compute embeddings for the prefix and continuation texts separately, construct a product kernel over the paired prefix--continuation samples, and evaluate conditional VNE and conditional RKE on the resulting normalized kernel matrices. Unless otherwise stated, we use the Gaussian kernel with bandwidth selected by the median heuristic on the corresponding feature distances. We also report ablations with T5 embeddings, cosine kernels, and degree-3 polynomial kernels with $\gamma=1$ in Tables~\ref{tab:diversity_metrics_gap_20k_t5_long}, \ref{tab:diversity_metrics_gap_20k_poly3_long}, and \ref{tab:diversity_metrics_gap_20k_cosine_long}.

For the image experiments, image features are extracted using the default visual encoder DINOv2 and condition text features are extracted by CLIP model. We use the same normalized product-kernel construction for all conditional VNE and conditional RKE evaluations, so differences in the reported scores reflect the conditional output distributions under matched input conditions.

For entropy-projected reweighting, each prefix is associated with $10$ generated candidate continuations. The optimization is performed over a product of simplices, with one simplex for the candidates associated with each prefix, so the empirical prefix marginal is fixed throughout optimization. We initialize the weights uniformly, use the conditional entropy objective described in Section~\ref{sec:conditional_reweighting}, and report the reweighted empirical distribution using the same evaluation pipeline as the unweighted generated samples. For scalability, the joint product features are computed through the sketched representation described in the method section.

For conditional VNE guidance in diffusion sampling, we use SDXL on MS-COCO captions and set the guidance scale to $\eta=0.03$. The guidance step is applied during denoising after the standard SDXL update. All guided and unguided comparisons use the same caption set and the same evaluation features, and conditional VNE is reported as mean $\pm$ standard deviation over independent subsamples. All experiments were run on 2 RTX-4090 GPUs.

For the MBR decoding experiments in Section~\ref{sec:cvs_mbr}, we follow the candidate-generation protocol of Jinnai et al.~\cite{jinnai2024mbmbmbr}, using nucleus sampling with $p=0.9$ for a controlled comparison. We evaluate XSum with BART-large-XSum, CNN/DailyMail with BART-large-CNN, and CNN/DailyMail with Mistral-7B-Instruct-v0.1 using the CNN/DM prompt from their Appendix G. For each of $50$ test inputs, we compare Monte Carlo MBR, model-based MBR with and without length normalization, and the proposed conditional-Vendi-weighted variant (CVS-MBR). All methods operate on the same candidate pool and utility function and differ only in the reference weighting used in the MBR expectation.

The methods compared in Table~\ref{tab:cvs_mbr} represent different operating points. MBMBR and MBMBR-L use model-probability weights and are mainly designed to improve expected generation quality; accordingly, they achieve higher ROUGE-L and BERTScore, with length normalization mitigating sequence-length bias. In contrast, CVS-MBR explicitly reduces concentration in the reference distribution and therefore achieves the highest conditional VNE and Distinct-$2$, as well as the lowest Self-BLEU-$2$, across all three settings. To examine the quality--diversity trade-off, we performed a paired bootstrap over the $50$ test inputs with $2000$ resamples, comparing CVS-MBR against MBMBR-L. The $95\%$ confidence intervals for the ROUGE-L and BERTScore differences contain zero in all three settings, indicating no statistically significant degradation in quality while CVS-MBR consistently attains the highest conditional VNE.

\section{Additional Numerical Results}

\subsection{Additional LLM Diversity Gap Results}

Tables~\ref{tab:diversity_metrics_gap_10k_long} and \ref{tab:diversity_metrics_gap_20k_full_appendix} provide the full LLM diversity-gap results at sample sizes $10$K and $20$K. Across OLMo, Pythia, and GPT-Neo, the training continuations consistently achieve higher conditional VNE and conditional RKE than model-generated continuations from the same prefixes. The ordering among decoding strategies is also stable: greedy decoding has the largest gap, nucleus sampling reduces the gap, and ancestral sampling is usually the closest to the training reference. For example, at $20$K samples, the conditional VNE gap for OLMo is $119.70$ under greedy decoding, $51.27$ under nucleus sampling, and $41.27$ under ancestral sampling.

The lexical diversity metrics show the same qualitative pattern. Distinct-2 and Distinct-3 are substantially lower for greedy decoding than for the training continuations, while stochastic decoding recovers much of the surface-level n-gram diversity. However, even when Distinct scores become relatively close to the training reference, the conditional VNE and conditional RKE gaps remain positive. This indicates that the proposed kernel-based conditional diversity metrics capture conditional output-range differences that are not fully explained by local n-gram variety.

Comparing $10$K and $20$K samples further shows that the measured conditional diversity gap becomes more pronounced as the evaluation set grows. The training reference gains conditional VNE faster than the generated groups, which is consistent with the sample-size curves in Figure~\ref{fig:llm_sample_size_gap}. This behavior suggests that the training data continues to reveal additional valid conditional continuations as more examples are included, whereas the generated samples cover a narrower conditional range.

\subsection{Additional Image Diversity Gap Results}

Figure~\ref{fig:imagenet_gap} and Table~\ref{tab:imagenet_cvs_full_transposed} report the ImageNet class-conditioned results. The ImageNet reference has the highest conditional VNE at every sample size. Among the evaluated generators, DiT-XL-2 and LDM are closest to the reference, while ADM and BigGAN exhibit much larger gaps. At $20$K samples, ImageNet reaches $54.15$, compared with $49.95$ for DiT-XL-2, $47.86$ for LDM, $22.47$ for ADM, and $17.04$ for BigGAN. Thus, the strongest diffusion and transformer-based generators approach the reference more closely, but they still do not fully match the conditional diversity of real images under the same class labels.

Figure~\ref{fig:coco_gap} and Table~\ref{tab:mscoco_cvs_full_transposed} show the text-conditioned MS-COCO results. The MS-COCO reference again achieves the highest conditional VNE across all sample sizes. SDXL is the strongest among the evaluated generated groups in this experiment, followed closely by U-ViT Deep S/2 and U-ViT S/2, while PixArt-$\Sigma$ has the largest gap. At $20$K samples, the reference reaches $40.52$, whereas SDXL reaches $30.25$, U-ViT Deep S/2 reaches $29.94$, U-ViT S/2 reaches $29.50$, and PixArt-$\Sigma$ reaches $25.25$.

The image results mirror the LLM results: the real-data reference grows faster with sample size than the generated groups, so the diversity gap becomes clearer as more samples are evaluated. Because the comparisons are made under matched class labels or captions, the gap reflects differences in conditional output variability rather than a mismatch in the empirical input marginal.

\subsection{Additional Reweighting Results}

The entropy-projected reweighting results in Table~\ref{tab:projection_conditional_diversity} show that conditional diversity can be increased post hoc by redistributing probability mass over already generated candidates. The reweighting improves conditional VNE for all three LLMs at both evaluated sample sizes. At $20$K samples, conditional VNE increases from $366.35$ to $376.77$ for OLMo, from $371.72$ to $382.74$ for Pythia, and from $340.90$ to $353.68$ for GPT-Neo.

The effect is especially strong for conditional RKE. At $20$K samples, RKE increases from $241.14$ to $284.92$ for OLMo, from $236.24$ to $285.05$ for Pythia, and from $246.13$ to $296.02$ for GPT-Neo. These gains are obtained without retraining the model and without adding new generated continuations. Since the projection preserves the total mass assigned to each prefix, the improvement comes from changing the conditional distribution over available continuations rather than altering the input distribution.

These results support the interpretation that generated candidate sets often contain underweighted alternatives. The projection can recover part of this latent conditional diversity by moving mass toward candidates that increase the joint condition--output entropy while remaining within the fixed candidate support.

\subsection{Additional Conditional VNE Guidance Results}
\label{app:conditional_vne_guidance}

We further evaluate whether the conditional VNE score can be used not only as an evaluation metric, but also as a sampling-time guidance signal for conditional generation. The goal is to encourage higher conditional diversity during sampling while keeping the text condition fixed. 

Let $c^{(n)}$ denote the caption for the $n$-th generated image, and let $z_t^{(n)}$ be its latent variable at diffusion step $t$. Given previously generated caption-image pairs $\{(c^{(i)}, z^{(i)})\}_{i=1}^{n-1}$ and the current pair $(c^{(n)}, z_t^{(n)})$, we construct a caption kernel $K_C$ and a latent-image kernel $K_Z$. The conditional VNE guidance score is
\[
\widehat{H}_{\mathrm{vN}}(Z \mid C)
=
H_{\mathrm{vN}}\!\left(\frac{1}{n}(K_C \odot K_Z)\right)
-
H_{\mathrm{vN}}\!\left(\frac{1}{n}K_C\right),
\]
where $\odot$ denotes the Hadamard product. Since the caption kernel $K_C$ is fixed during the latent update, the guidance acts through the gradient of the joint conditional term with respect to the current latent variable. After the standard SDXL denoising update, we apply an additional guidance step
\[
z_{t-1}^{(n)}
\leftarrow
z_{t-1}^{(n)}
+
\eta \nabla_{z_{t-1}^{(n)}}
\widehat{H}_{\mathrm{vN}}(Z \mid C),
\]
where $\eta$ is the guidance scale. 
For the experiment, we use captions from the MS-COCO dataset and generate $30$K images with SDXL, both with and without the proposed guidance, and the guidance scale we used is $0.03$. We then evaluate conditional VNE at sample sizes from $10$K to $20$K reported in
Table~\ref{tab:mscoco_sdxl_guidance_cvs}. SDXL with conditional VNE guidance consistently obtains higher conditional VNE than SDXL without guidance at every evaluated sample size. For example, at $20$K samples, the conditional VNE increases from $30.25$ without guidance to $32.27$ with guidance. The guided samples still remain below the reference MS-COCO data, which reaches $40.52$ at the same sample size, but the guidance reduces the measured gap between generated and reference samples.

\subsection{Ablation Study Analysis}

Tables~\ref{tab:diversity_metrics_gap_20k_t5_long}, \ref{tab:diversity_metrics_gap_20k_poly3_long}, and \ref{tab:diversity_metrics_gap_20k_cosine_long} examine robustness to embedding and kernel choices. The absolute scale of conditional VNE and conditional RKE changes substantially across feature maps and kernels, as expected, but the qualitative conclusion is unchanged: the training reference has the highest conditional diversity, greedy decoding has the largest gap, and stochastic decoding reduces but does not eliminate the gap. This consistency indicates that the observed conditional diversity gap is not an artifact of a single embedding model or kernel family.

Table~\ref{tab:diversity_metrics_gap_20k_t5_long} repeats the LLM analysis with T5 embeddings. Even under this alternative text representation, all generated groups remain below the corresponding training reference. At $20$K samples, the conditional VNE gaps under ancestral sampling are $10.11$ for OLMo, $6.94$ for Pythia, and $5.63$ for GPT-Neo, while greedy decoding has much larger gaps. Tables~\ref{tab:diversity_metrics_gap_20k_poly3_long} and \ref{tab:diversity_metrics_gap_20k_cosine_long} show the same ranking under degree-3 polynomial and cosine kernels, respectively, although the numerical scale is compressed relative to the default Gaussian-kernel evaluation.

Table~\ref{tab:diversity_metrics_gap_20k_prefix3_long} studies the effect of increasing the prefix length to $4$ tokens with $2$ continuation tokens. The training reference still has higher conditional VNE and conditional RKE than all generated groups for all three models. The gaps are generally smaller than in the default $3$-token-prefix setting, which is consistent with stronger conditioning reducing the range of plausible short continuations. Nevertheless, the decoding-order pattern remains unchanged: greedy decoding is least diverse, followed by nucleus sampling and ancestral sampling.

Table~\ref{tab:token_length_ablation_full} reports the full prefix--continuation length ablation for conditional VNE. Increasing continuation length tends to increase the absolute conditional VNE for both training and generated continuations because the output space becomes larger. The average gap remains positive in every tested configuration, confirming that the conditional diversity gap persists across the evaluated tokenization choices. The gap is often larger for shorter prefixes and longer continuations, where the condition leaves more uncertainty about valid outputs and the model must cover a broader conditional continuation space.

\subsection{Chunk-Size Ablation for Long-Sequence Representations}
\label{app:chunk_size_ablation}

The long-sequence experiments in Section~\ref{sec:long_sequence_gap} represent each sequence as a concatenation of $2$-token segment embeddings. To examine the sensitivity of the reported gap to this choice, we vary the chunk size over $\{1,2,4\}$ tokens per segment on GPT-Neo. Since chunk size $=2$ corresponds to the configuration already reported in Table~\ref{tab:long_sequence_gap}, here we report the two remaining cases: Tables~\ref{tab:chunk_size_1} and \ref{tab:chunk_size_4} show the exponentiated conditional VNE for chunk sizes $1$ and $4$, respectively. Across all three chunk sizes, the training data consistently achieve higher conditional VNE than the generated continuations under every decoding strategy and every prefix--continuation configuration. The absolute scale of the conditional VNE changes with the chunk size, but the qualitative ordering training $>$ ancestral $>$ nucleus $>$ greedy and the positivity of the gap are preserved. This confirms that the observed long-sequence conditional diversity gap is not an artifact of the specific chunk size used in the main long-sequence analysis.

\begin{table}[t]
\centering
\caption{
Chunk-size ablation for long-sequence conditional VNE on GPT-Neo with chunk size $=1$.
We report the exponential of conditional VNE for the training reference and generated continuations under greedy decoding, nucleus sampling, and ancestral sampling, using prefix lengths $\{8,16,32\}$ and continuation lengths $\{16,32,64\}$.
}
\label{tab:chunk_size_1}
\begin{tabular}{llcccc}
\toprule
\textbf{Prefix} & \textbf{Continuation} & \textbf{Training} & \textbf{Greedy} & \textbf{Nucleus} & \textbf{Ancestral} \\
\midrule
$8$  & $16$ & $437.3$ & $337.3$ & $400.7$ & $410.2$ \\
$8$  & $32$ & $474.5$ & $351.8$ & $425.2$ & $436.9$ \\
$8$  & $64$ & $503.7$ & $365.3$ & $446.2$ & $462.3$ \\
$16$ & $16$ & $431.3$ & $380.3$ & $409.3$ & $415.4$ \\
$16$ & $32$ & $455.6$ & $389.7$ & $422.9$ & $432.9$ \\
$16$ & $64$ & $477.9$ & $400.8$ & $435.4$ & $447.5$ \\
$32$ & $16$ & $432.9$ & $406.6$ & $420.3$ & $424.9$ \\
$32$ & $32$ & $446.3$ & $409.3$ & $427.8$ & $433.7$ \\
$32$ & $64$ & $462.1$ & $413.6$ & $434.4$ & $442.3$ \\
\bottomrule
\end{tabular}
\end{table}

\begin{table}[t]
\centering
\caption{
Chunk-size ablation for long-sequence conditional VNE on GPT-Neo with chunk size $=4$.
We report the exponential of conditional VNE for the training reference and generated continuations under greedy decoding, nucleus sampling, and ancestral sampling, using prefix lengths $\{8,16,32\}$ and continuation lengths $\{16,32,64\}$.
}
\label{tab:chunk_size_4}
\begin{tabular}{llcccc}
\toprule
\textbf{Prefix} & \textbf{Continuation} & \textbf{Training} & \textbf{Greedy} & \textbf{Nucleus} & \textbf{Ancestral} \\
\midrule
$8$  & $16$ & $454.7$ & $382.5$ & $430.5$ & $433.4$ \\
$8$  & $32$ & $515.0$ & $415.2$ & $479.8$ & $482.0$ \\
$8$  & $64$ & $572.7$ & $435.4$ & $518.1$ & $525.0$ \\
$16$ & $16$ & $455.0$ & $428.2$ & $443.1$ & $443.1$ \\
$16$ & $32$ & $496.6$ & $454.4$ & $475.9$ & $477.6$ \\
$16$ & $64$ & $541.3$ & $474.1$ & $503.7$ & $507.4$ \\
$32$ & $16$ & $459.8$ & $450.8$ & $455.8$ & $456.5$ \\
$32$ & $32$ & $484.1$ & $468.0$ & $475.5$ & $475.4$ \\
$32$ & $64$ & $514.8$ & $483.0$ & $493.9$ & $495.1$ \\
\bottomrule
\end{tabular}
\end{table}

\subsection{Comparison with Temperature Scaling}
\label{app:temperature_baseline}

A natural question is whether the same diversity gain can be achieved by simply raising the sampling temperature. Theorem~\ref{thm:conditional_entropy_projection_principle} does not necessarily apply to temperature scaling, because changing the temperature need not produce the closest distribution satisfying the entropy constraint. To examine this baseline empirically, we repeated the OLMo experiment using ancestral sampling with different temperatures. Increasing the temperature raises conditional entropy but does not eliminate the gap within the evaluated range; even at a relatively high temperature, the conditional VNE remains below the training-data value. We then searched for a temperature that matches the training-data conditional entropy and found that approximately $T\approx 1.9$ achieves this target. However, matching entropy alone did not match the underlying data distribution.

Table~\ref{tab:temperature_vs_projection} reports two independent quality-oriented measures: precision~\cite{lebronnec2024exploring}, measuring generated probability mass within the estimated support of the training continuations, and external conditional negative log likelihood, computed using an independent language model conditioned on the same prefix. Although $T\approx 1.9$ approximately matches the reference entropy, it yields worse precision and worse external NLL than the training data. In contrast, the proposed projection achieves substantially higher precision and lower external NLL. These results support the distinction motivated by Theorem~\ref{thm:conditional_entropy_projection_principle}: simply matching entropy is insufficient, and the adjustment should be performed through the closest-distribution projection rather than an arbitrary entropy increase.

\begin{table}[t]
\centering
\caption{
Comparison of temperature scaling and the proposed entropy projection on OLMo.
Precision measures generated probability mass within the estimated support of the training continuations~\cite{lebronnec2024exploring}.
External conditional NLL is computed using an independent language model conditioned on the same prefix.
The temperature $T\approx 1.9$ is tuned to approximately match the training-data conditional entropy.
}
\label{tab:temperature_vs_projection}
\begin{tabular}{lcc}
\toprule
\textbf{Method} & \textbf{Precision} $\uparrow$ & \textbf{Ext. Cond. NLL} $\downarrow$ \\
\midrule
Training data & $1.000$ & $4.679$ \\
Temperature $T\approx 1.9$ & $0.930$ & $4.938$ \\
Entropy projection (ours) & $0.979$ & $3.713$ \\
\bottomrule
\end{tabular}
\end{table}

\subsection{Qualitative Probe of the Reweighting Behavior}
\label{app:depeaking_probe}

To inspect how the projection redistributes probability mass, we conduct a controlled probe. We take two prompts, ``A vibrant city in Northern America is \_\_\_'' and ``A renowned celebrity is \_\_\_'', sample $1000$ continuations for each, and reweight them using the conditional entropy projection. The per-output mass shift reveals a clean de-peaking pattern: weight leaves the over-represented dominant modes and moves to equally valid but under-represented alternatives. Table~\ref{tab:depeaking_probe} reports the before/after weights for the top dominant modes and selected up-weighted tail candidates. Quantitatively, the discrete entropy and the corresponding effective number of distinct continuations both increase: for the city probe, the discrete entropy rises from $2.587$ to $2.853$ and the effective number of distinct continuations rises from $13.29$ to $17.34$; for the celebrity probe, the entropy rises from $3.979$ to $4.398$ and the effective number rises from $53.46$ to $81.25$. These examples make the practical effect of the projection transparent: it reduces excessive concentration on dominant generated answers and increases the representation of plausible alternatives already present in the candidate pool, rather than promoting low-quality or implausible outputs.

\begin{table}[t]
\centering
\caption{
Qualitative probe of the entropy projection on two prompts.
We report the probability mass before and after projection for the top down-weighted dominant modes and selected up-weighted tail candidates.
``Before'' and ``After'' denote the mass assigned by the original uniformly weighted candidates and the reweighted candidates, respectively.
}
\label{tab:depeaking_probe}
\resizebox{\textwidth}{!}{
\begin{tabular}{l p{0.40\textwidth} p{0.40\textwidth}}
\toprule
\textbf{Prompt} & \textbf{Down-weighted (dominant modes): Before $\to$ After} & \textbf{Up-weighted (plausible tail): Before $\to$ After} \\
\midrule
``A vibrant city in Northern America is \_\_\_''
& Toronto: $0.206 \to 0.177$ \newline Vancouver: $0.198 \to 0.137$ \newline New York: $0.096 \to 0.068$
& Miami: $0.018 \to 0.046$ \newline Denver: $0.020 \to 0.035$ \newline Philadelphia: $0.010 \to 0.022$ \\
\midrule
``A renowned celebrity is \_\_\_''
& Oprah Winfrey: $0.134 \to 0.064$ \newline Madonna: $0.068 \to 0.036$ \newline Michael Jackson: $0.052 \to 0.020$ \newline Tom Cruise: $0.032 \to 0.020$
& Jackie Chan: $0.010 \to 0.019$ \newline Kate Middleton: $0.007 \to 0.013$ \newline Queen Victoria: $0.004 \to 0.015$ \newline J.~K.~Rowling: $0.002 \to 0.012$ \\
\bottomrule
\end{tabular}
}
\end{table}

\subsection{Lexical Audit of Conditional Diversity}
\label{app:lexical_audit}

To examine whether the observed conditional-entropy ordering also reflects in a directly human-interpretable form, we perform a word-level audit using the same OLMo setting as the main text with $20{,}000$ samples. We count exact, case-insensitive occurrences from three fixed and enumerable semantic lexicons: countries ($192$ terms), sports ($80$ terms), and occupations ($118$ terms). For each category, we normalize the term counts into a categorical distribution and report the number of distinct terms observed as well as the effective number of equally probable lexical choices $2^{H}$, where $H$ is the Shannon entropy of the normalized distribution in bits.

Table~\ref{tab:lexical_audit} reports the results. Importantly, all three independent lexical audits recover the same ordering as conditional VNE: training $>$ ancestral $>$ nucleus $>$ greedy. The effect is also concrete at the level of frequencies. For example, in country mentions, greedy decoding assigns $20.0\%$ of its mass to ``UK'' and $19.8\%$ to ``United States'', compared with only $8.4\%$ and $4.3\%$ in the training continuations. This indicates that the conditional VNE gap corresponds to a measurable shortfall in the semantic variety of generated continuations, not merely a change in the kernel-derived score.

\begin{table*}[t]
\centering
\caption{
Lexical audit of conditional diversity on OLMo at sample size $20$K.
For each semantic category, we report the number of distinct terms observed and the effective number of equally probable lexical choices $2^{H}$, where $H$ is the Shannon entropy of the normalized term-frequency distribution in bits.
The ordering across all three categories matches the conditional VNE ordering: training $>$ ancestral $>$ nucleus $>$ greedy.
}
\label{tab:lexical_audit}
\resizebox{\textwidth}{!}{
\begin{tabular}{lccccccc}
\toprule
\textbf{Group} & \textbf{Cond.VNE}
& \textbf{Countries: distinct} & \textbf{Countries: $2^{H}$}
& \textbf{Sports: distinct} & \textbf{Sports: $2^{H}$}
& \textbf{Occupations: distinct} & \textbf{Occupations: $2^{H}$} \\
\midrule
Training data & $414.2$ & $95$ & $54.4$ & $35$ & $22.4$ & $75$ & $49.9$ \\
Greedy         & $369.4$ & $82$ & $26.9$ & $20$ & $13.6$ & $66$ & $35.9$ \\
Nucleus        & $390.8$ & $82$ & $37.3$ & $25$ & $17.9$ & $69$ & $42.6$ \\
Ancestral      & $394.9$ & $83$ & $41.9$ & $24$ & $18.4$ & $68$ & $44.7$ \\
\bottomrule
\end{tabular}
}
\end{table*}

\subsection{Conditional Diversity Gaps in Larger Language Models}
\label{app:larger_models}

The main experiments evaluate approximately $1$B-parameter models. To examine whether the observed gap is restricted to smaller or earlier models, we additionally evaluate OLMo-3-7B and extend the Pythia experiments to $2.8$B and $6.9$B parameters under the same setting. Tables~\ref{tab:olmo3_7b_cvs} and \ref{tab:pythia_larger_cvs} report the exponentiated conditional VNE across sample sizes from $2.5$K to $20$K.

For OLMo-3-7B, the relative reductions at $20$K are approximately $33.0\%$, $16.2\%$, and $12.0\%$ under greedy, nucleus, and ancestral decoding, respectively. For Pythia-2.8B and Pythia-6.9B, the corresponding gaps also remain substantial. These results indicate that the conditional output-range gap is not restricted to the earlier approximately $1$B-parameter models and does not disappear merely by increasing model size.

\begin{table*}[t]
\centering
\caption{
Conditional diversity scores for OLMo-3-7B across sample sizes.
We report the exponential of conditional VNE for the training reference and generated continuations under greedy decoding, nucleus sampling, and ancestral sampling.
}
\label{tab:olmo3_7b_cvs}
\resizebox{\textwidth}{!}{
\renewcommand{\arraystretch}{1.2}
\begin{tabular}{lcccccccc}
\toprule
\textbf{Group} & \textbf{2.5K} & \textbf{5K} & \textbf{7.5K} & \textbf{10K} & \textbf{12.5K} & \textbf{15K} & \textbf{17.5K} & \textbf{20K} \\
\midrule
Training data & $86.92$  & $138.84$ & $183.04$ & $220.30$ & $254.52$ & $284.20$ & $312.61$ & $339.17$ \\
Greedy        & $71.92$  & $107.87$ & $135.95$ & $158.50$ & $178.47$ & $196.05$ & $212.69$ & $227.40$ \\
Nucleus       & $79.55$  & $125.16$ & $161.26$ & $191.71$ & $217.79$ & $241.40$ & $264.24$ & $284.58$ \\
Ancestral     & $81.80$  & $128.47$ & $166.06$ & $198.25$ & $226.70$ & $252.94$ & $276.80$ & $298.40$ \\
\bottomrule
\end{tabular}
\renewcommand{\arraystretch}{1.0}
}
\end{table*}

\begin{table*}[t]
\centering
\caption{
Conditional diversity scores for Pythia-2.8B and Pythia-6.9B across sample sizes.
We report the exponential of conditional VNE for the training reference and generated continuations under greedy decoding, nucleus sampling, and ancestral sampling.
}
\label{tab:pythia_larger_cvs}
\resizebox{\textwidth}{!}{
\renewcommand{\arraystretch}{1.2}
\begin{tabular}{llcccccccc}
\toprule
\textbf{Model} & \textbf{Group} & \textbf{2.5K} & \textbf{5K} & \textbf{7.5K} & \textbf{10K} & \textbf{12.5K} & \textbf{15K} & \textbf{17.5K} & \textbf{20K} \\
\midrule

\multirow{4}{*}{\rotatebox[origin=c]{90}{Pythia-2.8B}}
& Training data & $72.60$  & $114.80$ & $149.39$ & $177.89$ & $203.14$ & $225.89$ & $247.28$ & $265.87$ \\
& Greedy        & $58.49$  & $86.69$  & $108.65$ & $125.62$ & $140.05$ & $152.45$ & $163.21$ & $172.54$ \\
& Nucleus       & $66.07$  & $101.37$ & $131.11$ & $155.42$ & $176.17$ & $194.62$ & $211.69$ & $226.57$ \\
& Ancestral     & $68.98$  & $105.77$ & $136.45$ & $161.83$ & $182.24$ & $201.95$ & $220.13$ & $236.08$ \\
\midrule

\multirow{4}{*}{\rotatebox[origin=c]{90}{Pythia-6.9B}}
& Training data & $72.60$  & $114.80$ & $149.39$ & $177.89$ & $203.14$ & $225.89$ & $247.28$ & $265.87$ \\
& Greedy        & $58.18$  & $86.95$  & $109.39$ & $126.75$ & $141.22$ & $153.96$ & $164.73$ & $174.13$ \\
& Nucleus       & $67.94$  & $105.16$ & $135.18$ & $159.59$ & $179.85$ & $199.01$ & $215.33$ & $230.26$ \\
& Ancestral     & $68.79$  & $106.71$ & $138.51$ & $164.17$ & $185.90$ & $206.12$ & $224.19$ & $239.50$ \\
\bottomrule
\end{tabular}
\renewcommand{\arraystretch}{1.0}
}
\end{table*}

\begin{table*}[t]
\centering
\caption{
Ablation study with T5 embeddings for kernel-based conditional diversity metrics at sample size 20K.
The standard deviations of conditional VNE and conditional RKE are below $0.01$ across 5 independent runs.
The gap is computed as $\Delta = \mathrm{Metric}_{\mathrm{train}} - \mathrm{Metric}_{\mathrm{group}}$.
Positive gaps indicate lower diversity than the training data.
}
\label{tab:diversity_metrics_gap_20k_t5_long}

\renewcommand{\arraystretch}{1.0}
\begin{tabular*}{0.95\textwidth}{@{\extracolsep{\fill}}cllcc}
\toprule
\textbf{Model} & \textbf{Group} & \textbf{Metric} & \textbf{Value} & $\boldsymbol{\Delta}$ \\
\midrule

\multirow{16}{*}{\rotatebox[origin=c]{90}{\large\textbf{OLMo}}}
& \multirow{4}{*}{Training Data}
& Exp.Cond.VNE  & 49.74 & -- \\
& & Exp.Cond.RKE  & 4.25  & -- \\
& & Distinct-2 & 0.769 & -- \\
& & Distinct-3 & 0.964 & -- \\
\cmidrule(lr){2-5}
& \multirow{4}{*}{Greedy Decoding}
& Exp.Cond.VNE  & 28.22 & 21.52 \\
& & Exp.Cond.RKE  & 3.34  & 0.91 \\
& & Distinct-2 & 0.337 & 0.432 \\
& & Distinct-3 & 0.538 & 0.426 \\
\cmidrule(lr){2-5}
& \multirow{4}{*}{Nucleus Sampling}
& Exp.Cond.VNE  & 37.94 & 11.81 \\
& & Exp.Cond.RKE  & 3.74  & 0.51 \\
& & Distinct-2 & 0.588 & 0.181 \\
& & Distinct-3 & 0.889 & 0.075 \\
\cmidrule(lr){2-5}
& \multirow{4}{*}{Ancestral Sampling}
& Exp.Cond.VNE  & 39.64 & 10.11 \\
& & Exp.Cond.RKE  & 3.81  & 0.44 \\
& & Distinct-2 & 0.627 & 0.142 \\
& & Distinct-3 & 0.917 & 0.047 \\

\midrule

\multirow{16}{*}{\rotatebox[origin=c]{90}{\large\textbf{Pythia}}}
& \multirow{4}{*}{Training Data}
& Exp.Cond.VNE  & 43.06 & -- \\
& & Exp.Cond.RKE  & 4.15  & -- \\
& & Distinct-2 & 0.755 & -- \\
& & Distinct-3 & 0.930 & -- \\
\cmidrule(lr){2-5}
& \multirow{4}{*}{Greedy Decoding}
& Exp.Cond.VNE  & 26.36 & 16.70 \\
& & Exp.Cond.RKE  & 3.43  & 0.72 \\
& & Distinct-2 & 0.325 & 0.430 \\
& & Distinct-3 & 0.497 & 0.433 \\
\cmidrule(lr){2-5}
& \multirow{4}{*}{Nucleus Sampling}
& Exp.Cond.VNE  & 34.31 & 8.75 \\
& & Exp.Cond.RKE  & 3.76  & 0.39 \\
& & Distinct-2 & 0.597 & 0.158 \\
& & Distinct-3 & 0.866 & 0.064 \\
\cmidrule(lr){2-5}
& \multirow{4}{*}{Ancestral Sampling}
& Exp.Cond.VNE  & 36.12 & 6.94 \\
& & Exp.Cond.RKE  & 3.83  & 0.32 \\
& & Distinct-2 & 0.632 & 0.123 \\
& & Distinct-3 & 0.890 & 0.040 \\

\midrule

\multirow{16}{*}{\rotatebox[origin=c]{90}{\large\textbf{GPT-Neo}}}
& \multirow{4}{*}{Training Data}
& Exp.Cond.VNE  & 43.07 & -- \\
& & Exp.Cond.RKE  & 4.15  & -- \\
& & Distinct-2 & 0.755 & -- \\
& & Distinct-3 & 0.929 & -- \\
\cmidrule(lr){2-5}
& \multirow{4}{*}{Greedy Decoding}
& Exp.Cond.VNE  & 27.18 & 14.49 \\
& & Exp.Cond.RKE  & 3.51  & 0.64 \\
& & Distinct-2 & 0.344 & 0.411 \\
& & Distinct-3 & 0.510 & 0.419 \\
\cmidrule(lr){2-5}
& \multirow{4}{*}{Nucleus Sampling}
& Exp.Cond.VNE  & 34.21 & 7.46 \\
& & Exp.Cond.RKE  & 3.78  & 0.37 \\
& & Distinct-2 & 0.599 & 0.156 \\
& & Distinct-3 & 0.856 & 0.073 \\
\cmidrule(lr){2-5}
& \multirow{4}{*}{Ancestral Sampling}
& Exp.Cond.VNE  & 36.04 & 5.63 \\
& & Exp.Cond.RKE  & 3.85  & 0.30 \\
& & Distinct-2 & 0.638 & 0.117 \\
& & Distinct-3 & 0.882 & 0.047 \\

\bottomrule
\end{tabular*}
\renewcommand{\arraystretch}{1.0}

\end{table*}

\begin{table*}[t]
\centering
\caption{
Ablation study with a degree-3 polynomial kernel for kernel-based conditional diversity metrics at sample size 20K.
The standard deviations of conditional VNE and conditional RKE are below $0.01$ across 5 independent runs.
The gap is computed as $\Delta = \mathrm{Metric}_{\mathrm{train}} - \mathrm{Metric}_{\mathrm{group}}$.
Positive gaps indicate lower diversity than the training data.
}
\label{tab:diversity_metrics_gap_20k_poly3_long}

\renewcommand{\arraystretch}{1.0}
\begin{tabular*}{0.95\textwidth}{@{\extracolsep{\fill}}cllcc}
\toprule
\textbf{Model} & \textbf{Group} & \textbf{Metric} & \textbf{Value} & $\boldsymbol{\Delta}$ \\
\midrule

\multirow{16}{*}{\rotatebox[origin=c]{90}{\large\textbf{OLMo}}}
& \multirow{4}{*}{Training Data}
& Exp.Cond.VNE  & 8.71  & -- \\
& & Exp.Cond.RKE  & 2.54  & -- \\
& & Distinct-2 & 0.769 & -- \\
& & Distinct-3 & 0.964 & -- \\
\cmidrule(lr){2-5}
& \multirow{4}{*}{Greedy Decoding}
& Exp.Cond.VNE  & 6.28  & 2.43 \\
& & Exp.Cond.RKE  & 2.41  & 0.13 \\
& & Distinct-2 & 0.337 & 0.432 \\
& & Distinct-3 & 0.538 & 0.426 \\
\cmidrule(lr){2-5}
& \multirow{4}{*}{Nucleus Sampling}
& Exp.Cond.VNE  & 7.56  & 1.15 \\
& & Exp.Cond.RKE  & 2.46  & 0.08 \\
& & Distinct-2 & 0.588 & 0.181 \\
& & Distinct-3 & 0.889 & 0.075 \\
\cmidrule(lr){2-5}
& \multirow{4}{*}{Ancestral Sampling}
& Exp.Cond.VNE  & 7.68  & 1.03 \\
& & Exp.Cond.RKE  & 2.47  & 0.07 \\
& & Distinct-2 & 0.627 & 0.142 \\
& & Distinct-3 & 0.917 & 0.047 \\

\midrule

\multirow{16}{*}{\rotatebox[origin=c]{90}{\large\textbf{Pythia}}}
& \multirow{4}{*}{Training Data}
& Exp.Cond.VNE  & 7.64  & -- \\
& & Exp.Cond.RKE  & 2.50  & -- \\
& & Distinct-2 & 0.755 & -- \\
& & Distinct-3 & 0.930 & -- \\
\cmidrule(lr){2-5}
& \multirow{4}{*}{Greedy Decoding}
& Exp.Cond.VNE  & 5.82  & 1.82 \\
& & Exp.Cond.RKE  & 2.26  & 0.24 \\
& & Distinct-2 & 0.325 & 0.430 \\
& & Distinct-3 & 0.497 & 0.433 \\
\cmidrule(lr){2-5}
& \multirow{4}{*}{Nucleus Sampling}
& Exp.Cond.VNE  & 7.14  & 0.50 \\
& & Exp.Cond.RKE  & 2.36  & 0.14 \\
& & Distinct-2 & 0.597 & 0.158 \\
& & Distinct-3 & 0.866 & 0.064 \\
\cmidrule(lr){2-5}
& \multirow{4}{*}{Ancestral Sampling}
& Exp.Cond.VNE  & 7.21  & 0.43 \\
& & Exp.Cond.RKE  & 2.37  & 0.13 \\
& & Distinct-2 & 0.632 & 0.123 \\
& & Distinct-3 & 0.890 & 0.040 \\

\midrule

\multirow{16}{*}{\rotatebox[origin=c]{90}{\large\textbf{GPT-Neo}}}
& \multirow{4}{*}{Training Data}
& Exp.Cond.VNE  & 7.64  & -- \\
& & Exp.Cond.RKE  & 2.50  & -- \\
& & Distinct-2 & 0.755 & -- \\
& & Distinct-3 & 0.929 & -- \\
\cmidrule(lr){2-5}
& \multirow{4}{*}{Greedy Decoding}
& Exp.Cond.VNE  & 5.72  & 1.92 \\
& & Exp.Cond.RKE  & 2.24  & 0.26 \\
& & Distinct-2 & 0.344 & 0.411 \\
& & Distinct-3 & 0.510 & 0.419 \\
\cmidrule(lr){2-5}
& \multirow{4}{*}{Nucleus Sampling}
& Exp.Cond.VNE  & 7.01  & 0.63 \\
& & Exp.Cond.RKE  & 2.33  & 0.17 \\
& & Distinct-2 & 0.599 & 0.156 \\
& & Distinct-3 & 0.856 & 0.073 \\
\cmidrule(lr){2-5}
& \multirow{4}{*}{Ancestral Sampling}
& Exp.Cond.VNE  & 7.05  & 0.59 \\
& & Exp.Cond.RKE  & 2.34  & 0.16 \\
& & Distinct-2 & 0.638 & 0.117 \\
& & Distinct-3 & 0.882 & 0.047 \\

\bottomrule
\end{tabular*}
\renewcommand{\arraystretch}{1.0}

\end{table*}

\begin{table}[t]
\centering
\caption{
Ablation study with a cosine kernel for kernel-based conditional diversity metrics at sample size 20K.
The standard deviations of conditional VNE and conditional RKE are below $0.01$ across 5 independent runs.
The gap is computed as $\Delta = \mathrm{Metric}_{\mathrm{train}} - \mathrm{Metric}_{\mathrm{group}}$.
Positive gaps indicate lower diversity than the training data.
}
\label{tab:diversity_metrics_gap_20k_cosine_long}

\renewcommand{\arraystretch}{1.0}
\begin{tabular*}{0.95\textwidth}{@{\extracolsep{\fill}}cllcc}
\toprule
\textbf{Model} & \textbf{Group} & \textbf{Metric} & \textbf{Value} & $\boldsymbol{\Delta}$ \\
\midrule

\multirow{16}{*}{\rotatebox[origin=c]{90}{\large\textbf{OLMo}}}
& \multirow{4}{*}{Training Data}
& Exp.Cond.VNE  & 2.72  & -- \\
& & Exp.Cond.RKE  & 1.35  & -- \\
& & Distinct-2 & 0.769 & -- \\
& & Distinct-3 & 0.964 & -- \\
\cmidrule(lr){2-5}
& \multirow{4}{*}{Greedy Decoding}
& Exp.Cond.VNE  & 2.15  & 0.57 \\
& & Exp.Cond.RKE  & 1.24  & 0.11 \\
& & Distinct-2 & 0.337 & 0.432 \\
& & Distinct-3 & 0.538 & 0.426 \\
\cmidrule(lr){2-5}
& \multirow{4}{*}{Nucleus Sampling}
& Exp.Cond.VNE  & 2.38  & 0.34 \\
& & Exp.Cond.RKE  & 1.28  & 0.07 \\
& & Distinct-2 & 0.588 & 0.181 \\
& & Distinct-3 & 0.889 & 0.075 \\
\cmidrule(lr){2-5}
& \multirow{4}{*}{Ancestral Sampling}
& Exp.Cond.VNE  & 2.43  & 0.29 \\
& & Exp.Cond.RKE  & 1.29  & 0.06 \\
& & Distinct-2 & 0.627 & 0.142 \\
& & Distinct-3 & 0.917 & 0.047 \\

\midrule

\multirow{16}{*}{\rotatebox[origin=c]{90}{\large\textbf{Pythia}}}
& \multirow{4}{*}{Training Data}
& Exp.Cond.VNE  & 2.67  & -- \\
& & Exp.Cond.RKE  & 1.34  & -- \\
& & Distinct-2 & 0.755 & -- \\
& & Distinct-3 & 0.930 & -- \\
\cmidrule(lr){2-5}
& \multirow{4}{*}{Greedy Decoding}
& Exp.Cond.VNE  & 2.19  & 0.47 \\
& & Exp.Cond.RKE  & 1.25  & 0.09 \\
& & Distinct-2 & 0.325 & 0.430 \\
& & Distinct-3 & 0.497 & 0.433 \\
\cmidrule(lr){2-5}
& \multirow{4}{*}{Nucleus Sampling}
& Exp.Cond.VNE  & 2.43  & 0.24 \\
& & Exp.Cond.RKE  & 1.29  & 0.05 \\
& & Distinct-2 & 0.597 & 0.158 \\
& & Distinct-3 & 0.866 & 0.064 \\
\cmidrule(lr){2-5}
& \multirow{4}{*}{Ancestral Sampling}
& Exp.Cond.VNE  & 2.46  & 0.20 \\
& & Exp.Cond.RKE  & 1.30  & 0.04 \\
& & Distinct-2 & 0.632 & 0.123 \\
& & Distinct-3 & 0.890 & 0.040 \\

\midrule

\multirow{16}{*}{\rotatebox[origin=c]{90}{\large\textbf{GPT-Neo}}}
& \multirow{4}{*}{Training Data}
& Exp.Cond.VNE  & 2.67  & -- \\
& & Exp.Cond.RKE  & 1.34  & -- \\
& & Distinct-2 & 0.755 & -- \\
& & Distinct-3 & 0.929 & -- \\
\cmidrule(lr){2-5}
& \multirow{4}{*}{Greedy Decoding}
& Exp.Cond.VNE  & 2.27  & 0.41 \\
& & Exp.Cond.RKE  & 1.27  & 0.07 \\
& & Distinct-2 & 0.344 & 0.411 \\
& & Distinct-3 & 0.510 & 0.419 \\
\cmidrule(lr){2-5}
& \multirow{4}{*}{Nucleus Sampling}
& Exp.Cond.VNE  & 2.46  & 0.22 \\
& & Exp.Cond.RKE  & 1.29  & 0.05 \\
& & Distinct-2 & 0.599 & 0.156 \\
& & Distinct-3 & 0.856 & 0.073 \\
\cmidrule(lr){2-5}
& \multirow{4}{*}{Ancestral Sampling}
& Exp.Cond.VNE  & 2.50  & 0.17 \\
& & Exp.Cond.RKE  & 1.30  & 0.04 \\
& & Distinct-2 & 0.638 & 0.117 \\
& & Distinct-3 & 0.882 & 0.047 \\

\bottomrule
\end{tabular*}
\renewcommand{\arraystretch}{1.0}

\end{table}

\begin{table*}[t]
\centering
\caption{
Ablation study on prefix token length, with prefix token length set to 4 at sample size 20K.
The gap is computed as $\Delta = \mathrm{Metric}_{\mathrm{train}} - \mathrm{Metric}_{\mathrm{group}}$.
Positive gaps indicate lower diversity than the training data.
}
\label{tab:diversity_metrics_gap_20k_prefix3_long}

\renewcommand{\arraystretch}{1.0}
\begin{tabular*}{0.95\textwidth}{@{\extracolsep{\fill}}cllcc}
\toprule
\textbf{Model} & \textbf{Group} & \textbf{Metric} & \textbf{Value} & $\boldsymbol{\Delta}$ \\
\midrule

\multirow{16}{*}{\rotatebox[origin=c]{90}{\large\textbf{OLMo}}}
& \multirow{4}{*}{Training Data}
& Exp.Cond.VNE  & 310.48 & -- \\
& & Exp.Cond.RKE  & 100.06 & -- \\
& & Distinct-2 & 0.709 & -- \\
& & Distinct-3 & 0.947 & -- \\
\cmidrule(lr){2-5}
& \multirow{4}{*}{Greedy Decoding}
& Exp.Cond.VNE  & 245.72 & 64.77 \\
& & Exp.Cond.RKE  & 72.08  & 27.98 \\
& & Distinct-2 & 0.323 & 0.386 \\
& & Distinct-3 & 0.545 & 0.401 \\
\cmidrule(lr){2-5}
& \multirow{4}{*}{Nucleus Sampling}
& Exp.Cond.VNE  & 281.28 & 29.20 \\
& & Exp.Cond.RKE  & 83.67  & 16.39 \\
& & Distinct-2 & 0.534 & 0.175 \\
& & Distinct-3 & 0.858 & 0.089 \\
\cmidrule(lr){2-5}
& \multirow{4}{*}{Ancestral Sampling}
& Exp.Cond.VNE  & 287.63 & 22.85 \\
& & Exp.Cond.RKE  & 86.29  & 13.77 \\
& & Distinct-2 & 0.572 & 0.137 \\
& & Distinct-3 & 0.892 & 0.055 \\

\midrule

\multirow{16}{*}{\rotatebox[origin=c]{90}{\large\textbf{Pythia}}}
& \multirow{4}{*}{Training Data}
& Exp.Cond.VNE  & 240.47 & -- \\
& & Exp.Cond.RKE  & 84.79  & -- \\
& & Distinct-2 & 0.708 & -- \\
& & Distinct-3 & 0.918 & -- \\
\cmidrule(lr){2-5}
& \multirow{4}{*}{Greedy Decoding}
& Exp.Cond.VNE  & 193.58 & 46.88 \\
& & Exp.Cond.RKE  & 67.28  & 17.51 \\
& & Distinct-2 & 0.310 & 0.398 \\
& & Distinct-3 & 0.504 & 0.414 \\
\cmidrule(lr){2-5}
& \multirow{4}{*}{Nucleus Sampling}
& Exp.Cond.VNE  & 223.86 & 16.61 \\
& & Exp.Cond.RKE  & 76.50  & 8.29 \\
& & Distinct-2 & 0.537 & 0.171 \\
& & Distinct-3 & 0.834 & 0.084 \\
\cmidrule(lr){2-5}
& \multirow{4}{*}{Ancestral Sampling}
& Exp.Cond.VNE  & 229.11 & 11.36 \\
& & Exp.Cond.RKE  & 78.60  & 6.19 \\
& & Distinct-2 & 0.578 & 0.130 \\
& & Distinct-3 & 0.865 & 0.053 \\

\midrule

\multirow{16}{*}{\rotatebox[origin=c]{90}{\large\textbf{GPT-Neo}}}
& \multirow{4}{*}{Training Data}
& Exp.Cond.VNE  & 240.47 & -- \\
& & Exp.Cond.RKE  & 84.79  & -- \\
& & Distinct-2 & 0.709 & -- \\
& & Distinct-3 & 0.917 & -- \\
\cmidrule(lr){2-5}
& \multirow{4}{*}{Greedy Decoding}
& Exp.Cond.VNE  & 194.98 & 45.49 \\
& & Exp.Cond.RKE  & 70.01  & 14.78 \\
& & Distinct-2 & 0.330 & 0.379 \\
& & Distinct-3 & 0.510 & 0.407 \\
\cmidrule(lr){2-5}
& \multirow{4}{*}{Nucleus Sampling}
& Exp.Cond.VNE  & 223.74 & 16.73 \\
& & Exp.Cond.RKE  & 77.72  & 7.07 \\
& & Distinct-2 & 0.547 & 0.162 \\
& & Distinct-3 & 0.831 & 0.086 \\
\cmidrule(lr){2-5}
& \multirow{4}{*}{Ancestral Sampling}
& Exp.Cond.VNE  & 229.45 & 11.02 \\
& & Exp.Cond.RKE  & 79.69  & 5.10 \\
& & Distinct-2 & 0.582 & 0.127 \\
& & Distinct-3 & 0.863 & 0.055 \\

\bottomrule
\end{tabular*}
\renewcommand{\arraystretch}{1.0}

\end{table*}

\begin{table*}[t]
\centering
\caption{
Ablation study on sample size for diversity metrics across models and generation strategies at sample size 10K.
The gap is computed as $\Delta = \mathrm{Metric}_{\mathrm{train}} - \mathrm{Metric}_{\mathrm{group}}$.
Positive gaps indicate lower diversity than the training data.
}
\label{tab:diversity_metrics_gap_10k_long}

\renewcommand{\arraystretch}{1.0}
\begin{tabular*}{0.95\textwidth}{@{\extracolsep{\fill}}cllcc}
\toprule
\textbf{Model} & \textbf{Group} & \textbf{Metric} & \textbf{Value} & $\boldsymbol{\Delta}$ \\
\midrule

\multirow{16}{*}{\rotatebox[origin=c]{90}{\large\textbf{OLMo}}}
& \multirow{4}{*}{Training Data}
& Exp.Cond.VNE  & 221.31 & -- \\
& & Exp.Cond.RKE  & 67.33  & -- \\
& & Distinct-2 & 0.769  & -- \\
& & Distinct-3 & 0.964  & -- \\
\cmidrule(lr){2-5}
& \multirow{4}{*}{Greedy Decoding}
& Exp.Cond.VNE  & 154.97 & 66.34 \\
& & Exp.Cond.RKE  & 46.19  & 21.14 \\
& & Distinct-2 & 0.337  & 0.432 \\
& & Distinct-3 & 0.538  & 0.426 \\
\cmidrule(lr){2-5}
& \multirow{4}{*}{Nucleus Sampling}
& Exp.Cond.VNE  & 193.58 & 27.73 \\
& & Exp.Cond.RKE  & 55.44  & 11.89 \\
& & Distinct-2 & 0.588  & 0.181 \\
& & Distinct-3 & 0.889  & 0.075 \\
\cmidrule(lr){2-5}
& \multirow{4}{*}{Ancestral Sampling}
& Exp.Cond.VNE  & 198.51 & 22.80 \\
& & Exp.Cond.RKE  & 56.92  & 10.41 \\
& & Distinct-2 & 0.627  & 0.142 \\
& & Distinct-3 & 0.917  & 0.047 \\

\midrule

\multirow{16}{*}{\rotatebox[origin=c]{90}{\large\textbf{Pythia}}}
& \multirow{4}{*}{Training Data}
& Exp.Cond.VNE  & 176.51 & -- \\
& & Exp.Cond.RKE  & 52.89  & -- \\
& & Distinct-2 & 0.755  & -- \\
& & Distinct-3 & 0.930  & -- \\
\cmidrule(lr){2-5}
& \multirow{4}{*}{Greedy Decoding}
& Exp.Cond.VNE  & 127.46 & 49.05 \\
& & Exp.Cond.RKE  & 40.37  & 12.52 \\
& & Distinct-2 & 0.325  & 0.430 \\
& & Distinct-3 & 0.497  & 0.433 \\
\cmidrule(lr){2-5}
& \multirow{4}{*}{Nucleus Sampling}
& Exp.Cond.VNE  & 159.36 & 17.15 \\
& & Exp.Cond.RKE  & 46.84  & 6.05 \\
& & Distinct-2 & 0.597  & 0.158 \\
& & Distinct-3 & 0.866  & 0.064 \\
\cmidrule(lr){2-5}
& \multirow{4}{*}{Ancestral Sampling}
& Exp.Cond.VNE  & 165.10 & 11.41 \\
& & Exp.Cond.RKE  & 48.27  & 4.62 \\
& & Distinct-2 & 0.632  & 0.123 \\
& & Distinct-3 & 0.890  & 0.040 \\

\midrule

\multirow{16}{*}{\rotatebox[origin=c]{90}{\large\textbf{GPT-Neo}}}
& \multirow{4}{*}{Training Data}
& Exp.Cond.VNE  & 175.98 & -- \\
& & Exp.Cond.RKE  & 52.60  & -- \\
& & Distinct-2 & 0.755  & -- \\
& & Distinct-3 & 0.929  & -- \\
\cmidrule(lr){2-5}
& \multirow{4}{*}{Greedy Decoding}
& Exp.Cond.VNE  & 130.11 & 45.87 \\
& & Exp.Cond.RKE  & 43.03  & 9.57 \\
& & Distinct-2 & 0.344  & 0.411 \\
& & Distinct-3 & 0.510  & 0.419 \\
\cmidrule(lr){2-5}
& \multirow{4}{*}{Nucleus Sampling}
& Exp.Cond.VNE  & 159.35 & 16.63 \\
& & Exp.Cond.RKE  & 47.93  & 4.67 \\
& & Distinct-2 & 0.599  & 0.156 \\
& & Distinct-3 & 0.856  & 0.073 \\
\cmidrule(lr){2-5}
& \multirow{4}{*}{Ancestral Sampling}
& Exp.Cond.VNE  & 163.39 & 12.59 \\
& & Exp.Cond.RKE  & 48.71  & 3.89 \\
& & Distinct-2 & 0.638  & 0.117 \\
& & Distinct-3 & 0.882  & 0.047 \\

\bottomrule
\end{tabular*}
\renewcommand{\arraystretch}{1.0}

\end{table*}

\begin{table*}[t]
\centering
\caption{
Ablation study on sample size for diversity metrics across models and generation strategies at sample size 20K.
The gap is computed as $\Delta = \mathrm{Metric}_{\mathrm{train}} - \mathrm{Metric}_{\mathrm{group}}$.
Positive gaps indicate lower diversity than the training data.
}
\label{tab:diversity_metrics_gap_20k_full_appendix}

\renewcommand{\arraystretch}{1.0}
\begin{tabular*}{0.95\textwidth}{@{\extracolsep{\fill}}cllcc}
\toprule
\textbf{Model} & \textbf{Group} & \textbf{Metric} & \textbf{Value} & $\boldsymbol{\Delta}$ \\
\midrule

\multirow{16}{*}{\rotatebox[origin=c]{90}{\large\textbf{OLMo}}}
& \multirow{4}{*}{Training Data}
& Exp.Cond.VNE  & 338.58 & -- \\
& & Exp.Cond.RKE  & 67.43  & -- \\
& & Distinct-2 & 0.709  & -- \\
& & Distinct-3 & 0.946  & -- \\
\cmidrule(lr){2-5}
& \multirow{4}{*}{Greedy Decoding}
& Exp.Cond.VNE  & 218.88 & 119.70 \\
& & Exp.Cond.RKE  & 46.17  & 21.26 \\
& & Distinct-2 & 0.276  & 0.433 \\
& & Distinct-3 & 0.465  & 0.481 \\
\cmidrule(lr){2-5}
& \multirow{4}{*}{Nucleus Sampling}
& Exp.Cond.VNE  & 287.31 & 51.27 \\
& & Exp.Cond.RKE  & 55.08  & 12.35 \\
& & Distinct-2 & 0.516  & 0.193 \\
& & Distinct-3 & 0.847  & 0.099 \\
\cmidrule(lr){2-5}
& \multirow{4}{*}{Ancestral Sampling}
& Exp.Cond.VNE  & 297.31 & 41.27 \\
& & Exp.Cond.RKE  & 56.81  & 10.62 \\
& & Distinct-2 & 0.558  & 0.151 \\
& & Distinct-3 & 0.883  & 0.063 \\

\midrule

\multirow{16}{*}{\rotatebox[origin=c]{90}{\large\textbf{Pythia}}}
& \multirow{4}{*}{Training Data}
& Exp.Cond.VNE  & 262.55 & -- \\
& & Exp.Cond.RKE  & 53.44  & -- \\
& & Distinct-2 & 0.701  & -- \\
& & Distinct-3 & 0.913  & -- \\
\cmidrule(lr){2-5}
& \multirow{4}{*}{Greedy Decoding}
& Exp.Cond.VNE  & 176.39 & 86.16 \\
& & Exp.Cond.RKE  & 40.98  & 12.46 \\
& & Distinct-2 & 0.266  & 0.435 \\
& & Distinct-3 & 0.423  & 0.490 \\
\cmidrule(lr){2-5}
& \multirow{4}{*}{Nucleus Sampling}
& Exp.Cond.VNE  & 232.24 & 30.31 \\
& & Exp.Cond.RKE  & 47.51  & 5.93 \\
& & Distinct-2 & 0.525  & 0.176 \\
& & Distinct-3 & 0.826  & 0.087 \\
\cmidrule(lr){2-5}
& \multirow{4}{*}{Ancestral Sampling}
& Exp.Cond.VNE  & 241.79 & 20.76 \\
& & Exp.Cond.RKE  & 49.00  & 4.44 \\
& & Distinct-2 & 0.564  & 0.137 \\
& & Distinct-3 & 0.857  & 0.056 \\

\midrule

\multirow{16}{*}{\rotatebox[origin=c]{90}{\large\textbf{GPT-Neo}}}
& \multirow{4}{*}{Training Data}
& Exp.Cond.VNE  & 260.01 & -- \\
& & Exp.Cond.RKE  & 52.98  & -- \\
& & Distinct-2 & 0.701  & -- \\
& & Distinct-3 & 0.912  & -- \\
\cmidrule(lr){2-5}
& \multirow{4}{*}{Greedy Decoding}
& Exp.Cond.VNE  & 179.36 & 80.65 \\
& & Exp.Cond.RKE  & 43.34  & 9.64 \\
& & Distinct-2 & 0.284  & 0.417 \\
& & Distinct-3 & 0.437  & 0.475 \\
\cmidrule(lr){2-5}
& \multirow{4}{*}{Nucleus Sampling}
& Exp.Cond.VNE  & 229.62 & 30.39 \\
& & Exp.Cond.RKE  & 48.48  & 4.50 \\
& & Distinct-2 & 0.526  & 0.175 \\
& & Distinct-3 & 0.812  & 0.100 \\
\cmidrule(lr){2-5}
& \multirow{4}{*}{Ancestral Sampling}
& Exp.Cond.VNE  & 238.50 & 21.51 \\
& & Exp.Cond.RKE  & 49.59  & 3.39 \\
& & Distinct-2 & 0.570  & 0.131 \\
& & Distinct-3 & 0.848  & 0.064 \\

\bottomrule
\end{tabular*}
\renewcommand{\arraystretch}{1.0}

\end{table*}

\begin{table*}[t]
\centering
\caption{
Full prefix and continuation length ablation for LLM conditional VNE.
We report the exponential of conditional VNE for the training reference and generated continuations under greedy decoding, nucleus sampling, and ancestral sampling.
The average gap is computed across the three decoding strategies.
}
\label{tab:token_length_ablation_full}
\resizebox{\textwidth}{!}{
\begin{tabular}{lccccccc}
\toprule
\textbf{Model}
& \textbf{Prefix}
& \textbf{Continuation}
& \textbf{Training}
& \textbf{Greedy}
& \textbf{Nucleus}
& \textbf{Ancestral}
& \textbf{Avg. Gap} \\
\midrule
\multirow{6}{*}{OLMo}
& $3$ & $2$ & 338.58 & 218.88 & 287.31 & 297.30 & 70.75 \\
& $3$ & $3$ & 617.63 & 404.37 & 564.52 & 581.67 & 100.78 \\
& $3$ & $4$ & 741.13 & 517.42 & 716.87 & 726.35 & 87.58 \\
& $4$ & $2$ & 310.48 & 245.72 & 281.27 & 287.63 & 38.94 \\
& $4$ & $3$ & 470.69 & 388.95 & 450.11 & 455.52 & 39.16 \\
& $4$ & $4$ & 532.24 & 463.86 & 525.02 & 529.51 & 26.11 \\
\midrule
\multirow{6}{*}{Pythia}
& $3$ & $2$ & 262.55 & 176.39 & 232.24 & 241.79 & 45.74 \\
& $3$ & $3$ & 492.00 & 301.65 & 439.49 & 456.91 & 92.65 \\
& $3$ & $4$ & 604.57 & 397.22 & 575.02 & 589.29 & 84.06 \\
& $4$ & $2$ & 240.46 & 193.58 & 223.85 & 229.10 & 24.95 \\
& $4$ & $3$ & 370.43 & 285.74 & 345.38 & 352.88 & 42.43 \\
& $4$ & $4$ & 423.95 & 342.08 & 408.47 & 414.80 & 35.50 \\
\midrule
\multirow{6}{*}{GPT-Neo}
& $3$ & $2$ & 262.55 & 179.36 & 229.62 & 238.49 & 46.73 \\
& $3$ & $3$ & 491.53 & 301.45 & 432.98 & 452.09 & 96.02 \\
& $3$ & $4$ & 607.21 & 391.38 & 561.90 & 579.97 & 96.13 \\
& $4$ & $2$ & 240.47 & 194.97 & 223.74 & 229.45 & 24.42 \\
& $4$ & $3$ & 371.87 & 284.75 & 341.55 & 352.06 & 45.75 \\
& $4$ & $4$ & 426.57 & 339.42 & 404.99 & 412.19 & 41.04 \\
\bottomrule
\end{tabular}
}
\end{table*}

\begin{figure}[t]
\begin{center}
\centerline{\includegraphics[width=\textwidth]{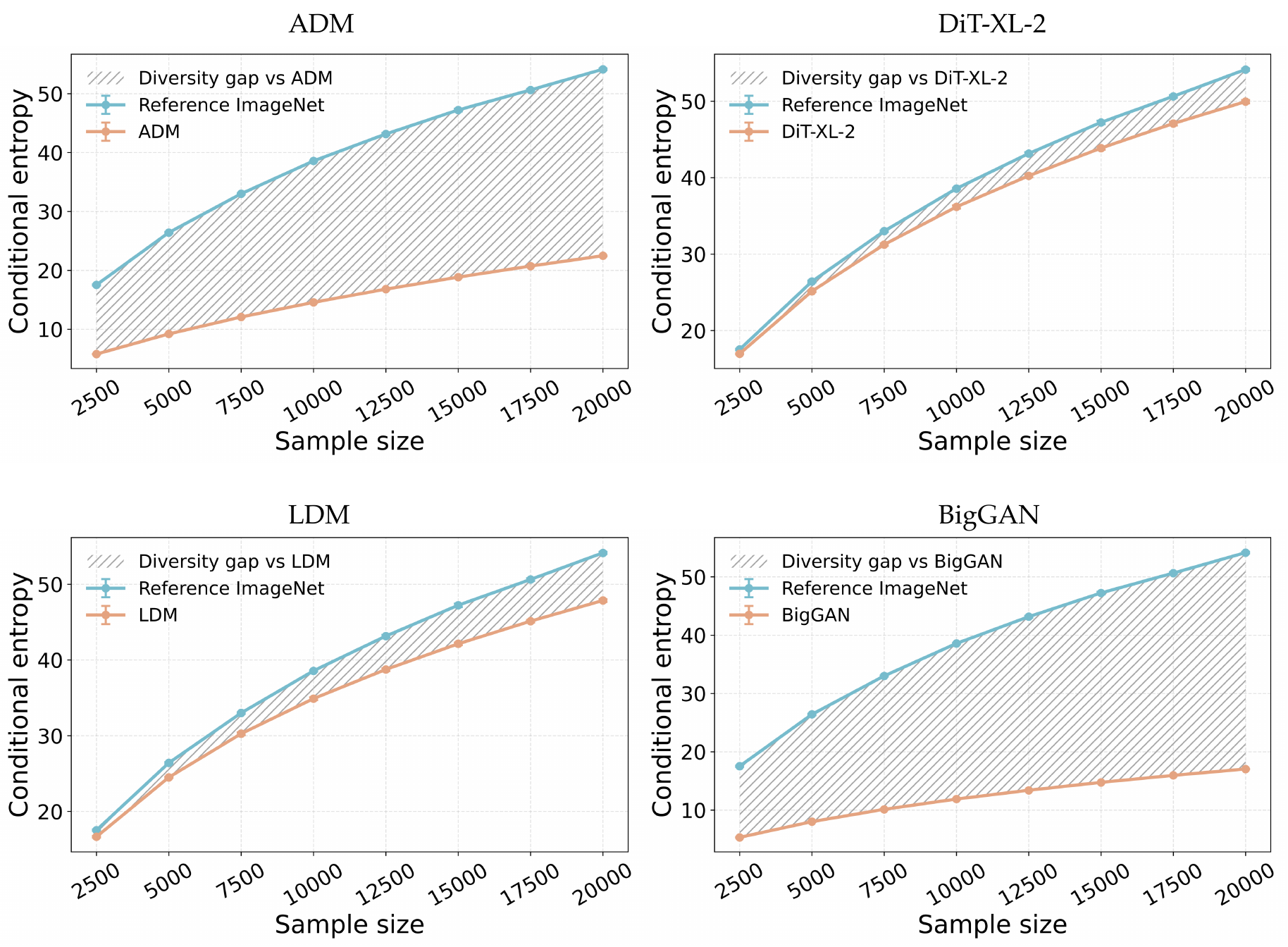}}
\caption{
Conditional diversity gap on class-conditioned ImageNet generation. 
We plot conditional VNE as a function of sample size for the ImageNet training-data reference and generated samples from ADM, DiT-XL-2, LDM, and BigGAN under matched class labels. 
Across all evaluated sample sizes, the reference ImageNet data achieves higher conditional VNE than the generated samples, and the gap generally increases as the sample size grows.
}
\label{fig:imagenet_gap}
\end{center}
\end{figure}

\begin{figure}[t]
\begin{center}
\centerline{\includegraphics[width=\textwidth]{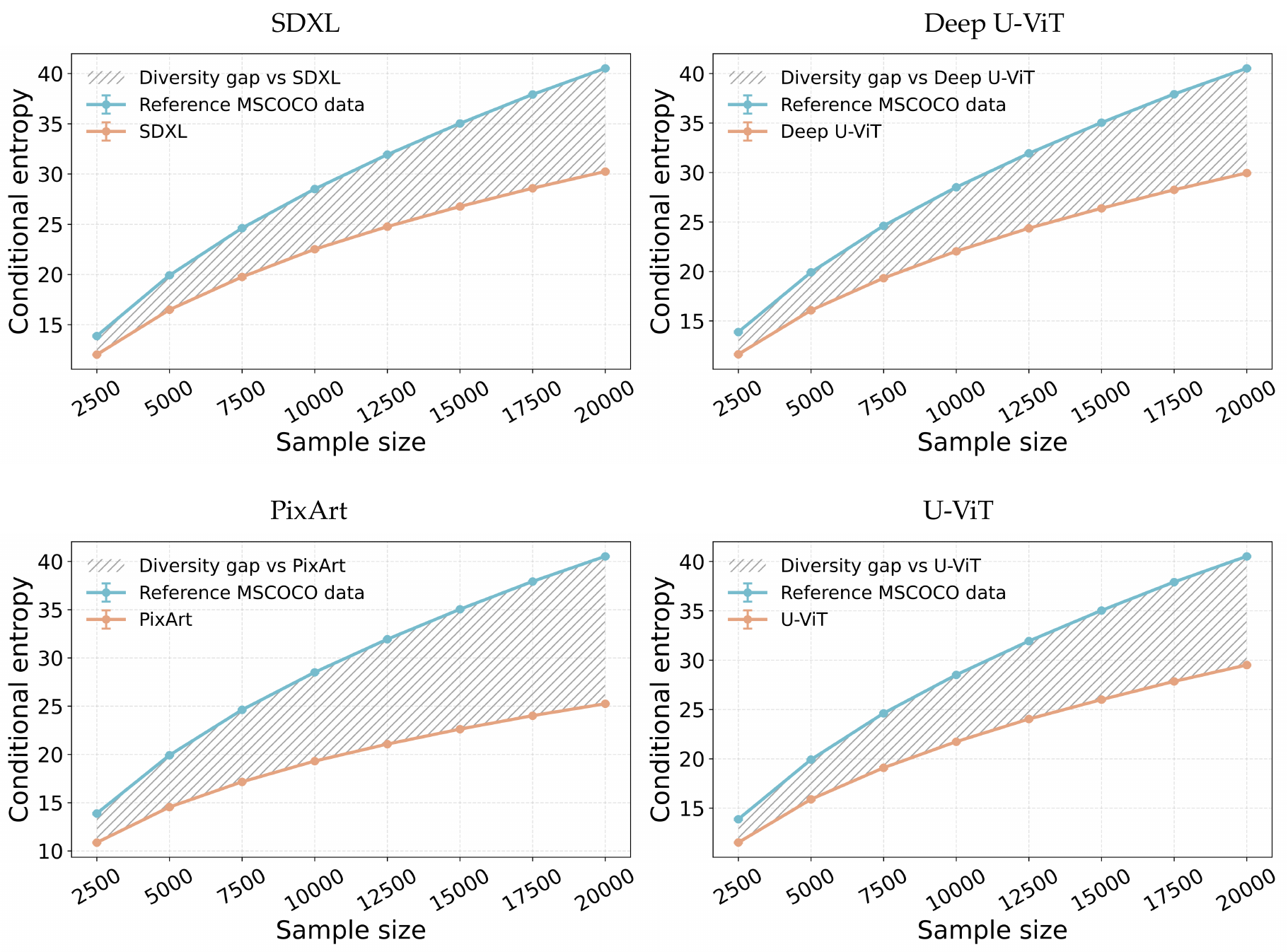}}
\caption{
Conditional diversity gap on text-conditioned MS-COCO generation. 
We plot conditional VNE as a function of sample size for the MS-COCO real-data reference and generated samples from SDXL, PixArt, U-ViT, and Deep U-ViT under matched text captions. 
Across all evaluated sample sizes, the real-data reference achieves higher conditional VNE than the generated samples, and the gap generally becomes larger with increasing sample size.
}
\label{fig:coco_gap}
\end{center}
\end{figure}

\begin{table*}[t]
\centering
\caption{
Conditional diversity scores for class-conditioned ImageNet generation across sample sizes.
Rows correspond to fixed sample sizes, and columns correspond to the ImageNet reference data and evaluated generative models.
We report the exponential of conditional VNE as mean $\pm$ standard deviation over 5 independent runs.
}
\label{tab:imagenet_cvs_full_transposed}
\resizebox{\textwidth}{!}{
\renewcommand{\arraystretch}{1.5}
\begin{tabular}{lccccc}
\toprule
\textbf{Sample Size}
& \rotatebox[origin=c]{60}{\textbf{ImageNet}}
& \rotatebox[origin=c]{60}{\textbf{LDM}}
& \rotatebox[origin=c]{60}{\textbf{ADM}}
& \rotatebox[origin=c]{60}{\textbf{BigGAN}}
& \rotatebox[origin=c]{60}{\textbf{DiT-XL-2}} \\
\midrule
\textbf{2.5K}
& $17.53 \pm 0.09$
& $16.66 \pm 0.11$
& $5.78 \pm 0.02$
& $5.32 \pm 0.03$
& $16.96 \pm 0.06$ \\
\textbf{5K}
& $26.41 \pm 0.12$
& $24.50 \pm 0.08$
& $9.20 \pm 0.04$
& $8.02 \pm 0.05$
& $25.14 \pm 0.10$ \\
\textbf{7.5K}
& $33.01 \pm 0.10$
& $30.27 \pm 0.19$
& $12.07 \pm 0.03$
& $10.14 \pm 0.06$
& $31.25 \pm 0.17$ \\
\textbf{10K}
& $38.57 \pm 0.08$
& $34.90 \pm 0.19$
& $14.56 \pm 0.04$
& $11.89 \pm 0.05$
& $36.18 \pm 0.22$ \\
\textbf{12.5K}
& $43.15 \pm 0.18$
& $38.75 \pm 0.18$
& $16.80 \pm 0.04$
& $13.40 \pm 0.05$
& $40.24 \pm 0.20$ \\
\textbf{15K}
& $47.23 \pm 0.23$
& $42.14 \pm 0.17$
& $18.84 \pm 0.06$
& $14.75 \pm 0.05$
& $43.86 \pm 0.19$ \\
\textbf{17.5K}
& $50.63 \pm 0.17$
& $45.13 \pm 0.17$
& $20.72 \pm 0.04$
& $15.95 \pm 0.04$
& $47.08 \pm 0.24$ \\
\textbf{20K}
& $54.15 \pm 0.14$
& $47.86 \pm 0.17$
& $22.47 \pm 0.03$
& $17.04 \pm 0.02$
& $49.95 \pm 0.21$ \\
\bottomrule
\end{tabular}
\renewcommand{\arraystretch}{1.0}
}
\end{table*}

\begin{table*}[t]
\centering
\caption{
Conditional diversity scores for text-conditioned MS-COCO generation across sample sizes.
Rows correspond to fixed sample sizes, and columns correspond to the MS-COCO reference data and evaluated text-to-image generative models.
We report the exponential of conditional VNE as mean $\pm$ standard deviation over independent runs.
}
\label{tab:mscoco_cvs_full_transposed}
\resizebox{\textwidth}{!}{
\renewcommand{\arraystretch}{1.5}
\begin{tabular}{lcccccc}
\toprule
\textbf{Sample Size}
& \rotatebox[origin=c]{60}{\textbf{MS-COCO}}
& \rotatebox[origin=c]{60}{\textbf{U-ViT Deep S/2}}
& \rotatebox[origin=c]{60}{\textbf{U-ViT S/2}}
& \rotatebox[origin=c]{60}{\textbf{SDXL}}
& \rotatebox[origin=c]{60}{\textbf{PixArt-$\alpha$}} \\
\midrule
\textbf{2.5K}
& $13.88 \pm 0.07$
& $11.63 \pm 0.06$
& $11.51 \pm 0.04$
& $12.02 \pm 0.02$
& $10.87 \pm 0.04$ \\
\textbf{5K}
& $19.91 \pm 0.04$
& $16.07 \pm 0.06$
& $15.89 \pm 0.09$
& $16.49 \pm 0.03$
& $14.54 \pm 0.05$ \\
\textbf{7.5K}
& $24.62 \pm 0.05$
& $19.33 \pm 0.07$
& $19.09 \pm 0.09$
& $19.75 \pm 0.14$
& $17.15 \pm 0.03$ \\
\textbf{10K}
& $28.51 \pm 0.06$
& $22.03 \pm 0.05$
& $21.73 \pm 0.05$
& $22.51 \pm 0.09$
& $19.30 \pm 0.01$ \\
\textbf{12.5K}
& $31.93 \pm 0.06$
& $24.36 \pm 0.05$
& $24.03 \pm 0.06$
& $24.76 \pm 0.07$
& $21.06 \pm 0.03$ \\
\textbf{15K}
& $35.03 \pm 0.03$
& $26.38 \pm 0.06$
& $26.00 \pm 0.06$
& $26.76 \pm 0.07$
& $22.62 \pm 0.02$ \\
\textbf{17.5K}
& $37.92 \pm 0.04$
& $28.24 \pm 0.05$
& $27.84 \pm 0.05$
& $28.58 \pm 0.04$
& $24.01 \pm 0.00$ \\
\textbf{20K}
& $40.52 \pm 0.03$
& $29.94 \pm 0.03$
& $29.50 \pm 0.04$
& $30.25 \pm 0.04$
& $25.25 \pm 0.01$ \\
\bottomrule
\end{tabular}
\renewcommand{\arraystretch}{1.0}
}
\end{table*}
\end{appendices}

\end{document}